\newcommand{\citet}[1]{\citeauthor*{#1}~\cite{#1}}
\newcommand{\citep}[1]{\parencite{#1}}
\tikzset{
	->, 
	every state/.style={thick, fill=gray!10}, 
	initial text=$ $, 
}
\pgfplotsset{width=10cm,compat=1.9}
\newif\if@restonecol
\newcommand{\Rmnum}[1]{\expandafter\@slowromancap\romannumeral #1@}
\DeclareMathOperator*{\argmin}{arg\,min}
\DeclareMathOperator*{\argmax}{arg\,max}
\newcommand{\LINECOMMENT}[1]{\\ {\color{blue}\ttfamily\small \(\triangleright\) #1}}
\newcommand{\norm}[1]{\left\lVert#1\right\rVert}
\newcommand{\abs}[1]{\left|#1\right|}
\newcommand{\Pro}{\mathbb{P}}
\newcommand{\set}[1]{\left\{ #1 \right\}}
\renewcommand{\bar}{\overline}
\renewcommand{\tilde}{\widetilde}
\newcommand{\E}{\mathbb{E}}
\newcommand{\calP}{\mathcal{P}}
\newcommand{\calR}{\mathcal{R}}
\newcommand{\calD}{\mathcal{D}}
\newcommand{\R}{\mathbb{R}}
\newcommand{\calF}{\mathcal{F}}
\newcommand{\kl}{\textrm{KL}}
\newcommandx{\task}[2][1=]{\todo[linecolor=red,backgroundcolor=red!25,bordercolor=red,#1]{#2}}
\newcommandx{\change}[2][1=]{\todo[linecolor=blue,backgroundcolor=blue!25,bordercolor=blue,#1]{#2}}
\newcommandx{\info}[2][1=]{\todo[linecolor=green,backgroundcolor=green!25,bordercolor=green,#1]{#2}}
\newcommand{\tref}{\textrm{ref}}
\newtheorem{theorem}{Theorem}
\newtheorem{assumption}{Assumption}
\newtheorem{lemma}{Lemma}
\newtheorem{proposition}{Proposition}
\newtheorem*{statement*}{Statement}
\newcommand{\src}{\mathrm{src}}
\newcommand{\dpo}{\mathrm{DPO}}
\newcommand{\tv}{\mathrm{TV}}
\newcommand{\rew}{\mathrm{REW}}
\newcommand{\apx}{\mathrm{apx}}
\title{Distributionally Robust Reinforcement Learning with Human Feedback}
\begin{document}

\author{Debmalya Mandal\\Dept. of Computer Science\\University of Warwick, UK\\ \texttt{Debmalya.Mandal@warwick.ac.uk}
\and Paulius Sasnauskas \\Department of Computing Science\\University of Alberta, Edmonton, Canada\\ \texttt{paulius@sasnauskas.com}
\and Goran Radanovi\'c\\Max-Planck Institute for \\Software Systems, Germany\\ \texttt{gradanovic@mpi-sws.org}
}
\date{~}
\maketitle
\begin{abstract}
Reinforcement learning from human feedback (RLHF) has evolved to be one of the main methods for fine-tuning large language models (LLMs). However, existing RLHF methods are non-robust, and their performance deteriorates if the downstream task differs significantly from the preference dataset used in fine-tuning. In order to mitigate this problem, we introduce a distributionally robust RLHF for fine-tuning LLMs. In particular, our goal is to ensure that a fine-tuned model retains its performance even when the distribution of prompts significantly differs from the distribution encountered during fine-tuning. We formulate distributionally robust optimization (DRO) version of two popular fine-tuning methods -- (1) reward-based RLHF and (2) reward-free DPO (direct preference optimization). We propose a minibatch gradient descent based algorithms for both of them, and theoretically prove convergence guarantees for the algorithms. Subsequently, we evaluate our algorithms on an out-of-distribution (OOD) task by first training the model on the Unified-Feedback dataset and evaluating its performance on two different datasets. The experimental results show that our robust training improves the accuracy of the learned reward models on average, and markedly on some tasks, such as reasoning. Furthermore, we show that the robust versions of policy optimization methods, similarly improve performance on OOD tasks.
\end{abstract}

\doparttoc
\faketableofcontents

\section{Introduction}
Reinforcement learning with Human Feedback (RLHF) has emerged to be one of the important techniques for aligning large language models (LLMs) to human intentions across a diverse set of tasks. Successful applications of RLHF range from  chatbots~\cite{OWJA+22,bai2022training}, and robotics~\cite{yang2024robot} to biomolecule engineering~\cite{wang2024fine}. Existing RLHF methods work by collecting preference dataset on a given task, and updating a base model using preference based reinforcement learning. Moreover, the availability of many public preference datasets~\cite{shin2023benchmarks} has led to the rapid adoption of RLHF across various domains.

However, real-world deployment of fine-tuned policies faces several challenges. Prior work \cite{SOWZ+20, GSH23, RCP+24} has observed the \emph{overoptimization} phenomenon, where policy optimization improves proxy reward but worsens the true reward function. Moreover, models don't generalize beyond the class of preference examples provided during fine-tuning, and this problem becomes severe when the downstream task undergoes \emph{distribution shift}, e.g. see~\cite{tien2022causal}. For example,  in robotics, the environment encountered during inference might be  different than the settings used during training. It is also possible that human preferences drift over time~\cite{son2024right}, causing a natural distribution shift. 

These examples suggest that in order to truly align LLMs to human intentions, we want a model (reward, policy or both) that generalizes to unseen tasks. In other words, the performance of the fine-tuned model should not deteriorate when the task at inference time differs from the datasets used in fine-tuning. Prior work has  mainly considered shifts in preference model~\cite{CKN24}, strong adversarial data corruption model~\cite{MNKS+24}, or proposed heuristics like \emph{model merging}~\cite{jang2023personalized, RVHD+24} that fails with mild distributional shifts. In this work, we tackle this problem through the lens of \emph{distributionally robust optimization} (DRO)~\cite{ben2013robust} where there is a rich literature on both provable and empirically successful robust algorithms. 

Additionally, we also recognize that the existing RLHF pipeline is very efficient because of the development of libraries, sharing of models and datasets on platforms like Hugging Face. In particular, this has led to the rapid development and adoption of RLHF methods across diverse domains. 
Therefore, an ideal robust RLHF method should minimally alter existing algorithms. This leads to the two main questions we study in the paper. 
\begin{quote}
\emph{Can we design provable algorithms for \textbf{distributionally robust RLHF}? Can we achieve \textbf{OOD robustness} with simple modifications to existing pipeline?}
\end{quote}

\subsection{Our Contributions}
We develop distributionally robust algorithms for RLHF by robustifying both the \emph{reward estimation} and the \emph{policy optimization} phase. Our algorithms are based on minibatch gradient descent, easy to implement, and when implemented on base models with 2B parameters (\texttt{Gemma-2B-it}) and 7B parameters (\texttt{Mistral-7B-Instruct}), shows OOD robustness on various unseen datasets. Our specific contributions are the following.
\begin{itemize}
    \item \textbf{Distributionally Robust RLHF}: We first propose a distributionally robust version of RLHF based on minibatch gradient descent~\cite{LCDS20}, and prove a bound on its sample and iteration complexity. In paricular, we show that, under the \emph{linear reward } model, the robust reward estimation phase requires $T = O(1/\varepsilon^2)$ iterations and $n = \Tilde{O}(1/\varepsilon^2)$ samples per iteration.
    \item \textbf{Robust Natural Policy Gradient}: For the robust policy optimization, we propose a new algorithm based on a re-weighted version of natural policy gradient (NPG) method. Under the assumpton of \emph{log-linear policy} class, and additional assumptions, we show that the proposed method has linear convergence rate (i.e. $T =  \log (1/\varepsilon)$), but requires $n = \Tilde{O}(1/\varepsilon^4)$ samples per iteration. 
    \item \textbf{Robust Direct Preference Optimization}: We also propose robust version of \emph{direct preference optimization} (DPO)~\cite{RSMM+24} and analyze its iteration and sample complexity. We show that its iteration ($T = O(1/\varepsilon^2)$) and sample complexity ($n = \Tilde{O}(1/\varepsilon^2)$) are similar to the robust reward estimation step of RLHF.
    \item \textbf{Evaluation of OOD Performance}: We evaluate our robust methods by training base models with 2B and 7B parameters on the Unified-Feedback dataset, and evaluating its performance on out-of-distribution (OOD) datasets e.g. HHH-Alignment, and MT Bench. Across all three methods, we observe that the distributionally robust training provides improved performance on OOD tasks. Furthermore, we observe marked improvement on the reasoning task, and provide fine-grained evaluation on such subsets.
\end{itemize}

\subsection{Related Work}
The closest to our work is the work by \citet{FEZA+24}, who consider robust policy optimization over a class of target reward models for RLHF. However, it is unclear  how to choose such a target class of reward models. Similarly, \citet{YLL+24} consider Bayesian Ensembles to model the uncertainty set of reward functions. Concurrent to our work, \citet{xu2025distributionally} has considered distributionally-robust version of DPO, and proposed tractable algorithms under Wasserstein and KL-uncertainty sets. Compared to their setting, we consider TV-distance based uncertainty sets, and consider distributionally robust version of both DPO and RLHF (i.e. reward estimation and policy optimization).

Prior work has also considered other types of robustness in RLHF.  \citet{HLBL+24} consider robustness in preference model, and assumes a specific type of perturbation ($\calP(y^+ \succ y^-|x) = \sigma((r(x,y^+) - r(x,y^-))/\tau)$). Similarly, \cite{wu2024towards} study distributionally robust version of DPO, but they only consider robustness to  the presence of erroneous data pairs in preference rankings. 
 \citet{CKN24} assumes that the labels are flipped with probability $\varepsilon$ and derives a robust version of direct preference optimization (DPO) to handle this situation. \citet{MNKS+24} assumes that an $\varepsilon$-fraction of the offline preference dataset is corrupted and considers a data corruption robust variant of RLHF. \citet{bukharin2025robust} also consider the design of RLHF under corrupted human feedback, however they model corruption as a perturbation of the Bradley-Terry model that generates the preferences. Finally, \citet{RHC+24} has adopted the notion of group robustness in RLHF.

Model averaging is a popular approach to improve robustness in RL. \citet{RVHD+24} train $M$ different reward models starting from the same initial pre-trained model, and then take average of them. The authors found that the averaged model performs better on out-of-distribution (OOD) data compared to prediction ensembles~\cite{CAKK23}. \citet{YDLZ+24} use a shared representation for the policy and the reward model, and then jointly train policy and reward heads to improve robustness on OOD data. Our work is also related to uncertainty-aware RLHF~\cite{lou2024uncertainty, banerjee2024towards}, however, these approaches attempt to capture uncertainty and variability in human preferences, and are related to the overoptimization problem in reward estimation~\cite{ZTS+24, RCP+24}.

Finally, we adopt distributionally robust optimization (DRO)~\cite{ben2013robust, shapiro2017distributionally}, and in particular, adopt the minibatch gradient descent based approach proposed by \citet{LCDS20}. A host of methods have been proposed for DRO with various types of shifts in distributions~\cite{mohajerin2018data, sinha2017certifying, staib2019distributionally} and we refer the interested reader to the survey~\cite{rahimian2019distributionally}. We also analyze a DRO version of natural policy gradient that might be of interest beyond RLHF.

Beyond distributionally robust optimization, data selection and mixing have been used as effective strategies to improve robustness of LLMs. \citet{APRW23} has used multi-armed bandits algorithms to devise an online data mixing strategy for pre-training LLMs. Recently, \citet{PZZP+24} has used bilevel optimization for reweighting datasets during training large-scale LLMs. \citet{XGZC23} introduces dynamic batch loading which adjusts the composition of the sampled batch based on the training loss, an approach very similar to the (reweighted) minibatch gradient descent considered in this paper. Besides pre-training, \citet{CKDY+25} considers direct preference optimization, and adaptively mixes datasets during training to balance performance across tasks. Although data mixing and selection improves LLM training, they have been primarily used in the pre-training stage. Additionally, they lack the principled approach provided by DRO regarding the distance to the target distribution.

\section{Preliminaries}
RLHF, as proposed by \citet{CLBM+17}, and later adapted by \citet{OWJA+22} first learns a reward model from preference data, and then finds a policy that maximizes KL-regularized value function. In particular, we assume access to a preference dataset $\calD_{\src} = \set{(x,y^+, y^-)}$ where $x$ is a prompt and $y^+,y^-$ are two possible completions of the prompt $x$ generated from $\pi_\src(\cdot \mid x)$ and $\pi_\src$ is a reference policy e.g. a supervised fine-tuned policy. Then a preference feedback $y^+ \succ y^- | x$ is labeled by some expert annotator. We assume Bradley-Terry (BT) model which posits
\begin{align*}
\mathcal{P}\left(y^+ \succ y^- | x\right) &= \sigma(r^\star(x,y^+) - r^\star(x,y^-)) \\&= 1/(1 + \exp(r^\star(x,y^+) - r^\star(x,y^-)) )
\end{align*}
Here $r^\star(x,y)$ is an underlying reward function of the expert annotator which is the reward assigned to a completion $y$ to a given prompt $x$. The first step of RLHF involves solving a maximum likelihood estimation problem to estimate the reward model.
$$
\widehat{r} \leftarrow \argmin_r  \mathop{-\ \E}_{(x,y^+,y^-) \sim \calD_\src} \left[ \log \sigma \left( r(x,y^+) - r(x,y^-) \right) \right]
$$
The second step of RLHF involves a KL-regularized policy optimization step.
$$\widehat{\pi} \leftarrow \argmax_\pi \mathop{\E}_{\stackrel{x \sim \calD_\src}{ y \sim \pi(\cdot | x)}}\left[ \widehat{r}(x,y) - \beta \log \frac{\pi(y|x)}{\pi_{\tref}(y|x)}\right]
$$
Here $\pi_{\tref}$ is a reference policy which can be different than the policy $\pi_\src$ used to generate the preference dataset. We will also write $Y$ to denote the set of all possible completions.

Another popular approach to fine-tuning is the direct preference optimization (DPO)~\cite{RSMM+24} which avoids learning an explicit reward model. DPO solves the following optimization problem.
\begin{align*}
\max_\pi &\mathop{\E}_{(x,y^+,y^-) \sim \calD}\left[ - \log \sigma \left( \beta \log\frac{\pi(y^+|x)}{\pi_\tref(y^+|x)}  - \beta \log \frac{\pi(y^-|x)}{\pi_\tref(y^-|x)}\right)\right]
\end{align*}

\if 0
\subsection{Robustness in RLHF} 

In the first step of robust RLHF, we obtain a reward function that minimizes the worst-case likelihood over a class of source distributions within a distance at most $\rho$.
$$
\min_r \max_{ {\calD: d_\phi(\calD, \calD_\src) \le \rho}  } - \mathop{\E}_{(x,y^+,y^-) \sim \calD } \left[ \log {\calP}(y^+ \succ y^- \mid x) \right]
$$
Here $d_\phi(Q,P)$ is the $\phi$-divergence between two distributions $Q$ and $P$ defined as
$$
d_\phi(Q,P) := \int \phi\left( \frac{dQ}{dP}\right) dP
$$
which satisfies $d_\phi(Q,P) \ge 0$ and $d_\phi(Q,P) = 0$ if $Q=P$ a.s. Some common examples of $\psi$ divergences are -- (1) $\phi(t) = \frac{1}{2} \abs{t-1}$ which corresponds to the Total Variation (TV) distance, and (2) $\phi(t) = \frac{1}{2}(t-1)^2$ which corresponds to the $\chi^2$-distance. In this work, we mainly consider the TV distance.

In the second step of robust RLHF, we now solve  distributionally robust policy optimization problem.
\begin{equation*} 
\max_\pi \min_{\calD: d_{\phi}(\calD, \calD_\src) \le \rho} \mathop{\E}_{\stackrel{x \sim \calD}{ y \sim \pi(\cdot | x)} }\left[ \widehat{r}(x,y) - \beta \log \frac{\pi(y|x)}{\pi_{\tref}(y|x)}\right]
\end{equation*}

\subsection{Robustness in DPO}

We now replace the optimization problem with the following distributional robust version.
\begin{align*}
\max_\pi \min_{\calD: d_\phi(\calD, \calD_\src) \le \rho)} \mathop{\E}_{(x,y^+,y^-) \sim \calD}&\left[ - \log \sigma \left( \beta \log\frac{\pi(y^+|x)}{\pi_\tref(y^+|x)} \right. - \left.  \beta \log \frac{\pi(y^-|x)}{\pi_\tref(y^-|x)}\right)\right]
\end{align*}
\fi 


\section{Robust RLHF}
We propose a distributionally robust version of RLHF by robustifying the two steps -- \emph{reward estimation} and \emph{policy optimization}. In the first step, we learn a robust reward model that considers all distributions that are at a distance of $\rho$ from the training distribution.
\begin{equation}\label{eq:robust-reward-estimation}
\min_r \max_{ {\calD: d_{\phi}(\calD, \calD_\src) \le \rho}  }  \mathop{-\E}_{(x,y^+,y^-) \sim \calD } \left[ \log \sigma (r(x,y^+) - r(x,y^-) ) \right]
\end{equation}
Let $\widehat{r}$ be the robust reward model. Then in the second step, we solve a robust version of the KL-regularized policy optimization step.
\begin{equation}\label{eq:robust-policy-optimization}
\max_\pi \min_{\calD:d_\phi(\calD,\calD_\src)\le \rho } \mathop{\E}_{\stackrel{x \sim \calD}{ y \sim \pi(\cdot | x)} }\left[ \widehat{r}(x,y) - \beta \log \frac{\pi(x,y)}{\pi_{\tref}(x,y)}\right]  
\end{equation}
Here $d_\phi(Q,P)$ is the $\phi$-divergence between two distributions $Q$ and $P$ defined as
$
d_\phi(Q,P) := \int \phi\left( \frac{dQ}{dP}\right) dP
$
which satisfies $d_\phi(Q,P) \ge 0$ and $d_\phi(Q,P) = 0$ if $Q=P$ a.s. Some common examples of $\phi$ divergences are -- (1) $\phi(t) = \frac{1}{2} \abs{t-1}$ which corresponds to the Total Variation (TV) distance, and (2) $\phi(t) = \frac{1}{2}(t-1)^2$ which corresponds to the $\chi^2$-distance. In this work, we mainly consider the TV distance.\footnote{The appendix contains experimental results regarding the $\chi^2$-divergence.}

\begin{algorithm}[!t]
    \caption{Robust Reward Estimation}\label{alg:dist-RLHF-reward}
    \begin{algorithmic}[1]
    \REQUIRE Preference Dataset $\calD = \set{(x_i,y^+_i, y^-_i)}_{i=1}^N$, number of iterations $T$, and stepsize $\eta$.
    \STATE Initialize $r = r_{\omega_1}$ for some $\omega_1 \in \mathcal{W}$.
    \FOR{$t=1,2,\ldots,T$}
    \STATE Sample a minibatch $B_t$ of size $n$ from $\calD$.
    \LINECOMMENT{Compute optimal weights for the minibatch $B_t$.}
    \STATE Set $q_t^\star$ as the solution of the following problem. 
    \begin{align*}
    \max_{q \in \Delta^b: d_\phi(q,1/n) \le \rho} &- \sum_{i \in B_t} q_i \cdot \log \sigma\left( r_{\omega_t}(x_i, y^+_i)  - r_{\omega_t}(x_i,y^-_i) \right)
    \end{align*}
    \STATE Set gradient as $$g_t = \sum_{i \in B_t} q^\star_{t,i} \cdot \nabla_{\omega_t} - \log \sigma\left(r_{\omega_t}(x_i,y^+_i) - r_{\omega_t}(x_i,y^-_i) \right)$$
    \STATE $\omega_{t+1} = \Pi_{\mathcal{W}}(\omega_t - \eta \cdot g_t)$
    \ENDFOR
    \STATE $\widehat{r} = r_{\bar{\omega}}$ where $\bar{\omega} = \frac{1}{T} \sum_{t=1}^T \omega_t$.
    \RETURN $\widehat{r}$.
     \end{algorithmic}
\end{algorithm}


\Cref{alg:dist-RLHF-reward} shows a minibatch gradient descent based algorithm to solve the distributionally robust reward estimation. We sample a minibatch $B_t$ of size $n$ (line 3) and compute the optimal weights $q_t^\star$ corresponding to the worst distribution shift within distance $\rho$. Then the minibatch gradient is reweighted by the weights $q_t^\star$ and a gradient descent step is performed for the reward parameters $\theta_t$. Note that, in line 8, we compute an average of the $T$ reward parameters $\set{\theta_t}_{t=1}^T$. This is the correct approach when the loss function is convex e.g. when the reward model is linear. For general reward models, we keep the best model encountered over the $T$ iterations.

\begin{algorithm}[!ht]
    \caption{Robust Policy Optimization}\label{alg:dist-RLHF-policy}
    \begin{algorithmic}[1]
    \REQUIRE Preference Dataset $\calD = \set{(x_i,y^+_i, y^-_i)}_{i=1}^N$, number of iterations $T$, estimated reward model $\widehat{r}$, and stepsize $\eta$.
    \STATE Initialize $\pi$ so that $\pi(y|x) = 1/\abs{Y}$ for any $x,y$. 
    \FOR{$t=1,2,\ldots,T$}
    \STATE Sample a minibatch $B_t$ of size $n$ from $\calD$.
    \STATE Let $\tilde{B}_t = \set{(x_i,y_i): x_i \in B_t, y_i \sim \pi_{\theta_t}(\cdot | x_i)}$
    \LINECOMMENT{Compute optimal weights for the minibatch $\tilde{B}_t$.}
      \STATE Set $q^\star_t \in \argmin_{q \in \Delta^n: d_{\phi}(q,1/n) \le \rho}  \sum_{i \in \tilde{B}_t} q_i \cdot \left(\widehat{r}(x_i,y_i) - \beta \log \frac{\pi_{\theta_t}(y_i|x_i)}{\pi_\tref(y_i|x_i)} \right)$
    \STATE Set gradient $g_t$ as $\sum_{i \in \tilde{B}_t} q^\star_{t,i} \cdot \nabla_{\theta_t} \mathop{\E}_{y \sim \pi_{\theta_t}(\cdot|x_i)}\left[ \widehat{r}(x_i,y) - \beta \log \frac{\pi_{\theta_t}(y|x_i)}{\pi_\tref(y|x_i)}\right] $
    \LINECOMMENT{Compute weighted Fisher information matrix}
    \STATE Let $G^{\pi_{\theta_t}}$ be the  matrix $$\sum_{i=1}^n\mathop{ q_{t,i}^\star \cdot \E}_{\; y \sim \pi_{\theta_t}(\cdot|x_i)}[\nabla_{\theta_t} \log \pi_{\theta_t}(y|x_i) \nabla_{\theta_t} \log \pi_{\theta_t}(y|x_i) ^\top ]$$
    \STATE $\theta_{t+1} = \Pi_{\Theta}\left(\theta_t + \eta \cdot \left(G^{\pi_{\theta_t}}\right)^{\dagger} g_t\right)$
    \ENDFOR
    \RETURN policy $\pi_T = \pi_{\theta_T}$ .
    \end{algorithmic}
\end{algorithm}

\Cref{alg:dist-RLHF-policy} performs robust policy optimization. This is similar to the minibatch gradient descent steps of the robust reward estimation, except that the loss function is KL-regularized value function. $$\mathop{-\ \E}_{y \sim \pi_{\theta}(\cdot|x_i)}\left[ \widehat{r}(x_i,y) - \beta \log \frac{\pi_{\theta}(y|x_i)}{\pi_\tref(y|x_i)}\right] $$ 
To evaluate the loss function,  we need completions generated from the  policy $\pi_{\theta_t}$. Hence, we update the current minibatch $B_t$ to the minibatch $\tilde{B}_t$ by adding completions $y_i$ generated from $\pi_{\theta_t}(\cdot | x_i)$. Then we compute the optimal weights for the minibatch $\tilde{B}_t$ and perform a \emph{natural policy gradient ascent} step based on the weighted gradient $g_t$ and the weighted Fisher information matrix $G^{\pi_{\theta_t}}$. 

The standard natural policy gradient~\cite{kakade2001natural} first computes the Fisher information matrix\\ $F(\theta) = \mathop{\E}_{x,y\sim \pi_\theta(\cdot|x)}[\nabla_{\theta} \log \pi_{\theta}(y|x) \nabla_{\theta} \log \pi_{\theta}(y|x) ^\top ]$ and then performs the update $\theta_{t+1} = \theta_t + \eta (F(\theta_t))^\dagger \nabla_\theta V(\theta_t)$ where $V(\theta_t)$ is the value function of the policy $\pi_{\theta_t}$. Since our goal is to optimize the robust value function, we first compute the weighted Fisher information matrix $G^{\pi_{\theta_t}}$ using the minibatch $\widetilde{B}_t$ (line 7).
$$
G^{\pi_{\theta_t}} = \sum_{i=1}^n\mathop{ q_{t,i}^\star \cdot \E}_{\; y \sim \pi_{\theta_t}(\cdot|x_i)}[\nabla_{\theta_t} \log \pi_{\theta_t}(y|x_i) \nabla_{\theta_t} \log \pi_{\theta_t}(y|x_i) ^\top ]
$$
Then we use the weighted gradient $g_t$ to perform the update $\theta_{t+1} = \theta_t + \eta (G^{\pi_{\theta_t}})^\dagger g_t$ (line 8). In the end, policy $\pi_{\theta_T}$ is returned.

\subsection{Analysis} 
We assume linear reward function class and log-linear policy class~\cite{NMKT+24}. We will also assume that $\phi$-divergence is the total variation distance  i.e. $d_{\phi}(Q,P) := \frac{1}{2}  \norm{P-Q}_1$. Extending our analysis to other distances e.g. $\chi^2$-distance is straightforward and we discuss this at the end of the paper.

\subsubsection{Robust Reward Estimation}
We first consider the robust reward estimation phase. Let us define the following logistic loss function.
\begin{align*}
    \ell_{\rew}(\omega,\calD) =   \mathop{-\ \E}_{(x,y^+,y^-) \sim \calD } \left[ \log \sigma (r_\omega(x,y^+) - r_\omega (x,y^-) ) \right]
\end{align*}
Additionally, let $\ell_{\rew,\tv}$ be the robust loss function i.e.
\begin{align}\label{eq:robust-loss}
    \ell_{\rew,\tv}(\omega,\calD_\src) = \mathop{\max}_{\calD:d_{\tv}(\calD,\calD_\src) \le \rho} \ell_{\rew}(\omega,\calD)
\end{align}
In order to show convergence to an optimal solution of the robust loss function, we make the assumption of linear reward function class. 
\begin{assumption}[Linear Reward Function Class]\label{asn:linear-reward}
    Let $\phi$ be a $d_R$ dimensional feature mapping with $\max_{x,y} \norm{\phi(x,y)}_2 \le 1$ and let $F > 0$. Then the true reward function belongs to the class $\calF = \left\{r_\omega: r_\omega(x,y) = \phi(x,y)^\top \omega\  \textrm{ where } \omega \in \R^{d_R} \ \textrm{and } \norm{\omega}_2 \le F\right\}$
\end{assumption}

The next theorem provides the convergence guarantees of \cref{alg:dist-RLHF-reward} under suitable choices of the minibatch size $n$, and number of iterations $T$.
\begin{theorem}\label{thm:robust-reward-convergence}
    Suppose \cref{asn:linear-reward} holds,  \Cref{alg:dist-RLHF-reward} is run for $T = O\left( \frac{1}{\varepsilon^2}\right)$ iterations, and minibatch size $n$ satisfies $\frac{n}{\log n} \ge O\left( \frac{ F^2(1+2\rho)^2}{\varepsilon^2}\right)$ . Then the solution $\bar{\omega} = \frac{1}{T}\sum_{t=1}^T \omega_t$ satisfies
    $$
    \E\left[\ell_{\rew,\tv}(\bar{\omega},\calD_\src)\right] - \min_{\omega} \ell_{\rew,\tv}(\omega,\calD_\src) \le O(\varepsilon).
    $$
\end{theorem}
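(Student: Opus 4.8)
The plan is to recognize \cref{alg:dist-RLHF-reward} as projected stochastic subgradient descent on the convex population objective $L(\omega) := \ell_{\rew,\tv}(\omega,\calD_\src)$, and to control two sources of error separately: the optimization error from running $T$ subgradient steps, and the statistical bias incurred because each step uses the worst-case reweighting over a size-$n$ minibatch rather than over the source distribution. Writing $z=(x,y^+,y^-)$ and $\ell(\omega;z) = -\log\sigma(r_\omega(x,y^+)-r_\omega(x,y^-))$, I would first record the regularity facts that follow from \cref{asn:linear-reward}. Since $r_\omega$ is linear in $\omega$ and $-\log\sigma$ is convex, $\ell(\cdot;z)$ is convex, and a supremum over distributions preserves convexity, so $L$ is convex. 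With $\Delta_\omega(z) = (\phi(x,y^+)-\phi(x,y^-))^\top\omega$ we have $\abs{\Delta_\omega(z)}\le 2F$, hence $0\le\ell(\omega;z)\le \log(1+e^{2F}) =: M = O(F)$, and $\norm{\nabla_\omega\ell(\omega;z)}_2 = (1-\sigma(\Delta_\omega(z)))\norm{\phi(x,y^+)-\phi(x,y^-)}_2\le 2 =: G$.

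Next I would identify the update direction. The inner problem in line 5 is exactly $\hat L_t(\omega_t) := \max_{q\in\Delta^n: d_\tv(q,\mathbf 1/n)\le\rho}\sum_{i\in B_t} q_i\,\ell(\omega_t;z_i)$, the robust loss on the minibatch $B_t$. By Danskin's theorem the reweighted gradient $g_t = \sum_{i\in B_t} q^\star_{t,i}\nabla_\omega\ell(\omega_t;z_i)$ of line 6 is a subgradient of the convex function $\hat L_t$ at $\omega_t$, and since $q^\star_t\in\Delta^n$ we have $\norm{g_t}_2\le G=2$. Thus the algorithm is online projected subgradient descent on the sequence of convex minibatch losses $\hat L_1,\dots,\hat L_T$, each evaluated with a fresh, independent minibatch, so that $\omega_t$ is independent of $B_t$. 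Combining these, for the fixed population minimizer $\omega^\star\in\argmin_\omega L$ I would use the decomposition
\begin{equation*}
L(\omega_t) - L(\omega^\star) \le \big(\hat L_t(\omega_t) - \hat L_t(\omega^\star)\big) + 2\sup_{\omega\in\calW}\abs{\hat L_t(\omega) - L(\omega)}.
\end{equation*}
Averaging over $t$, the first term is handled by the standard online convex optimization regret bound for projected subgradient descent: with $\norm{\omega_1-\omega^\star}_2\le 2F$ and $\norm{g_t}_2\le G$, choosing $\eta = \Theta(F/(G\sqrt T))$ gives $\frac1T\sum_t(\hat L_t(\omega_t)-\hat L_t(\omega^\star)) \le O(FG/\sqrt T) = O(F/\sqrt T)$, which is $O(\eps)$ once $T = O(1/\eps^2)$. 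Finally, convexity of $L$ and Jensen's inequality give $L(\bar{\omega})\le\frac1T\sum_t L(\omega_t)$, so the bound transfers to the reported average iterate.

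The crux — and the step I expect to be the main obstacle — is bounding the uniform deviation $\beta_n := \E\big[\sup_{\omega\in\calW}\abs{\hat L_t(\omega)-L(\omega)}\big]$ between the minibatch robust loss and the population robust loss. Because the maximum over the $q$-ball sits inside the expectation, $\hat L_t$ is a \emph{biased} estimator of $L$ and a naive i.i.d.\ argument does not apply. Here I would invoke the dual representation of the total-variation DRO objective, which reduces the inner maximization to a one-dimensional infimum over a threshold $\eta$ of $\E_{\calD_\src}$ of a CVaR-type integrand whose range is $O(M)$ and whose dependence on $\rho$ enters through a factor $(1+2\rho)$. The empirical objective $\hat L_t$ is obtained by replacing $\calD_\src$ with the minibatch empirical measure, so $\beta_n$ is a uniform empirical-process deviation over the pair $(\eta,\omega)$; discretizing the scalar $\eta$ on an $n$-dependent grid and using the fact that $\ell_\omega$ enters only through the scalar $\Delta_\omega$ (so the relevant class is low-complexity and $2$-Lipschitz), a maximal/Bernstein inequality yields $\beta_n = O\big(M(1+2\rho)\sqrt{\log n/n}\big)$; this is precisely the minibatch bias estimate of \citet{LCDS20} specialized to the TV ball. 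Since $M=O(F)$, the condition $\frac{n}{\log n} \ge O(F^2(1+2\rho)^2/\eps^2)$ forces $\beta_n = O(\eps)$, and adding the $O(\eps)$ optimization error completes the bound $\E[L(\bar{\omega})]-\min_\omega L(\omega)\le O(\eps)$.
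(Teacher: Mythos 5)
Your proposal has the same skeleton as the paper's proof, and most of its pieces appear there verbatim: the convexity and boundedness facts under \cref{asn:linear-reward}, the bound $\norm{g_t}_2\le 2$, the identification (via Danskin) of $g_t$ as a subgradient of the minibatch robust loss $\hat L_t$, the $O(F/\sqrt{T})$ optimization term, and Jensen for the averaged iterate. The genuine gap is exactly at the step you call the crux. The statistical ingredient the paper uses (\cref{lem:bias-tv-minibatch}, adapted from \citet{LCDS20}) is a \emph{bias} bound: $0 \le \ell_{\rew,\tv}(\omega;\calD_\src) - \bar{\ell}_{\rew,\tv}(\omega;n) \le O\bigl(F(1+2\rho)\sqrt{\log n/n}\bigr)$, i.e.\ a one-sided inequality between two \emph{expectations}, holding pointwise in $\omega$. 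Your $\beta_n = \E\bigl[\sup_{\omega}\abs{\hat L_t(\omega)-L(\omega)}\bigr]$ puts the absolute value and the supremum over $\omega$ \emph{inside} the expectation; this is a strictly stronger object, it is not what \citet{LCDS20} prove, and for the TV ball it cannot hold at the rate you claim. The reason is that under the constraint $\frac12\sum_i\abs{q_i-1/n}\le\rho$ a single minibatch point may carry weight $1/n+\rho$, so $\hat L_t(\omega)$ tracks the sample \emph{maximum} of the loss. Concretely, within \cref{asn:linear-reward} choose $\phi$ so that $\omega^\top(\phi(x,y^+)-\phi(x,y^-)) = -2F$ with probability $1/n$ and $+2F$ otherwise; then $L(\omega)\approx(1/n+\rho)M$ with $M\approx 2F$, while with probability $(1-1/n)^n\approx e^{-1}$ the minibatch contains no high-loss point and $\hat L_t(\omega)\approx 0$, so $\E\abs{\hat L_t(\omega)-L(\omega)} = \Omega(\rho F)$ for a single fixed $\omega$. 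No chaining or discretization argument can therefore deliver $\beta_n = O(F(1+2\rho)\sqrt{\log n/n})$; an honest fluctuation analysis (e.g.\ bounded differences with per-coordinate constant $\approx \rho F$) only gives constants that grow with $n$.

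The repair is already implicit in your own write-up and is precisely the paper's route. Since $\omega_t$ is independent of $B_t$ (you note this) and $\omega^\star$ is deterministic, you never need uniformity or two-sided concentration: taking expectations in your decomposition gives $\E\bigl[L(\omega_t)-\hat L_t(\omega_t)\bigr] = \E\bigl[L(\omega_t)-\bar{\ell}_{\rew,\tv}(\omega_t;n)\bigr] \le \delta$ and $\E\bigl[\hat L_t(\omega^\star)-L(\omega^\star)\bigr] = \bar{\ell}_{\rew,\tv}(\omega^\star;n)-L(\omega^\star)\le 0$, where the two inequalities are exactly the two one-sided statements of \cref{lem:bias-tv-minibatch}. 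With these substitutions, your regret-plus-deviation argument becomes the paper's proof: the algorithm is unbiased SGD on the surrogate $\bar{\ell}_{\rew,\tv}(\cdot;n)$, which is uniformly $\delta$-close from below to $\ell_{\rew,\tv}(\cdot;\calD_\src)$, and the conclusion follows from the biased-SGD guarantee (\cref{prop:biased-sgd}) together with your parameter choices $T=O(1/\varepsilon^2)$ and $n/\log n = O(F^2(1+2\rho)^2/\varepsilon^2)$. In short: right architecture, correct optimization analysis, but the key statistical estimate you invoke is a mischaracterization of the cited lemma and is unobtainable for TV; the one-sided bias bound combined with the independence you already observed is what closes the argument.
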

The proof of the Theorem is provided in \cref{sec:proof-robust-reward-estimation}. The main idea is to bound the bias of the robust loss (as defined in \cref{eq:robust-loss}) when estimated on a minibatch of size $n$. In particular, let us define
$$ \bar{\ell}_{\rew,\tv}(\theta;n) = \E_{x^n}\bigg[ \mathop{\sup}_{\stackrel{q\in \Delta^n:}{ d_\tv(q,1/n) \le \rho} } \sum_{i=1}^n q_i \ell_\rew(\theta; x_i)\bigg].$$ Then using a dual characterization developed by \cite{LCDS20}, we show that the bias term, i.e. the difference between $\ell_{\rew,\tv}(\theta;\calD_\src)$ and $\bar{\ell}_{\rew,\tv}(\theta;n)$ can be bounded by $O\left(\sqrt{{\log n}/{n}}\right)$. \footnote{\citet{LCDS20} developed similar bounds for the $\chi^2$-divergence.} The proof then follows by observing that \cref{alg:dist-RLHF-reward} performs stochastic gradient descent over the loss function $\bar{\ell}_{\rew,\tv}(\theta;n)$ which is convex under the assumption of linear reward function class.

\subsubsection{Robust Policy Optimization}
We now focus on the performance of robust policy optimization. We will assume that the class of policies is log-linear.

\begin{assumption}[Log-Linear Policy Class]\label{asn:log-linear-policy}
    Let $\psi$ be a $d_P$ dimensional feature mapping with $\max_{x,y} \norm{\psi(x,y)}_2 \le 1$ and let $B > 0$. We consider the following class of policies:
    \begin{align*}\textstyle 
    \Pi &= \left\{\pi_\theta: \pi_\theta(y|x) = \frac{\exp(\theta^\top \psi(x,y))}{ \sum_{y'} \exp(\theta^\top \psi(x,y'))} 
    \textrm{ where }  \theta \in \R^{d_P} \ \textrm{and } \norm{\theta}_2 \le B\right\}
    \end{align*}
\end{assumption}

Let us define the following notion of KL-regularized value function.
\begin{align*}
    v(\theta;\calD) = \mathop{\E}_{x\sim \calD, y \sim \pi_\theta(\cdot|x)}\left[ r(x,y) - \beta \cdot \log \frac{\pi_\theta(y|x)}{\pi_\tref(y|x)}\right]
\end{align*}
Additionally, let $v_\tv(\theta;\calD)$ be the robust value function.
\begin{align}
    v_\tv(\theta;\calD_\src) = \inf_{\calD: d_\tv(\calD, \calD_\src) \le \rho} v(\theta;\calD)
\end{align}
In order to the suboptimality of the policy $\pi_{\theta_T}$, we 
make the following assumptions.
\begin{assumption}[Concentrability]\label{asn:concentrability}
    For each $t$, there exists a constant $C_t \le C$ such that
    $$ 
    \sum_{i=1}^n q^\star_{t,i}\cdot \mathop{\E}_{y \sim \pi_{\theta_t}(\cdot|x_i)}\left[ \left( \frac{\pi_{\theta^\star}(y|x_i)}{\pi_{\theta_t}(y|x_i)}\right)^2\right] \le C_t
    $$
\end{assumption}
\begin{assumption}\label{asn:smalles-singular-value} Let $F(\theta|x) = \mathop{\E}_{y \sim \pi_{\theta}(\cdot|x)}\left[\nabla_\theta \log \pi_{\theta}(y|x) \nabla_\theta \log \pi_{\theta}(y|x)^\top\right] $ be the Fisher information matrix corresponding to the policy $\pi_{\theta}$ and sample $x$. Then
    there exists $\sigma > 0$ such that for any $t\ge 1$, 
    $
    \sigma_{\min}\left( \sum_{i=1}^n q_{t,i}^\star \cdot F(\theta_t|x_i)   \right) \ge \sigma
    $
    with probability at least $1-\delta$.
\end{assumption}
\begin{assumption}\label{asn:bdd_rewards}
    The rewards are bounded i.e. $\forall x,y $ $r(x,y) \in [r_{\min},r_{\max}]$.
\end{assumption}

\Cref{asn:concentrability} states that the distribution $\pi_{\theta_t}$ overlaps with the optimal policy $\pi_{\theta^\star}$. This assumption is standard in the analysis of natural policy gradient (NPG) ascent~\cite{CHS24, alfano2023novel} and is often necessary for linear convergence. \Cref{asn:smalles-singular-value} states that the weighted Fisher information matrix is positive-definite. \citet{CHS24} has shown that this assumption is satisfied when performing NPG ascent on entropy-regularized value function. We believe  this assumption should be satisfied for the robust value function considered in this paper. Next we state the convergence guarantees of \Cref{alg:dist-RLHF-policy}.
\begin{theorem}
    \label{thm:convergence_robust_npg}
     Suppose assumptions \ref{asn:concentrability}, \ref{asn:smalles-singular-value}, \ref{asn:bdd_rewards} hold, $q^\star_{t,i} \ge q_{\min} \ \forall t$, and $\calD_\src(x) \ge \nu\ \forall x$. If \Cref{alg:dist-RLHF-policy} is run for $T \ge O\left(\log \left(\frac{ \rho^2/\nu(\beta B^2 + r_{\max})}{\varepsilon} \log\abs{Y}\right) \right)$ iterations, and the number of minibatch samples $n$ satisfies
$$n \ge O\left( \frac{\sqrt{\apx} \cdot \rho^4/\nu^2 (\beta B^2 + r_{\max})^6}{\eta \beta^2 q^2_{\min} \sigma^2} \frac{\log(\log \log (\rho/\nu(\beta B^2 + r_{\max})\abs{Y}/\varepsilon)/\delta)}{\varepsilon^4}\right).$$ Then the policy $\pi_T = \pi_{\theta_T}$ satisfies,
    $$
    \sup_\theta v_\tv(\theta;\calD_\src) - v_\tv(\theta_T;\calD_\src) \le O(\varepsilon)
    $$
with probability at least $1-\delta$,    provided $\eta < \min\set{\frac{1}{2\beta q_{\min}}, \frac{ q_{\min} r_{\min} \sigma^2}{ (6\beta B^2 + r_{\max})^2}}$ and $\beta < \frac{r_{\min}}{8B^2}$.
\end{theorem}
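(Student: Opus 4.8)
The plan is to read Algorithm~\ref{alg:dist-RLHF-policy} as an \emph{inexact natural policy gradient} (equivalently, policy mirror ascent) scheme on the robust KL-regularized value $v_\tv(\cdot;\calD_\src)$, exploiting that RLHF is here a one-step contextual bandit (prompt $x$, action $y$, reward $\widehat{r}(x,y)$, regularizer $\beta\,\kl(\pi_\theta\|\pi_\tref)$). First I would characterize the inner infimum: for a fixed $\theta$, the TV dual used in the proof of \cref{thm:robust-reward-convergence} (following \citet{LCDS20}) shows that the worst-case context distribution in the TV-$\rho$ ball reweights contexts, and that the minibatch weights $q^\star_t$ of line~6 are an empirical surrogate for this reweighting. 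I would then bound the bias between the empirical reweighted objective on $\tilde B_t$ and the population value $v_\tv(\theta_t;\calD_\src)$; using bounded rewards (\cref{asn:bdd_rewards}), the lower bound $\calD_\src(x)\ge\nu$, and Lipschitzness in $\theta$, this bias is $\widetilde O(1/\sqrt n)$, while the Monte Carlo estimation of $g_t$ and $G^{\pi_{\theta_t}}$ on the minibatch contributes additional sampling error.

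The next step is a one-step improvement (descent) lemma. The direction $(G^{\pi_{\theta_t}})^{\dagger} g_t$ solves a weighted compatible-function-approximation least-squares problem; by \cref{asn:smalles-singular-value} the weighted Fisher matrix has smallest singular value $\ge\sigma$, so the pseudo-inverse is well-conditioned and the realized update is an $\eta$-sized mirror-ascent step in policy space up to a controlled function-approximation error $\apx$. Invoking Danskin's theorem, the gradient $g_t$ evaluated at the current worst-case weights $q^\star_t$ is a valid (super)gradient of $v_\tv$, so a three-point/Bregman argument yields, for the population robust value,
\[
v_\tv(\theta^\star;\calD_\src)-v_\tv(\theta_{t+1};\calD_\src)\le(1-c)\bigl(v_\tv(\theta^\star;\calD_\src)-v_\tv(\theta_t;\calD_\src)\bigr)+\mathrm{err}_t,
\]
with contraction factor $1-c$, $c=\Omega(\eta\beta q_{\min})$, coming from the KL regularization (the per-context regularized objective is relatively strongly concave, with Gibbs optimum $\pi^\star(y|x)\propto\pi_\tref(y|x)\exp(\widehat{r}(x,y)/\beta)$), and $\mathrm{err}_t$ collecting the finite-sample errors in the gradient, the Fisher matrix, and the weights. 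The conditions $\eta<\min\{\tfrac{1}{2\beta q_{\min}},\ldots\}$ and $\beta<r_{\min}/(8B^2)$ are exactly what make the smoothness inequality and the Gibbs-optimality characterization usable here.

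I would then unroll the recursion. The geometric factor gives that after $T=O\bigl(\log(\Delta_0/\varepsilon)\bigr)$ iterations the noiseless part falls below $\varepsilon$, where $\Delta_0=O\bigl(\rho^2/\nu\,(\beta B^2+r_{\max})\log\abs{Y}\bigr)$ bounds the initial suboptimality (the $\rho^2/\nu$ factor arising from the TV dual together with the $\nu$-lower bound). The accumulated error $\sum_t(1-c)^{T-t}\mathrm{err}_t$ is controlled by the per-iteration sample size: matrix-Bernstein concentration for $G^{\pi_{\theta_t}}$ and vector concentration for $g_t$, amplified by the $1/\sigma$ factor from the pseudo-inverse and by $1/q_{\min}$, $1/\beta$ from the reweighting and regularization, give $\mathrm{err}_t=\widetilde O(1/\sqrt n)$ per coordinate; pushing its square through the inversion forces $n=\widetilde O(1/\varepsilon^4)$, with the $\sqrt{\apx}$, $1/(\eta\beta^2 q_{\min}^2\sigma^2)$, $\rho^4/\nu^2$, and $(\beta B^2+r_{\max})^6$ factors tracking the approximation error, conditioning, dual radius, and value/smoothness scales. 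A union bound over the $T$ iterations produces the $\log(\cdot/\delta)$ factor and the overall $1-\delta$ guarantee, matching the stated bounds on $T$ and $n$.

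The main obstacle is the one-step contraction in the presence of the inner infimum. Unlike standard regularized NPG, the worst-case weights $q^\star_t$ depend on $\theta_t$, so $v_\tv$ is merely an infimum of smooth functions and is generally nonsmooth; I must justify via Danskin that the reweighted gradient is a valid ascent direction and, crucially, that the contraction constant does not degrade when the active worst-case distribution shifts between iterates. Tightly propagating the finite-sample perturbation through the pseudo-inverse $(G^{\pi_{\theta_t}})^{\dagger}$, where errors are amplified by $1/\sigma$, while keeping the reweighting by $q^\star_t$ (the $1/q_{\min}$ dependence) and the distribution mismatch through concentrability (\cref{asn:concentrability}) under control, is the crux of the argument and is precisely what drives the $1/(\eta\beta^2 q_{\min}^2\sigma^2)$ and $1/\varepsilon^4$ factors in the sample complexity.
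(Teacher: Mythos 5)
There is a genuine gap, and it sits exactly where you flagged your ``main obstacle.'' Your plan hinges on a one-step contraction of the \emph{value gap} itself, $v_\tv(\theta^\star;\calD_\src)-v_\tv(\theta_{t+1};\calD_\src)\le(1-c)\bigl(v_\tv(\theta^\star;\calD_\src)-v_\tv(\theta_t;\calD_\src)\bigr)+\mathrm{err}_t$, justified by Danskin plus relative strong concavity. You never establish this, and it is not how the paper proceeds: the paper's recursion (\cref{lem:phi-recursion}, \cref{lem:bound-potential}) is run on the KL potential $\Phi(\pi)=\sum_x \calD_\src(x)\sum_y \pi_{\theta^\star}(y|x)\log\frac{\pi_{\theta^\star}(y|x)}{\pi(y|x)}$, not on the value gap, and the nonsmoothness/shifting-worst-case issue you worry about never arises there because the argument never differentiates $v_\tv$. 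Instead, the term $\Delta_t=v_\phi(\theta^\star;x^n)-v_\phi(\theta_t;x^n)$ is bounded by the elementary observation that $q^\star_t$ is optimal for the problem defining $v_\phi(\theta_t;x^n)$, giving $\Delta_t\le\sum_i q^\star_{t,i}\bigl(v(\theta^\star;x_i)-v(\theta_t;x_i)\bigr)$, followed by an advantage-function decomposition. Converting the potential into a value gap happens only once, at the very end, via \cref{lem:suboptimality-gap}, and that conversion is lossy: $\Phi(\pi_\theta)\le\varepsilon_1$ only yields a suboptimality of order $(r_{\max}+3\beta B^2)\sqrt{(1+4\rho^2/\nu)\,\varepsilon_1}$, via Cauchy--Schwarz against the $\chi^2$-divergence (using $d_{\chi^2}(\calD_0,\calD_\src)\le 4\rho^2/\nu$, which needs $\calD_\src(x)\ge\nu$) and Pinsker's inequality.

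This structural difference also explains why your error accounting does not cohere. If your direct value recursion held with $\mathrm{err}_t=\widetilde O(1/\sqrt n)$ and contraction $c=\Omega(\eta\beta q_{\min})$, unrolling would give accumulated error $\widetilde O\bigl(1/(c\sqrt n)\bigr)$ and hence $n=\widetilde O(1/\varepsilon^2)$ per iteration --- you would be proving a strictly stronger theorem than \cref{thm:convergence_robust_npg}, which should itself be a red flag. The phrase ``pushing its square through the inversion forces $n=\widetilde O(1/\varepsilon^4)$'' does not correspond to any concrete mechanism; in the paper the $1/\varepsilon^4$ arises purely because one must drive the \emph{potential} down to $\varepsilon_1=\Theta\bigl(\varepsilon^2/((1+4\rho^2/\nu)(r_{\max}+3\beta B^2)^2)\bigr)$ before invoking \cref{lem:suboptimality-gap}, and the potential recursion costs $n=\widetilde O(1/\varepsilon_1^2)$. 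For the same reason, your attribution of the $\rho^2/\nu$ (and $\rho^4/\nu^2$) factors to a bound on the initial suboptimality $\Delta_0$ is wrong: in the paper the initial condition is simply $\Phi(\pi_0)\le\log\abs{Y}$, and those factors enter through the $\chi^2$/Pinsker conversion step. To repair your proposal you would either need to actually prove the direct value-gap contraction for the robust objective (resolving the shifting worst-case weights, which the KL-potential route deliberately avoids), or adopt the paper's two-stage structure: recursion on $\Phi$, then a one-shot potential-to-gap conversion.
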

Observe that robust policy optimization has linear convergence rate i.e. the number of iterations required to converge upto an error $\varepsilon$ is $O(\log(1/\varepsilon))$. However, the size of the minibatch $n$ is $O(\log \log \log (1/\varepsilon) / \varepsilon^4)$. Therefore, the total number of required samples is $nT = \tilde{O}(1/\varepsilon^4)$. This requirement is similar to the result of \cref{thm:robust-reward-convergence} which showed that the robust reward estimation requires $O(1/\varepsilon^2)$ iterations and $\tilde{O}(1/\varepsilon^2)$ minibatch samples. However, the main difference with \cref{thm:robust-reward-convergence} is that we wish to show last-iterate convergence of the robust policy optimization as opposed to the average iterate convergence.

The full proof of \Cref{thm:convergence_robust_npg} is provided in \cref{sec:robust_policy_optimization}. The main idea is to define the following potential function, similar to \cite{CHS24}.
$$
\Phi(\pi) = \sum_x \calD_\src(x) \sum_{y}\pi_{\theta^\star}(y|x) \log \frac{\pi_{\theta^\star}(y|x)}{\pi(y|x)}
$$
where $\theta^\star$ is the optimal solution to the robust KL-regularized value function in the log-linear policy class. We first establish the following recurrence relation.
$$
\Phi(\pi_{t+1}) \le (1-\eta \beta q_{\min}) \Phi(\pi_t) - \frac{\eta}{n} \Delta_t + \frac{\eta \sqrt{\apx}}{n \cdot q_{\min}} + \tilde{O}\left( \sqrt{\frac{\eta}{n}}\right)
$$
where $\apx$ is the function approximation error (defined in \cref{sec:robust_policy_optimization}), and $\Delta_t$ is the suboptimality gap of the policy $\pi_{t} = \pi_{\theta_t}$. The recurrence is similar to prior work, except that we have an additive error of $\tilde{O}(\sqrt{\eta/n})$ because the gradients are computed on a minibatch of size $n$. Equipped with the above recurrence we can show that $\Phi(\pi_T) \le O(\varepsilon)$ if $T \ge O(\log(1/\varepsilon))$ and $n \ge \tilde{O}(1/\varepsilon^2)$. Furthermore, \cref{lem:suboptimality-gap} shows that $\Phi(\pi_\theta) \le \varepsilon$ implies that the suboptimality gap of $\pi_\theta$ is $O(\sqrt{\varepsilon})$. Now, after adjusting the value of $\varepsilon$ we can complete the proof of \cref{thm:convergence_robust_npg}.

%
%
\newcommand{\btbase}{BTL RM}
\newcommand{\dr}{\textsc{DR}\xspace}
\newcommand{\drrm}{\dr RM\xspace}
\newcommand{\drdpo}{\dr DPO\xspace}
\newcommand{\drppo}{\dr PPO\xspace}

\newlength{\origtabcolsep}
\setlength{\origtabcolsep}{\tabcolsep}

\newcommand{\PreserveBackslash}[1]{\let\temp=\\#1\let\\=\temp}
\newcolumntype{C}[1]{>{\PreserveBackslash\centering}p{#1}}
\newcolumntype{Q}{>{\PreserveBackslash\centering}p{2.5mm}}

\NewDocumentCommand{\tikzcellcolor}{O{8mm} O{1mm} m m}{%
    \begin{tikzpicture}[remember picture, baseline,every node/.style={inner sep=#2,outer sep=0}]
    \node[fill=#3, text width=#1, align=center, yshift=0.9mm] (n) {#4};
    \end{tikzpicture}
}

\NewDocumentCommand{\colorizeValue}{O{8mm} O{1mm} m m m}{%
    \IfSubStr{#5}{N/A}{#5}{%
        \pgfmathsetmacro{\minvalue}{#3}%
        \pgfmathsetmacro{\maxvalue}{#4}%
        \pgfmathsetmacro{\value}{#5}%
        \ifdim\value pt<\minvalue pt%
            \!\!\tikzcellcolor[#1][#2]{red!10} \value%
        \else\ifdim\value pt>\maxvalue pt%
            \!\!\tikzcellcolor[#1][#2]{orange!60} \value%
        \else
            \pgfmathsetmacro{\colorvalue}{(\value - \minvalue) / (\maxvalue - \minvalue) * (40 - 5) + 5}%
            \pgfmathsetmacro{\roundedcolorvalue}{int(\colorvalue)}%
            \!\!\tikzcellcolor[#1][#2]{orange!\roundedcolorvalue}{\value}%
        \fi\fi%
    }%
}

\NewDocumentCommand{\colorizeValuePm}{O{8mm} O{1mm} m m m}{%
    \IfSubStr{#5}{N/A}{#5}{%
        \pgfmathsetmacro{\minvalue}{#3}%
        \pgfmathsetmacro{\maxvalue}{#4}%
        \def\textvalue{#5}%
        \setsepchar[.]{ }%
        \readlist\seperatedval{#5}%
        \def\value{\seperatedval[1]}%
        \pgfmathsetmacro{\value}{\seperatedval[1]}%
        \ifdim\value pt<\minvalue pt%
            \!\!\tikzcellcolor[#1][#2]{red!10} \textvalue%
        \else\ifdim\value pt>\maxvalue pt%
            \!\!\tikzcellcolor[#1][#2]{orange!60} \textvalue%
        \else
            \pgfmathsetmacro{\colorvalue}{(\value - \minvalue) / (\maxvalue - \minvalue) * (40 - 5) + 5}%
            \pgfmathsetmacro{\roundedcolorvalue}{int(\colorvalue)}%
            \!\!\tikzcellcolor[#1][#2]{orange!\roundedcolorvalue}{\textvalue}%
        \fi\fi%
    }%
}

\NewDocumentCommand{\colorizeValuePmP}{O{8mm} O{1mm} m m m}{%
    \IfSubStr{#5}{N/A}{#5}{%
        \pgfmathsetmacro{\minvalue}{#3}%
        \pgfmathsetmacro{\maxvalue}{#4}%
        \def\textvalue{#5}%
        \setsepchar[.]{ }%
        \readlist\seperatedval{#5}%
        \def\value{\seperatedval[1]}%
        \pgfmathsetmacro{\value}{\seperatedval[1]}%
        \ifdim\value pt<\minvalue pt%
            \!\!\tikzcellcolor[#1][#2]{orange!5} \textvalue%
        \else\ifdim\value pt>\maxvalue pt%
            \!\!\tikzcellcolor[#1][#2]{orange!60} \textvalue%
        \else
            \pgfmathsetmacro{\colorvalue}{(\value - \minvalue) / (\maxvalue - \minvalue) * (40 - 5) + 5}%
            \pgfmathsetmacro{\roundedcolorvalue}{int(\colorvalue)}%
            \!\!\tikzcellcolor[#1][#2]{orange!\roundedcolorvalue}{\textvalue}%
        \fi\fi%
    }%
}

%
%

\newcommand{\colorrb}[5]{%
  \colorizeValuePm[16mm][0.8mm]{60.6}{96.6}{#1} &
  \colorizeValuePm[16mm][0.8mm]{32.3}{44.3}{#2} &
  \colorizeValuePm[16mm][0.8mm]{48.2}{80.8}{#3} &
  \colorizeValuePm[16mm][0.8mm]{44.7}{68.9}{#4} &
  \colorizeValuePm[16mm][0.8mm]{46.4}{72.7}{#5}%
}

\newcommand{\colorrbM}[5]{%
  \colorizeValue[5.75mm][0.8mm]{83}{97.5}{#1} &
  \colorizeValue[5.75mm][0.8mm]{40}{60}{#2} &
  \colorizeValue[5.75mm][0.8mm]{76}{86.8}{#3} &
  \colorizeValue[5.75mm][0.8mm]{49.3}{71.5}{#4} &
  \colorizeValue[5.75mm][0.8mm]{63.2}{78.8}{#5}%
}

\begin{table*}[!h]
\centering
\caption{
    Evaluation of the reward models on the RewardBench \citep{lambert2024rewardbench} benchmark.
    Trained on a 400K item subset of the Unified-Feedback dataset \citep{jiang2024unifiedfeedback} for 2 epochs, with the TV distance. Standard errors are derived from five runs.
    Evaluation using the \texttt{mistralai/Mistral-7B-Instruct-v0.2} model can be seen in Appendix, \Cref{tab:rb-uf-summary-mistral}.
    Detailed table with subsets can be seen in Appendix, \Cref{tab:rb-uf-subsets}.
}\label{tab:rb-uf-summary}
\vskip 0.15in
\begin{small}
\begin{tabular}{lC{16mm}C{16mm}C{16mm}C{16mm}C{16mm}}
    \toprule
    Reward Model & Chat & Chat-Hard & Safety & Reasoning & Avg. \\
    \midrule
     \multicolumn{6}{c}{\small Base model: \texttt{google/gemma-2b-it}} \\ \midrule
    \btbase              & \colorrb{96.5 $\pm$ 0.2}{45.0 $\pm$ 0.4}{81.0 $\pm$ 0.2}{66.8 $\pm$ 1.8}{72.3 $\pm$ 0.4} \\
    \dr RM $\rho = 0.1$  & \colorrb{96.6 $\pm$ 0.1}{44.3 $\pm$ 0.4}{80.8 $\pm$ 0.2}{68.9 $\pm$ 0.3}{72.7 $\pm$ 0.1} \\
    \dr RM $\rho = 0.2$  & \colorrb{97.1 $\pm$ 0.3}{42.5 $\pm$ 0.6}{80.9 $\pm$ 0.3}{71.4 $\pm$ 1.6}{72.9 $\pm$ 0.5} \\
    \dr RM $\rho = 0.4$  & \colorrb{87.2 $\pm$ 2.6}{33.5 $\pm$ 3.7}{71.2 $\pm$ 2.8}{52.1 $\pm$ 7.3}{61.0 $\pm$ 4.0} \\
    \dr RM $\rho = 0.6$  & \colorrb{82.3 $\pm$ 4.1}{33.6 $\pm$ 4.2}{58.2 $\pm$ 5.6}{50.5 $\pm$ 5.6}{56.2 $\pm$ 4.6} \\
    \dr RM $\rho = 0.8$  & \colorrb{75.4 $\pm$ 6.9}{37.3 $\pm$ 4.0}{56.5 $\pm$ 5.7}{47.4 $\pm$ 7.7}{54.1 $\pm$ 5.3} \\
    \dr RM $\rho = 1.0$  & \colorrb{60.6 $\pm$ 12.2}{32.3 $\pm$ 7.5}{48.2 $\pm$ 9.2}{44.7 $\pm$ 10.9}{46.4 $\pm$ 8.7} \\
    \bottomrule
\end{tabular}
\end{small}
\end{table*}

\section{Robust Direct Preference Optimization}
We also consider a distributionally robust version of the \emph{direct preference optimization} (DPO) method, where the learner needs to be robust to shifts in prompt distributions.
\begin{align}
\min_\pi  \max_{ \calD: d_{TV}(\calD, \calD_\src) \le \rho}  &\mathop{-\ \E}_{(x,y^+,y^-) \sim \calD} \left[ \log \sigma \left( \beta \log \frac{\pi(y^+|x)}{\pi_\tref(y^+|x)} - \beta \log \frac{\pi(y^-|x)}{\pi_\tref(y^-|x)} \label{eq:dist-robust-dpo}
\right)\right]
\end{align}
%
\begin{algorithm}[!t]
    \caption{Distributionally Robust DPO}\label{alg:dist-DPO}
    \begin{algorithmic}[1]
    \REQUIRE Preference Dataset $\calD = \set{(x_i,y^+_i, y^-_i)}_{i=1}^N$,  and number of iterations $T$.
    \STATE Initialize $\pi = \pi_{\theta_1}$ for some $\theta_1 \in \Theta$, and $\eta = 2/\sqrt{T}$.
    \FOR{$t=1,2,\ldots,T$}
    \STATE Sample a minibatch of size $B_t$ of size $n$ from $\calD$.
    \LINECOMMENT{Compute optimal weights for the minibatch $B_t$.}
    \STATE Set $q^\star_t \in \mathop{\argmax}_{q \in \Delta^n: D_{TV}(q,1/n) \le \rho}  \sum_{i \in B_t} q_i \cdot -\log \sigma\left( \beta \log \frac{\pi_{\theta_t}(y^+|x)}{\pi_{\tref}(y^+|x)} - \beta \log \frac{\pi_{\theta_t}(y^-|x)}{\pi_{\tref}(y^-|x)}\right)$
    \STATE Set gradient $g_t = \sum_{i \in B_t} q^\star_i \cdot \nabla_{\theta_t}  -\log \sigma\left( \beta \log \frac{\pi_{\theta_t}(y^+|x)}{\pi_{\tref}(y^+|x)} - \beta \log \frac{\pi_{\theta_t}(y^-|x)}{\pi_{\tref}(y^-|x)} \right)$
    \STATE $\theta_{t+1} = \Pi_{\Theta}(\theta_t - \eta \cdot g_t)$
    \ENDFOR
    \RETURN policy $\pi_{\bar{\theta}}$ where $\bar{\theta} = \frac{1}{T} \sum_{t=1}^T \theta_t$.
    \end{algorithmic}
\end{algorithm}

\Cref{alg:dist-DPO} proposes a minibatch stochastic gradient descent based algorithm for solving the distributionally robust DPO problem. In order to  analyze the convergence of \cref{alg:dist-DPO}, we define the following DPO loss function.
\begin{align}
    \ell_{\dpo}(\theta,\calD) = &\mathop{-\ \E}_{(x,y^+,y^-) \sim \calD}  \left[ \log \sigma \left( \beta \log \frac{\pi_\theta(y^+|x)}{\pi_\tref(y^+|x)}  - \beta \log \frac{\pi_\theta(y^-|x)}{\pi_\tref(y^-|x)} \right)\right]
\end{align}
Additionally, let us write $\ell_{\dpo,\tv}(\theta,\calD)$ to define the robust loss function, i.e.
\begin{align}
    \ell_{\dpo,\tv}(\theta,\calD_\src) = \max_{ \calD: d_{TV}(\calD, \calD_\src) \le \rho} \ell_{\dpo}(\theta,\calD).
\end{align}
The next theorem provides the convergences guarantees of \cref{alg:dist-DPO} under log-linear policy class.
\begin{theorem}\label{thm:robust-dpo-convergence}
    Suppose \cref{asn:log-linear-policy} holds,  \cref{alg:dist-DPO} is run for $T = O\left( \frac{1}{\varepsilon^2}\right)$ iterations, and we set minibatch size $n$ such that $\frac{n}{\log n} \ge O\left( \frac{\beta^2(2B+J)^2(1+2\rho)^2}{\varepsilon^2}\right)$ where $J = \max_{x,y^+,y^-} \abs{\log \frac{\pi_\tref(y^+|x)}{\pi_\tref(y^-|x)}}$. Then we have 
    $
    \E\left[\ell_{\dpo,\tv}(\bar{\theta},\calD_\src)\right] - \min_{\theta} \ell_{\dpo,\tv}(\theta,\calD_\src) \le O(\varepsilon).
    $
\end{theorem}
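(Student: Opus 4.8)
The plan is to follow the template of \cref{thm:robust-reward-convergence}: show that the distributionally robust DPO objective admits a convex minibatch surrogate, that \cref{alg:dist-DPO} is exactly projected stochastic gradient descent on that surrogate, and that the gap between the surrogate and the true robust loss is a small bias term. The first step is to exploit \cref{asn:log-linear-policy} to rewrite the per-example DPO loss as a logistic loss that is \emph{affine} in $\theta$. Writing $\log\pi_\theta(y|x) = \theta^\top\psi(x,y) - \log\sum_{y'}\exp(\theta^\top\psi(x,y'))$, the log-partition functions cancel in the difference $\log\pi_\theta(y^+|x)-\log\pi_\theta(y^-|x)$, so
\begin{align*}
\ell_\dpo(\theta;x,y^+,y^-) = -\log\sigma\Bigl(\beta\,\theta^\top\bigl(\psi(x,y^+)-\psi(x,y^-)\bigr) - \beta\log\tfrac{\pi_\tref(y^+|x)}{\pi_\tref(y^-|x)}\Bigr).
\end{align*}
Because $-\log\sigma$ is convex and its argument is affine in $\theta$, each per-example loss is convex in $\theta$, and hence $\ell_{\dpo,\tv}(\cdot,\calD_\src)$ is convex as a supremum of convex functions. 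Two quantitative estimates follow from $\max_{x,y}\|\psi(x,y)\|_2\le1$ and $\|\theta\|_2\le B$: the logit magnitude is at most $\beta(2B+J)$, so the per-example loss takes values in an interval of width $O\!\bigl(\beta(2B+J)\bigr)$; and its gradient, $-\sigma(-z)\beta(\psi(x,y^+)-\psi(x,y^-))$, has norm at most $2\beta$, so the loss is $O(\beta)$-Lipschitz in $\theta$. These are the DPO analogues of the role played by $F$ in the reward-estimation analysis, and they explain the factor $\beta^2(2B+J)^2$ in the stated minibatch size.

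Next I would introduce the minibatch robust surrogate
$$\bar{\ell}_{\dpo,\tv}(\theta;n) = \E_{B}\Bigl[ \sup_{q\in\Delta^n:\, d_\tv(q,1/n)\le\rho} \sum_{i=1}^n q_i\,\ell_\dpo(\theta;x_i,y_i^+,y_i^-)\Bigr],$$
and bound the bias $\bigl|\bar{\ell}_{\dpo,\tv}(\theta;n) - \ell_{\dpo,\tv}(\theta;\calD_\src)\bigr| = O\!\bigl(\sqrt{\log n / n}\bigr)$ using the dual characterization of the TV-constrained inner maximization developed by \citet{LCDS20}; this is where the boundedness width $O(\beta(2B+J))$ and the radius factor $(1+2\rho)$ enter the constants. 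I would then observe that lines 5--7 of \cref{alg:dist-DPO} compute precisely a (sub)gradient of the inner-maximized minibatch objective: by Danskin's envelope theorem the gradient at the maximizing weights $q_t^\star$ is a subgradient of the minibatch robust loss, and in expectation over the minibatch this is an unbiased (sub)gradient of $\bar{\ell}_{\dpo,\tv}(\cdot;n)$. With the $O(\beta)$ gradient bound and domain radius $B$, the standard projected-SGD guarantee for convex Lipschitz objectives gives $\E[\bar{\ell}_{\dpo,\tv}(\bar\theta;n)] - \min_\theta \bar{\ell}_{\dpo,\tv}(\theta;n) = O(\beta B/\sqrt{T})$ for the averaged iterate with $\eta = 2/\sqrt{T}$.

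To conclude, I would combine these via the triangle-inequality decomposition
\begin{align*}
\ell_{\dpo,\tv}(\bar\theta;\calD_\src) - \min_\theta \ell_{\dpo,\tv}(\theta;\calD_\src) &\le \bigl[\ell_{\dpo,\tv}(\bar\theta;\calD_\src) - \bar{\ell}_{\dpo,\tv}(\bar\theta;n)\bigr] + \bigl[\bar{\ell}_{\dpo,\tv}(\bar\theta;n) - \min_\theta \bar{\ell}_{\dpo,\tv}(\theta;n)\bigr] \\ &\quad + \bigl[\min_\theta \bar{\ell}_{\dpo,\tv}(\theta;n) - \min_\theta \ell_{\dpo,\tv}(\theta;\calD_\src)\bigr],
\end{align*}
where the first and third brackets are each $O(\sqrt{\log n/n})$ by the bias bound and the middle bracket is $O(\beta B/\sqrt{T})$ by the SGD guarantee. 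Taking expectations and plugging in $T = O(1/\varepsilon^2)$ forces the optimization term to $O(\varepsilon)$, while $n/\log n \ge O\!\bigl(\beta^2(2B+J)^2(1+2\rho)^2/\varepsilon^2\bigr)$ forces both bias terms to $O(\varepsilon)$, yielding the claim. The main obstacle I anticipate is the bias bound together with the justification that the reweighted minibatch gradient is an unbiased surrogate gradient: this requires the TV dual representation and an envelope/interchange argument for the inner maximum, and care in carrying the DPO-specific constants $\beta(2B+J)$ through them; the rest of the argument is a routine transcription of the convex-SGD analysis underlying \cref{thm:robust-reward-convergence}.
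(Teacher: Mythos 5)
Your proposal is correct and follows essentially the same route as the paper's proof: the cancellation of the log-partition functions under \cref{asn:log-linear-policy} makes each per-example DPO loss a logistic loss of an affine function of $\theta$ (hence convex, with magnitude controlled by the logit bound $\beta(2B+J)$), the TV-distance bias bound of \citet{LCDS20} (\cref{lem:bias-tv-minibatch}) controls the gap between the population robust loss and the minibatch surrogate, and the biased convex SGD guarantee (\cref{prop:biased-sgd}) combines the bias and optimization terms exactly as in your triangle-inequality decomposition. The only minor difference is that you carry the chain-rule factor $\beta$ in the per-example gradient bound (obtaining $2\beta$ where the paper states $2$), which is in fact the correct constant and does not affect the conclusion.
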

The proof of this theorem is provided in \cref{sec:robust-dpo-proof} and is very similar to the proof of \cref{thm:robust-reward-convergence}. Note that the number of iterations $T$ is $O(1/\varepsilon^2)$ and the number of samples per minibatch is $\tilde{O}(1/\varepsilon^2)$. Therefore, the total sample complexity of the Distributionally robust DPO is $\tilde{O}(1/\varepsilon^4)$ which is similar to the Distributionally robust PPO and reward learning phase.

\newcommand{\colorrbppo}[5]{%
  \colorizeValue[5.75mm][0.8mm]{73.5}{82.1}{#1} &
  \colorizeValue[5.75mm][0.8mm]{29.6}{32.2}{#2} &
  \colorizeValue[5.75mm][0.8mm]{38.5}{41.2}{#3} &
  \colorizeValue[5.75mm][0.8mm]{45.9}{55.9}{#4} &
  \colorizeValue[5.75mm][0.8mm]{48.7}{52.3}{#5}%
}

\newcommand{\colorreasoningppo}[7]{%
  \colorizeValue[5.75mm][0.8mm]{14.1}{44.5}{#1} &
  \colorizeValue[5.75mm][0.8mm]{66.5}{81.7}{#2} &
  \colorizeValue[5.75mm][0.8mm]{63.4}{81.7}{#3} &
  \colorizeValue[5.75mm][0.8mm]{67.1}{84.1}{#4} &
  \colorizeValue[5.75mm][0.8mm]{67.1}{79.9}{#5} &
  \colorizeValue[5.75mm][0.8mm]{66.5}{80.5}{#6} &
  \colorizeValue[5.75mm][0.8mm]{68.3}{79.3}{#7}%
}

\setlength{\tabcolsep}{5pt}
\begin{table*}[!h]
\centering
\caption{
    Evaluation of a single run of the distributionally robust PPO on the RewardBench \citep{lambert2024rewardbench} benchmark.
    Trained on a 5K item subset of the Unified-Feedback dataset \citep{jiang2024unifiedfeedback} with the TV distance.
    Indicated $\rho$ used for both reward model training and PPO optimization.
    Detailed table with subsets can be seen in Appendix, \Cref{tab:rb-uf-ppo-subsets}.
}\label{tab:rb-uf-ppo-summary}
\vskip 0.1in
\begin{small}
\begin{tabular}{lC{7mm}C{7mm}C{7mm}C{8mm}C{7mm}|C{4mm}C{4mm}C{4mm}C{4mm}C{4mm}C{4mm}C{4mm}}
    \toprule
    \multirow{1}{*}[-0.65em]{Model} & \multirow{1}{*}[-0.65em]{\!\!\!Chat} & Chat-Hard & \multirow{1}{*}[-0.65em]{\!\!\!Safety} & \multirow{1}{*}[-0.65em]{\!\!\!\!\!Reasoning} & \multirow{1}{*}[-0.65em]{\!\!\!Avg.} & \multirow{1}{*}[-0.65em]{Reasoning Subsets}\\
    \midrule
    BTL RM, Std.\ PPO           & \colorrbppo{77.9}{31.1}{40.0}{45.9}{48.7} & \colorreasoningppo{14.1}{81.1}{77.4}{77.4}{75.0}{77.4}{77.4}\\
    \dr PPO $\rho = 0.1$  & \colorrbppo{82.1}{32.0}{39.3}{55.9}{52.3} & \colorreasoningppo{38.3}{73.8}{73.2}{72.0}{70.7}{75.0}{76.2}\\
    \dr PPO $\rho = 0.2$  & \colorrbppo{79.3}{32.0}{40.0}{48.7}{50.0} & \colorreasoningppo{17.4}{81.1}{81.7}{81.1}{78.7}{80.5}{76.8}\\
    \dr PPO $\rho = 0.4$  & \colorrbppo{79.3}{32.2}{39.1}{47.7}{49.6} & \colorreasoningppo{15.2}{81.7}{81.7}{84.1}{77.4}{77.4}{79.3} \\
    \dr PPO $\rho = 0.6$  & \colorrbppo{78.5}{30.3}{38.5}{47.9}{48.8} & \colorreasoningppo{16.1}{78.7}{80.5}{81.1}{79.9}{79.3}{78.7}  \\
    \dr PPO $\rho = 0.8$  & \colorrbppo{78.5}{29.6}{40.0}{50.5}{49.6} & \colorreasoningppo{32.2}{67.1}{63.4}{72.6}{67.7}{70.7}{70.7}\\
    \dr PPO $\rho = 1.0$  & \colorrbppo{73.5}{30.3}{41.2}{55.8}{50.2} & \colorreasoningppo{44.5}{66.5}{67.7}{67.1}{67.1}{66.5}{68.3} \\
    \bottomrule
\end{tabular}
\end{small}
\vskip -0.1in
\end{table*}
\setlength{\tabcolsep}{\origtabcolsep}
\section{Experiments}

\textbf{Setup}. We run our experiments on two open-source models -- (a) \texttt{google/gemma-2b-it}, a lightweight $2$B model released by Google, and (b) \texttt{mistralai/Mistral-7B-Instruct-v0.2}, a $7$B instruct model released by Mistral. 
All our models are trained on the Unified-Feedback dataset~\citep{jiang2024unifiedfeedback}. Our goal is to evaluate the out-of-distribution robustness of our methods, and we leverage RewardBench~\citep{lambert2024rewardbench}, HHH-Alignment~\citep{askell2021hhhalignment}, and MT-Bench~\citep{zheng2023mtbench}. 

We rely on the Hugging Face \texttt{trl} library \citep{vonwerra2022trl} to set up our experiments, and update the reward estimation and policy optimization methods in the library. In particular, we compute a re-weighted gradient on minibatches by obtaining the weights $q^\star_t$ corresponding to the worst-case distribution shift (e.g. line 4 of \Cref{alg:dist-RLHF-reward}). This is implemented using \texttt{cvxpy} \citep{diamond2016cvxpy, agrawal2018rewritingcvxpy}.
We use hyperparameters from a recent work by \citet{YDLZ+24}, with minimal tuning, the details of which are provided in \Cref{sec:app:experiments}.
%
We present an evaluation of our proposed distributionally robust RLHF and DPO methods, showing the effect it has on in and out-of-distribution performance. For all the experiments, we choose the Total variation (TV) distance as a distance measure between two distributions, and vary the parameter $\rho$ (maximum distance from the training distribution). In the main text we present results with \texttt{google/gemma-2b-it}, and the results for \texttt{mistralai/Mistral-7B-Instruct-v0.2} are provided in the Appendix. Our code is available in the GitHub repository: \url{https://github.com/PauliusSasnauskas/dr-rlhf}.

\subsection{Reward Models}
First we evaluate the performance of robust reward estimation. \Cref{tab:rb-uf-summary} shows the performance of the reward models on the RewardBench \citep{lambert2024rewardbench} benchmark. In the tables we denote \textbf{\btbase} -- a reward model trained in the standard way (Bradley–Terry loss), and \textbf{\dr RM} -- the distributionally robust version, based on \Cref{alg:dist-RLHF-reward}. We see that as $\rho$ increases the performance of \textbf{\dr RM} improves up to a certain point, with $\rho=0.2$ being the optimal choice. For larger $\rho$ the performance drops, since  we require the training methods to be robust to a very large class of distribution shifts. 
We choose TV distance as our distance function.
All reward models were trained on a 400K item subset of the Unified-Feedback dataset \citep{jiang2024unifiedfeedback}, following the same  pipeline as \citet{YDLZ+24}.

In addition to the overall improvement, we can see that the performance increased significantly in some tasks, such as Reasoning. Therefore, we
 focus on the Reasoning task, and present more detailed results on these subsets in table \Cref{tab:rb-uf-reasoning}.
These results show the method improves performance significantly for certain subsets.
We observe an increase in performance for columns $1$ ($+23.4\%$), $2$ ($+3.1\%$), $5$ ($+2.4\%$), and $6$ ($+5.5\%$) in the reasoning task.




\subsection{Policy Optimization}

We next consider the robust policy optimization when given an estimated reward model as input. Although \cref{alg:dist-RLHF-policy} specifies a weighted version of natural policy gradient method, in order to obtain a practical RLHF algorithm, we deviate from NPG and adopt proximal policy optimization \citep[PPO,][]{schulman2017proximal}, which has been immensely successful in LLM fine-tuning. However, we implement robust variants of PPO by reweighting the gradients.
This was implemented in two ways -- (1) scale the rewards/advantages computed by PPO's value function (\textit{version A}), and (2) scale the policy loss (\textit{version B}).
Initial experiments showed \textit{version A} is performing slightly better than \textit{version B} (see the comparison in the Appendix, \Cref{tab:eval-ppo-vavb}), and we present only version A in the main text.

\newcommand{\colorreasoning}[7]{%
  \colorizeValue[5.75mm][0.8mm]{47.7}{71.1}{#1} &
  \colorizeValue[5.75mm][0.8mm]{78.7}{85.4}{#2} &
  \colorizeValue[5.75mm][0.8mm]{78.0}{83.5}{#3} &
  \colorizeValue[5.75mm][0.8mm]{81.7}{86.6}{#4} &
  \colorizeValue[5.75mm][0.8mm]{78.7}{82.3}{#5} &
  \colorizeValue[5.75mm][0.8mm]{81.1}{86.6}{#6} &
  \colorizeValue[5.75mm][0.8mm]{79.3}{83.5}{#7}%
}
\newcommand{\colorreasoningM}[7]{%
  \colorizeValue[5.75mm][0.8mm]{39.8}{63.8}{#1} &
  \colorizeValue[5.75mm][0.8mm]{51.8}{93.3}{#2} &
  \colorizeValue[5.75mm][0.8mm]{53}{95.1}{#3} &
  \colorizeValue[5.75mm][0.8mm]{48.2}{93.9}{#4} &
  \colorizeValue[5.75mm][0.8mm]{50}{92.7}{#5} &
  \colorizeValue[5.75mm][0.8mm]{51.8}{92.1}{#6} &
  \colorizeValue[5.75mm][0.8mm]{48.2}{92.7}{#7}%
}
\begin{table}[htbp]
\centering
\caption{
    Evaluation of a single run of the distributionally robust reward models on the Reasoning task from the RewardBench \citep{lambert2024rewardbench} benchmark.
    Trained on a 400K item subset of the Unified-Feedback dataset \citep{jiang2024unifiedfeedback} for 2 epochs, with the TV distance.
    Subsets are detailed in \Cref{sec:app:subsets}.
}\label{tab:rb-uf-reasoning}
\vskip 0.1in
\begin{small}
\begin{tabular}{lC{4mm}C{4mm}C{4mm}C{4mm}C{4mm}C{4mm}C{4mm}}
    \toprule
    Reward Model & \multicolumn{7}{c}{Reasoning Subsets} \\
    \midrule
    \btbase              & \colorreasoning{47.7}{82.3}{83.5}{86.6}{79.9}{81.1}{83.5} \\
    \dr RM $\rho = 0.1$  & \colorreasoning{55.5}{78.7}{83.5}{83.5}{78.7}{84.1}{79.3} \\
    \dr RM $\rho = 0.2$  & \colorreasoning{61.7}{83.5}{83.5}{85.4}{81.7}{86.0}{83.5} \\
    \dr RM $\rho = 0.4$  & \colorreasoning{71.1}{84.8}{81.1}{83.5}{79.9}{81.1}{82.3} \\
    \dr RM $\rho = 0.6$  & \colorreasoning{55.3}{81.1}{81.7}{83.5}{79.9}{86.6}{83.5} \\
    \dr RM $\rho = 0.8$  & \colorreasoning{66.2}{85.4}{78.0}{81.7}{82.3}{84.8}{82.9} \\
    \dr RM $\rho = 1.0$  & \colorreasoning{70.9}{81.7}{78.7}{82.9}{80.5}{85.4}{81.7} \\
    \bottomrule
\end{tabular}
\end{small}
\vskip -0.15in
\end{table}

In order to train the policy, we use the DR RMs obtained in the previous subsection, i.e. for a given value of $\rho$, we use the robust reward model obtained with the same $\rho$, and then run robust PPO with identical $\rho$.
We downsample 5K data from the Unified-Feedback dataset to use as the training data.
We use the same choice of $\rho$ as for the reward models.
We evaluate our models on the Unified-Feedback test set, HHH Alignment \citep{askell2021hhhalignment}, and MT-Bench \citep{zheng2023mtbench} tasks.
\Cref{tab:eval-ppo} shows the performance of DR PPO, and the results show that for most choices of $\rho$  the performance is comparable or increases for both ID \emph{and} OOD tasks, compared to standard PPO.

\newcommand{\colorppoev}[3]{%
  \colorizeValuePm[13.5mm][0.8mm]{59.6}{60.3}{#1} &
  \colorizeValuePm[13.5mm][0.8mm]{66.2}{68.1}{#2} &
  \colorizeValuePm[13.5mm][0.8mm]{59.2}{59.9}{#3}%
}

\newcommand{\colorppoevM}[3]{%
  \colorizeValue[7.75mm][0.8mm]{36}{50}{#1} &
  \colorizeValue[7.75mm][0.8mm]{34}{52.3}{#2} &
  \colorizeValue[7.75mm][0.8mm]{28}{57}{#3}%
}

\newcommand{\colordpoev}[3]{%
  \colorizeValuePm[14mm][0.8mm]{60.9}{63.8}{#1} &
  \colorizeValuePmP[14mm][0.8mm]{67}{69.3}{#2} &
  \colorizeValuePm[14mm][0.8mm]{61.1}{64.4}{#3}%
}

\setlength{\tabcolsep}{1pt}
\begin{table}[!h]
\centering
\caption{
    Evaluation of the PPO trained policy on Unified-Feedback \citep[ID,][]{jiang2024unifiedfeedback}, HHH Alignment \citep[OOD,][]{askell2021hhhalignment}, and MT Bench \citep[OOD,][]{zheng2023mtbench} datasets.
    Trained on a 5K item subset of the Unified-Feedback dataset.
    Evaluation using the \texttt{mistralai/Mistral-7B-Instruct-v0.2} model can be seen in the Appendix, \Cref{tab:eval-ppo-vavb}.
}\label{tab:eval-ppo}
\begin{small}
\begin{tabular}{lC{18mm}C{18mm}C{18mm}}
    \toprule
    \multirow{1}{*}[-0.65em]{Model} & Unified-Feedback & HHH Alignment & \multirow{1}{*}[-0.65em]{MT Bench} \\
    \midrule
     \multicolumn{4}{c}{\small Base model: \texttt{google/gemma-2b-it}} \\ \midrule
    Std. PPO & \colorppoev{59.8  $\pm0.2$}{66.9 $\pm0.5$}{59.9 $\pm0.3$} \\
    \dr PPO $\rho=0.1$   & \colorppoev{59.8 $\pm0.4$}{68.1 $\pm0.3$}{61.6 $\pm1.0$} \\
    \dr PPO $\rho = 0.2$   & \colorppoev{59.6 $\pm0.2$}{66.2 $\pm0.7$}{59.5 $\pm0.4$} \\
    \dr PPO $\rho = 0.4$   & \colorppoev{59.9 $\pm0.2$}{67.1 $\pm0.3$}{59.6 $\pm0.3$} \\
    \dr PPO $\rho = 0.6$   & \colorppoev{60.3 $\pm0.3$}{67.9 $\pm0.5$}{59.4 $\pm0.2$} \\
    \dr PPO $\rho = 0.8$   & \colorppoev{60.2 $\pm0.3$}{66.7 $\pm0.7$}{59.6 $\pm0.4$} \\
    \dr PPO $\rho = 1.0$   & \colorppoev{59.8 $\pm0.1$}{66.4 $\pm0.3$}{59.6 $\pm0.2$} \\
    \bottomrule
\end{tabular}
\end{small}
\end{table}
%
\begin{table}[!h]
    \centering
\caption{
    Evaluation of the DPO trained policy on Unified-Feedback \citep[ID,][]{jiang2024unifiedfeedback}, HHH Alignment \citep[OOD,][]{askell2021hhhalignment}, and MT Bench \citep[OOD,][]{zheng2023mtbench} datasets.
    Trained on a 400K item subset of the Unified-Feedback dataset using TV dist.
    Evaluation using the \texttt{mistralai/Mistral-7B-Instruct-v0.2} model can be seen in the Appendix, \Cref{tab:eval-dpo-mistral}.
}\label{tab:eval-dpo}
\begin{small}
\begin{tabular}{lC{18mm}C{18mm}C{18mm}}
    \toprule
    \multirow{1}{*}[-0.65em]{Model} & Unified-Feedback & HHH Alignment & \multirow{1}{*}[-0.65em]{MT Bench} \\
    \midrule
     \midrule \multicolumn{4}{c}{\small Base model: \texttt{google/gemma-2b-it}} \\ \midrule
    Std.\ DPO             & \colordpoev{60.9 \!\!$\pm0.02$}{69.2 $\pm0.1$}{61.1 \!\!$\pm0.05$} \\
    DR DPO $\rho = 0.1$   & \colordpoev{61.3 \!\!$\pm0.02$}{68.9 $\pm0.2$}{61.9 \!\!$\pm0.07$} \\
    DR DPO $\rho = 0.2$   & \colordpoev{61.7 \!\!$\pm0.02$}{69.0 $\pm0.0$}{62.5 \!\!$\pm0.05$} \\
    DR DPO $\rho = 0.4$   & \colordpoev{62.2 $\pm0.3$}{68.5 $\pm0.6$}{64.4 $\pm0.9$} \\
    DR DPO $\rho = 0.6$   & \colordpoev{63.8 $\pm0.2$}{69.3 $\pm0.3$}{64.1 $\pm0.8$} \\
    DR DPO $\rho = 0.8$   & \colordpoev{64.4 $\pm0.3$}{70.6 $\pm0.4$}{63.5 $\pm1.2$} \\
    DR DPO $\rho = 1.0$   & \colordpoev{64.5 $\pm0.3$}{64.5 $\pm2.6$}{62.5 $\pm0.5$} \\
    \bottomrule
\end{tabular}
\end{small}
\vskip -0.1in
\end{table}
\setlength{\tabcolsep}{\origtabcolsep}

We additionally run an evaluation of the PPO policy on RewardBench~\cite{lambert2024rewardbench}.
To evaluate PPO on RewardBench, we adopt the following convention. For a given prompt $x$, chosen and rejected completions $y^+$ and $y^-$ respectively, the example is correctly classified if $\pi(y^+|x) > \pi(y^-|x)$.
The results can be seen in \Cref{tab:rb-uf-ppo-summary}.
Similar to the evaluation of the robust reward model, these results show improvement for some values of $\rho$ (e.g., $\rho \in \{ 0.1, 0.2, 0.4, 0.8 \}$). Furthermore, 
we again see a sharp improvement in the reasoning task ($+10.0\%$ with $\rho = 0.1$). We analyze the reasoning task in more detail, and again see significant improvement on several subsets e.g., columns $1,4$, and $5$.


%
\subsection{Evaluation of Robust DPO}

We base our Robust DPO implementation on \Cref{alg:dist-DPO}, and train it on the same 400K item subset of the Unified-Feedback dataset as for the reward models.
The results for Robust DPO can be seen in \Cref{tab:eval-dpo}.
We see a consistent improvement as $\rho$ increases, for both ID and OOD datasets. Additionally, we see that for OOD datasets, an intermediate value of $\rho$ attains the best performance. For HHH Alignment the corresponding value of $\rho=0.8$, and for MT Bench it is $\rho=0.5$. Moreover, the DR DPO performs better than DR PPO (\cref{tab:eval-ppo}) on OOD datasets. We also evaluate DR DPO on RewardBench, and the details can be found in Appendix, \Cref{tab:rb-uf-dpo-subsets}.

\section{Conclusion}
In this work, we have shown that  distributionally robust RLHF algorithms  improve OOD robustness of LLMs. Note that, we have primarily presented our results about the Total Variation (TV) distance, however, extension to other distances like $\chi^2$ is also possible if one uses dual characterization from \cite{LCDS20}. We have added additional experiments with $\chi^2$-distance in the Appendix (\Cref{tab:rb-uf-subsets-chi2-no-hh-rlhf}), and and we believe that the result for natural policy gradient (\Cref{thm:convergence_robust_npg}) can be strengthened with the $\chi^2$-divergence. 

Throughout our analysis, we have considered linear reward functions. However, Our results for robust reward estimation naturally generalizes for nonlinear and nonconvex reward models. The main observation is that the bias of the minibatch gradient (\Cref{lem:bias-tv-minibatch} in the Appendix) is independent of the choice of the reward model.
Since \Cref{alg:dist-RLHF-reward} performs a biased gradient descent and this implies convergence to a stationary point of the distributionally robust objective. The linearity of the reward function is only used to guarantee convergence to global optima of \Cref{alg:dist-RLHF-reward}. For a general nonconvex reward function, we can easily establish convergence to a stationary point and, with additional structural assumptions (e.g. standard Polyak-{\L}ojasiewicz condition~\cite{karimi2016linear}) we can obtain convergence to global optima.

We have developed robust variants of popular policy optimization methods under the assumption that  human preferences are transitive. 
There is a host of methods that are designed to solve preference alignment with generalized non-transitive preferences like Nash-MD~\cite{munos2023nash}, Self Play Preference Optimization (SPPO)~\cite{wu2024self}, etc. It will be interesting to develop distributionally robust variants of these methods.

It will also be interesting to consider distance metrics that are designed specifically for text domains e.g., \citet{zeng2024llm}. We believe that our techniques will still be useful for different metrics, however,  the interesting question is devising such a metric.
Finally, we are really excited to see that the DRO version of RLHF improves reasoning capabilities significantly on OOD datasets. It is not a prior clear why enforcing DRO should positively affect the performance on reasoning tasks, and  it will be worth investigating this phenomenon further and understand if there are other connections between reasoning and distributional robustness.

\section*{Acknowledgements}
The work of Goran Radanovic was funded by the Deutsche Forschungsgemeinschaft (DFG, German Research Foundation) -- project number 467367360.
We acknowledge the use of computing resources generously provided by \mbox{MPI-SWS}.

\printbibliography
\clearpage

\parttoc
\appendix
\addcontentsline{toc}{section}{Appendix}

\section{Missing Proofs}
Let $x^n \sim P^n$ be a sample of size of $n$ drawn from the distribution $P$. Given a loss function $\ell$ and divergence function $\phi$, let us define the following loss functions.
\begin{align*}
    \ell_\phi(\theta; x^n) = \sup_{q \in \Delta^n: \frac{1}{n}\sum_{i=1}^n \phi(nq_i) \le \rho} \sum_{i=1}^n q_i \ell(\theta; x_i)
\end{align*}
\begin{align*}
    \bar{\ell}_\phi(\theta; n) = \E_{x^n \sim P^n}\left[ \ell(\theta; x^n)\right]
\end{align*}
We first provide a bound on the bias $\ell_\phi(\theta;P_0) - \bar{\ell}_\phi(\theta;n)$ for the divergence function $\phi(t) = \frac{1}{2}\abs{t-1}$, which corresponds to the Total variation distance. With slight abuse of notation, we will write the corresponding loss function as $\ell_{TV}(\theta;\cdot)$.

\begin{lemma}\label{lem:bias-tv-minibatch}
    Suppose $\abs{\ell(\theta;x)} \le B$ for any $\theta$ and $x$. Then we have
    $$
    0 \le \ell_\tv(\theta; P) - \bar{\ell}_\tv(\theta;n) \le 3B (1+2\rho) \cdot \sqrt{\frac{4 + \log n}{n} }.
    $$
\end{lemma}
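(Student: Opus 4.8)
The plan is to follow the dual-characterization strategy of \citet{LCDS20}: rewrite both the population and the minibatch robust losses through a single dual variable, dispatch the easy (nonnegativity) direction by Jensen, and reduce the upper bound to a uniform-concentration estimate plus one genuinely delicate term. Specializing the $f$-divergence dual to $\phi(t)=\tfrac12\abs{t-1}$ (whose conjugate is $\phi^*(s)=\max(s,-\tfrac12)$ for $s\le\tfrac12$ and $+\infty$ otherwise), a pointwise Lagrangian maximization of $L(\ell-\mu)-\tfrac{\lambda}{2}\abs{L-1}$ over the likelihood ratio $L\ge0$ yields, for any distribution $P$,
\[
\ell_\tv(\theta;P)=\inf_{\mu\le M}\Big\{\mu(1-\rho)+\E_{x\sim P}\big[(\ell(\theta;x)-\mu)_+\big]+\rho\,M\Big\},
\]
where $M=\sup_x\ell(\theta;x)$ is the (essential) supremum. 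The same identity holds on the minibatch with $P$ replaced by the empirical measure $\widehat P_n$ on $x_1,\dots,x_n$ and $M$ replaced by $M_n=\max_{i\le n}\ell(\theta;x_i)$. I would derive this identity directly rather than quote it, since the derivation is short and pins down the constants.

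The lower bound $\ell_\tv(\theta;P)-\bar\ell_\tv(\theta;n)\ge0$ then follows from Jensen: writing the minibatch loss as $\inf_\mu(\cdots)$ gives $\E_{x^n}[\inf_\mu(\cdots)]\le\inf_\mu\E_{x^n}[\cdots]$, and termwise $\E_{x^n}[\tfrac1n\sum_i(\ell_i-\mu)_+]=\E_P[(\ell-\mu)_+]$ while $\E_{x^n}[M_n]\le M$, so $\bar\ell_\tv(\theta;n)\le\ell_\tv(\theta;P)$. For the upper bound I would subtract the two dual forms. Since $\rho M$ and $\rho M_n$ are constant in $\mu$, the gap splits cleanly as
\[
\ell_\tv(\theta;P)-\ell_\tv(\theta;x^n)=\rho\,(M-M_n)+\big(h(P)-h(\widehat P_n)\big),
\]
where $h(\nu)=\inf_\mu\{\mu(1-\rho)+\E_{x\sim\nu}[(\ell(\theta;x)-\mu)_+]\}$ is the quantile/CVaR part that no longer involves the maximum.

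The term $h(P)-h(\widehat P_n)$ is controlled by uniform concentration. Because $\mu\mapsto(\ell-\mu)_+$ is $1$-Lipschitz and the optimizing $\mu$ satisfies the first-order condition $P(\ell>\mu)=1-\rho$, it lies in $[-B,B]$ under $\abs{\ell}\le B$. I would place a grid of spacing $\Theta(B/n)$ on $[-B,B]$, apply McDiarmid's bounded-differences inequality to $\tfrac1n\sum_i(\ell_i-\mu)_+$ at each grid point (each summand moves by at most $2B/n$), union-bound over the $O(n)$ points, and interpolate using Lipschitzness. The $\log n$ from the union bound produces the $\sqrt{(4+\log n)/n}$ rate, and the coefficients $(1-\rho)$, the range $2B$ of $(\ell-\mu)_+$, and the weight on the maximum assemble into the stated prefactor $3B(1+2\rho)$ (the $(1+2\rho)$ is also visible by evaluating the dual at $\mu=-B$, which bounds the robust loss by $\E_P[\ell]+\rho(B+M)\le B(1+2\rho)$).

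The main obstacle is the residual term $\rho\,\E_{x^n}[M-M_n]$, the deviation of the empirical maximum from the essential supremum. This is precisely where the TV ball departs from the $\chi^2$ ball and CVaR treated by \citet{LCDS20}: those cap the likelihood ratio, so the worst-case reweighting spreads over a constant fraction of the sample and its minibatch surrogate concentrates, whereas under TV the adversary places its entire budget $\rho$ on the single largest loss, and a minibatch cannot reproduce a point mass at $M$ unless some sample lands near the top of the range of $\ell$. Controlling $\E_{x^n}[M-M_n]$ at the rate $\sqrt{(4+\log n)/n}$ is therefore the crux; I expect it to require using the boundedness $\abs{\ell}\le B$ together with an argument on how fast $M_n\uparrow M$, and this is the term most at risk of dominating the bound in the absence of further tail structure on $\ell$.
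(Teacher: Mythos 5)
Your dual characterization of the TV-robust loss is correct, and your decomposition of the bias into a CVaR-type part plus $\rho\,\E_{x^n}[M-M_n]$ (with $M$ the essential supremum of the loss under $P$ and $M_n=\max_{i\le n}\ell(\theta;x_i)$) is the right way to organize the problem; the nonnegativity direction and the concentration of the CVaR part go through as you sketch. However, the obstacle you flag at the end is not merely delicate --- it is fatal, and no argument from $\abs{\ell}\le B$ alone can close it, because the lemma as stated is false. Take $\ell(\theta;\cdot)\in\{0,B\}$ with $\Pro_{x\sim P}\left[\ell(\theta;x)=B\right]=1/n^2$ and any fixed $\rho\in(0,1)$. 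The worst-case TV reweighting moves mass $\rho$ onto the high-loss atom, so $\ell_\tv(\theta;P)=(\rho+1/n^2)B$; but with probability at least $1-1/n$ every point of the minibatch has loss $0$, forcing $\ell_\tv(\theta;x^n)=0$, hence $\bar{\ell}_\tv(\theta;n)\le B/n$ and the bias is at least $\rho B-B/n$, which exceeds $3B(1+2\rho)\sqrt{(4+\log n)/n}$ once $n$ is large. Your own decomposition shows the two parts cannot cancel: Jensen gives $h(P)-\E[h(\widehat{P}_n)]\ge 0$, so the bias is bounded below by the max-gap term alone. The population TV-robust risk depends on the essential supremum of the loss, which a minibatch cannot estimate at any uniform rate.

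For comparison, the paper's proof takes a different, rearrangement-based route: it writes $\ell_\tv(\theta;P)=\sup_{r\in\calR}\int_0^1 r(u)F^{-1}(u)\,du$, reduces the bias to $E\le\norm{r}_2\cdot\norm{(\alpha\, bb(\alpha))'}_2$ by Cauchy--Schwarz, and then asserts $\norm{r}_2\le\norm{r}_1\le 1+2\rho$. That inequality runs the wrong way on a probability space (Cauchy--Schwarz gives $\norm{r}_1\le\norm{r}_2$), and it fails badly exactly where it matters: the TV-worst-case $r$ is a spike of height $1+\rho/\epsilon$ on an interval of length $\epsilon$, whose $L^2$ norm diverges as $\epsilon\to 0$ --- the dual shadow of the essential-supremum term you isolated. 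The machinery of \citet{LCDS20} that the paper adapts is sound for the $\chi^2$ ball, where the divergence constraint itself bounds $\norm{r}_2$ (consistent with the paper's own footnote that the cited bound is for $\chi^2$); so the lemma, and the results downstream of it, would hold with $\chi^2$ in place of TV, or for TV only under an extra assumption placing mass near the top of the loss distribution (e.g., atoms of mass bounded below, or a density bounded away from zero near the maximum). In short, your route is the more honest one --- it surfaces the exact term that breaks, where the paper's proof hides it behind an invalid norm inequality --- but neither argument establishes the statement, because it is not true as written.
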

\begin{proof}
    Let $F^{-1}(u)$ be the inverse cumulative distribution function of the loss function $\ell(\theta;x)$ under the distribution $P$. To be precise define $F(t) = {\Pro}_{x \sim P}(\ell(\theta;x) \le t)$, and $F^{-1}(u) = \inf\set{t: F(t) \ge u}$. Then we can obtain the inverse-cdf formulation from \cite{LCDS20} (eq. 20) to obtain the following result.
    \begin{align*}
        &\ell_\tv(\theta; P) = \sup_{r \in \calR}\int_0^1 r(u) F^{-1}(u) du \\
        \textrm{where } &\calR = \set{r:[0,1] \rightarrow \R_+ \lvert \int_0^1 r(u) du = 1 , \int_0^1 \frac{1}{2}\abs{r(u) - 1} du \le \rho \textrm{ and } r \textrm{ non-increasing }}
    \end{align*}
    Now we can proceed similar to the proof of bound (9) from \cite{LCDS20} and establish that $\ell_\tv(\theta;P) \le \bar{\ell}_\tv(\theta;n) + E$ with
    \begin{align*}
    E = \int_0^1 [r(\alpha) - r(1)]\left(\alpha \cdot bb(\alpha) \right)' d\alpha  \quad \textrm{ where } bb(\alpha) = 3B \cdot \min \set{1, \sqrt{\frac{1}{\alpha n}} }
    \end{align*}
    We can bound the error term $E$ using the Cauchy Schwarz inequality.
    \begin{align*}
        E \le \norm{r}_2 \cdot \norm{ \left(\alpha \cdot bb(\alpha) \right)'}_2
    \end{align*}
    Now for any $r \in \calR$, we can bound the first term as $\norm{r}_2 \le \norm{r}_1 \le \int_0^1 \abs{r(u) - 1} du + 1 \le 1 + 2\rho$. The proof of bound (9) from \cite{LCDS20} bound the term $\norm{ \left(\alpha \cdot bb(\alpha) \right)'}_2$ by $3B \cdot \sqrt{\frac{4 + \log n}{n} }$.
\end{proof}

\subsection{Proof of \cref{thm:robust-reward-convergence}}\label{sec:proof-robust-reward-estimation}
\begin{proof}
    From the assumption of linear reward class (\cref{asn:linear-reward}) and bounded features, we have
     \begin{align*}
        \abs{\ell_\rew(\omega;x,y^+,y^-)} &= \abs{- \log \sigma \left( r_\omega(x,y^+) - r_\omega(x,y^-) \right)}\\
        &= \abs{\log \sigma \left(  \omega^\top (\psi(x,y^+) - \psi(x,y^-))\right)} \le 4F
    \end{align*}
    The last inequality follows from the following observations. First, $ \omega^\top (\psi(x,y^+) - \psi(x,y^-)) \le  \norm{\omega}_2 \left( \norm{\psi(x,y^+)}_2 + \norm{\psi(x,y^-)}_2\right) \le 2 F$. This implies that the term inside the sigmoid function is bounded between $-2F$ and $2F$, and for any $u$ we have $\abs{\log \sigma(u)} \le 2\abs{u}$.

    We can now apply \cref{lem:bias-tv-minibatch} to the above loss function and bound the bias of the minibatch sample as follows.
    $$
    \ell_{\rew,\tv}(\omega; \calD_\src) - \bar{\ell}_{\rew,\tv}(\omega;n) \le \underbrace{12 F (1+2\rho) \cdot \sqrt{\frac{4 + \log n}{n}}}_{:=\delta}
    $$
     \cref{alg:dist-RLHF-reward} performs a stochastic gradient descent on the loss function $\bar{\ell}_{\rew,\tv}(\omega;n)$, which is $\delta$-biased with respect to the loss function $\ell_{\rew,\tv}(\omega;\calD)$. Therefore, we can apply convergence guarantees for the (biased) stochastic gradient method as long as we have a bound on the norm of the subgradient. For the linear reward model (\cref{asn:linear-reward}) we have,
     \begin{align*}
         \nabla_\omega \ell_\rew(\omega;x,y^+,y^-) &= - \frac{\sigma' \left(  \omega^\top (\psi(x,y^+) - \psi(x,y^-))\right)}{\sigma \left(  \omega^\top (\psi(x,y^+) - \psi(x,y^-))\right)} \cdot (\psi(x,y^+) - \psi(x,y^-))\\
         &= - \left( 1- \sigma \left(  \omega^\top (\psi(x,y^+) - \psi(x,y^-))\right)\right)  (\psi(x,y^+) - \psi(x,y^-))
     \end{align*}
     using the fact $\sigma'(u)  = \sigma(u) (1 - \sigma(u))$. This implies $\norm{\nabla_\omega \ell_\rew(\omega;x,y^+,y^-) }_2 \le \norm{\psi(x,y^+)}_2 + \norm{\psi(x,y^-)}_2 \le 2$. Now for a minibatch $B$ of size $n$, the gradient of the loss function $\ell_{\rew,\tv}(\omega;n)$ can be expressed as $\tilde{g} = \sum_{i=1}^n q_i^\star \cdot \nabla_\omega \ell_{\rew}(\omega;x_i,y_i^+,y_i^-)$ where $\{q_i^\star\}_{i=1}^n$ is the solution to the optimization problem $\max_{q\in \Delta^n, D_{\tv}(q,1/n)\le \rho} \sum_{i=1}^n q_i \cdot \ell_{\rew}(\omega;x_i,y_i^+,y_i^-)$. This implies $\norm{\tilde{g}}_2 \le 2 \sum_{i=1}^n q_i^\star = 2$. Therefore, we can apply \cref{prop:biased-sgd} to obtain the following bound.
     \begin{align*}
     \E\left[ \ell_{\rew,\tv}(\bar{\omega};\calD_\src)\right] - \min_{\omega} \ell_{\rew,\tv}({\omega};\calD_\src) &\le \delta + \frac{2}{\sqrt{T}}\\
     &\le 12 F (1+2\rho) \cdot \sqrt{\frac{4 + \log n}{n}} + \frac{2}{\sqrt{T}}
     \end{align*}
     Now if we set $T = O(1/\varepsilon^2)$ and choose minibatch size so that $n/\log n = O\left( \frac{B^2(1+2\rho)^2}{\varepsilon^2}\right)$, we obtain the upper bound of $O(\varepsilon)$.
\end{proof}

\subsection{Proof of \cref{thm:robust-dpo-convergence}}\label{sec:robust-dpo-proof}
\begin{proof}
    From the assumption of log-linear policy class (\cref{asn:log-linear-policy}) and bounded features, we have
    \begin{align*}
        \abs{\ell_\dpo(\theta;x,y^+,y^-)} &= \abs{- \log \sigma \left( \beta \log \frac{\pi_\theta(y^+|x)}{\pi_\tref(y^+|x)}   -  \beta \log \frac{\pi_\theta(y^-|x)}{\pi_\tref(y^-|x)} \right)}\\
        &= \abs{\log \sigma \left( \beta \log \frac{\pi_\theta(y^+|x)}{\pi_\theta(y^-|x)}   \cdot  \frac{\pi_\tref(y^+|x)}{\pi_\tref(y^-|x)} \right)}\\
        &= \abs{\log \sigma \left( \beta \log \frac{\exp(\theta^\top \psi(x,y^+))}{\exp(\theta^\top \psi(x,y^-))}   \cdot  \frac{\pi_\tref(y^+|x)}{\pi_\tref(y^-|x)} \right)}\\
        &= \abs{\log \sigma \left( \beta \theta^\top (\psi(x,y^+) - \psi(x,y^-)) + \beta \log \frac{\pi_\tref(y^+|x)}{\pi_\tref(y^-|x)}\right)} \le 2\beta(2B + J)
    \end{align*}
    The last inequality follows from the following observations. First, $\beta \theta^\top (\psi(x,y^+) - \psi(x,y^-)) \le \beta \norm{\theta}_2 \left( \norm{\psi(x,y^+)}_2 + \norm{\psi(x,y^-)}_2\right) \le 2 \beta B$. This implies that the term inside the sigmoid function is bounded between $-\beta(2B + J)$ and $\beta(2B + J)$, and for any $u$ we have $\abs{\log \sigma(u)} \le 2\abs{u}$.

    We can now apply \cref{lem:bias-tv-minibatch} to the $\dpo$ loss function and bound the bias of the minibatch sample as follows.
    $$
    \ell_{\dpo,\tv}(\theta; \calD_\src) - \bar{\ell}_{\dpo,\tv}(\theta;n) \le \underbrace{6\beta (2B + J) (1+2\rho) \cdot \sqrt{\frac{4 + \log n}{n}}}_{:=\delta}
    $$
     \cref{alg:dist-DPO} performs a stochastic gradient descent on the loss function $\bar{\ell}_{\dpo,\tv}(\theta;n)$, which is $\delta$-biased with respect to the loss function $\ell_{\dpo,\tv}(\theta;\calD)$. Therefore, we can apply convergence guarantees for the (biased) stochastic gradient method as long as we have a bound on the norm of the subgradient. For the log-linear policy class (\cref{asn:log-linear-policy}) we have,
     \begin{align*}
         \nabla_\theta \ell_\dpo(\theta;x,y^+,y^-) &= - \frac{\sigma' \left( \beta \theta^\top (\psi(x,y^+) - \psi(x,y^-)) + \beta \log \frac{\pi_\tref(y^+|x)}{\pi_\tref(y^-|x)}\right)}{\sigma \left( \beta \theta^\top (\psi(x,y^+) - \psi(x,y^-)) + \beta \log \frac{\pi_\tref(y^+|x)}{\pi_\tref(y^-|x)}\right)} \cdot (\psi(x,y^+) - \psi(x,y^-))\\
         &= - \left( 1- \sigma \left( \beta \theta^\top (\psi(x,y^+) - \psi(x,y^-)) + \beta \log \frac{\pi_\tref(y^+|x)}{\pi_\tref(y^-|x)}\right)\right)  (\psi(x,y^+) - \psi(x,y^-))
     \end{align*}
     using the fact $\sigma'(u)  = \sigma(u) (1 - \sigma(u))$. This implies $\norm{\nabla_\theta \ell_\dpo(\theta;x,y^+,y^-) }_2 \le \norm{\psi(x,y^+)}_2 + \norm{\psi(x,y^-)}_2 \le 2$. Now for a minibatch $B$ of size $n$, the gradient of the loss function $\ell_{\dpo,\tv}(\theta;n)$ can be expressed as $\tilde{g} = \sum_{i=1}^n q_i^\star \cdot \nabla_\theta \ell_{\dpo}(\theta;x_i,y_i^+,y_i^-)$ where $\{q_i^\star\}_{i=1}^n$ is the solution to the optimization problem $\max_{q\in \Delta^n, D_{\tv}(q,1/n)\le \rho} \sum_{i=1}^n q_i \cdot \ell_{\dpo}(\theta;x_i,y_i^+,y_i^-)$. This implies $\norm{\tilde{g}}_2 \le 2 \sum_{i=1}^n q_i^\star = 2$. Therefore, we can apply \cref{prop:biased-sgd} to obtain the following bound.
     \begin{align*}
     \E\left[ \ell_{\dpo,\tv}(\bar{\theta};\calD_\src)\right] - \min_{\theta} \ell_{\dpo,\tv}({\theta};\calD_\src) &\le \delta + \frac{2}{\sqrt{T}}\\
     &\le 6\beta (2B + J) (1+2\rho) \cdot \sqrt{\frac{4 + \log n}{n}} + \frac{2}{\sqrt{T}}
     \end{align*}
     Now if we set $T = O(1/\varepsilon^2)$ and choose minibatch size so that $n/\log n = O\left( \frac{\beta^2(2B+J)^2(1+2\rho)^2}{\varepsilon^2}\right)$, we obtain the upper bound of $O(\varepsilon)$.
\end{proof}

\begin{proposition}[Restatement of proposition 3 from \cite{LCDS20}, see also \cite{Lan12}, Corollary 1]\label{prop:biased-sgd}
    Let $F: \Theta \rightarrow \R$ and $\bar{F}: \Theta \rightarrow \R$ satisfy $0 \le F(\theta) - \bar{F}(\theta) \le \delta$ for all $\theta \in \Theta$. Assume that $\bar{F}$ is convex and a stochastic gradient estimator $\tilde{g}$ satisfies $\E \tilde{g}(\theta) \in \partial \bar{F}(\theta)$ and $\E \norm{\tilde{g}(\theta)}^2 \le \Gamma^2$ for all $\theta \in \Theta$. Suppose we run stochastic gradient descent with iterates
    $$
    \theta_{t+1} = \Pi_\Theta(\theta_t - \eta \tilde{g}(\theta_t)).
    $$
    and step-size $\eta = \frac{R}{\Gamma \sqrt{T}}$. Then we have,
    $$
    \E \left[ F\left( \frac{1}{T}\sum_{t=1}^T \theta_t\right) \right] - \min_\theta F(\theta) \lesssim \delta + \frac{\Gamma R}{\sqrt{T}}.
    $$
\end{proposition}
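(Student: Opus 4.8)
The plan is to prove this as the standard convergence guarantee for projected stochastic subgradient descent on the \emph{surrogate} convex function $\bar F$ --- the function for which $\tilde{g}$ is an unbiased (sub)gradient estimator --- and then transfer the resulting bound to $F$ via the sandwich $\bar F \le F \le \bar F + \delta$. Throughout I assume $\Theta$ is convex and that a minimizer $\theta^\star \in \argmin_\theta F(\theta)$ lies in $\Theta$, and I take $R$ to bound the initial distance $\norm{\theta_1 - \theta^\star}$ (equivalently, the diameter of $\Theta$).

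First I would establish the one-step inequality. Fixing the comparator $\theta^\star$ and using non-expansiveness of the Euclidean projection onto the convex set $\Theta$,
$$\norm{\theta_{t+1}-\theta^\star}^2 \le \norm{\theta_t - \eta \tilde{g}(\theta_t) - \theta^\star}^2 = \norm{\theta_t-\theta^\star}^2 - 2\eta \langle \tilde{g}(\theta_t), \theta_t-\theta^\star\rangle + \eta^2\norm{\tilde{g}(\theta_t)}^2.$$
Taking expectation conditioned on $\theta_t$ and writing $\bar g_t := \E[\tilde{g}(\theta_t)\mid \theta_t] \in \partial \bar F(\theta_t)$, convexity of $\bar F$ gives $\langle \bar g_t, \theta_t-\theta^\star\rangle \ge \bar F(\theta_t) - \bar F(\theta^\star)$, while the second-moment hypothesis gives $\E[\norm{\tilde{g}(\theta_t)}^2\mid\theta_t]\le \Gamma^2$. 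Rearranging and taking full expectations yields
$$\E[\bar F(\theta_t)] - \bar F(\theta^\star) \le \frac{1}{2\eta}\left(\E\norm{\theta_t-\theta^\star}^2 - \E\norm{\theta_{t+1}-\theta^\star}^2\right) + \frac{\eta\Gamma^2}{2}.$$

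Next I would sum over $t=1,\dots,T$, telescope the squared-distance terms (discarding the nonnegative final term), divide by $T$, and use $\norm{\theta_1-\theta^\star}\le R$ together with Jensen's inequality for the convex $\bar F$ applied to $\bar\theta = \tfrac1T\sum_t\theta_t$, to obtain $\E[\bar F(\bar\theta)] - \bar F(\theta^\star) \le \tfrac{R^2}{2\eta T} + \tfrac{\eta\Gamma^2}{2}$. Substituting the prescribed $\eta = R/(\Gamma\sqrt T)$ balances the two terms and gives $\E[\bar F(\bar\theta)] - \bar F(\theta^\star) \le R\Gamma/\sqrt T$.

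Finally I would transfer from $\bar F$ to $F$: the upper sandwich $F(\bar\theta)\le \bar F(\bar\theta)+\delta$ and the lower sandwich $\bar F(\theta^\star)\le F(\theta^\star) = \min_\theta F(\theta)$ combine to yield
$$\E[F(\bar\theta)] - \min_\theta F(\theta) \le \left(\E[\bar F(\bar\theta)] - \bar F(\theta^\star)\right) + \delta \le \frac{R\Gamma}{\sqrt T} + \delta,$$
which is the claim. The only conceptually nontrivial step is this last transfer: the iterates are driven by subgradients of $\bar F$, not $F$, so the martingale/telescoping analysis naturally produces a guarantee for $\bar F$; the fact that $F$ sits above $\bar F$ by at most $\delta$ is exactly what permits converting both the value at the averaged iterate and the optimum with a single additive loss $\delta$. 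Everything else is the textbook averaged-iterate argument, so the main care is bookkeeping --- ensuring $\theta^\star$ is a minimizer of $F$ (not of $\bar F$) throughout, and that convexity and unbiasedness are invoked only for $\bar F$.
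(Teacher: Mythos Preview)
Your proof is correct. The paper does not actually supply its own proof of this proposition: it is stated as a restatement of Proposition~3 of \cite{LCDS20} (with a pointer to Corollary~1 of \cite{Lan12}) and is used as a black box in the proofs of the reward-estimation and DPO convergence theorems. Your argument is the standard averaged-iterate projected SGD analysis on the convex surrogate $\bar F$ followed by the $\delta$-sandwich transfer to $F$, which is exactly the content of those cited results, so there is nothing to contrast.
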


\subsection{Robust Policy Optimization}
Let us define $v(\theta; x,y) = r(x,y) - \beta \log \frac{\pi_\theta(y|x)}{\pi_\tref(y|x)}$ and $v(\theta; x) = \E_{y \sim \pi_\theta(\cdot|x)}[v(\theta; x,y)]$. Furthermore, given a divergence function $\phi$ let us define the following objectives.
\begin{align}
    v_\phi(\theta;x^n) &= \inf_{q \in \Delta^n: \frac{1}{n}\sum_{i=1}^n \phi(nq_i) \le \rho} \sum_{i=1}^n q_i \cdot v(\theta;x_i)\label{defn:v_phi}\\
    \bar{v}_\phi(\theta;n) &= \E_{x^n \sim \calD_\src^n} [v_\phi(\theta;x^n)]\nonumber
\end{align}
In order to motivate the natural policy gradient step in \cref{alg:dist-RLHF-policy} let us define the following loss function.
\begin{align*}
    L(w,\theta;x^n) = \sum_i q_i^\star \cdot \mathop{\E}_{y \sim \pi_\theta(\cdot|x_i)}\left[ \left( v(\theta; x_i,y)  - w^\top \nabla_\theta \log \pi_\theta(y_i|x)\right)^2\right]
\end{align*}
where $\{q_i^\star\}_{i=1}^n$ is the optimal solution to the optimization problem define in \cref{defn:v_phi}. Let us define the vector $w^{\pi_\theta}_{x^n}$ as the optimal solution of the following optimization problem.
\begin{align}\label{defn:vec-w-pi-theta}
    w^{\pi_\theta}_{x^n} = \argmin_{w} L(w,\theta;x^n)
\end{align}
Let us define $G^{\pi_\theta}_{x^n}$ to be the weighted Fisher information matrix i.e.
$$
G^{\pi_\theta}_{x^n} = \sum_{i=1}^n q_i^\star \cdot \mathop{\E}_{y \sim \pi_\theta(\cdot|x_i)}\left[\nabla_\theta \log \pi_\theta(y|x_i) \nabla_\theta \log \pi_\theta(y|x_i)^\top\right] 
$$
Then \Cref{lem:derivation_w} shows that $w^{\pi_\theta}_{x^n} = \left( G^{\pi_\theta}_{x^n} \right)^\dagger \nabla_\theta v_\phi(\theta;x^n) $. In other words, \cref{alg:dist-RLHF-policy} performs the following gradient steps for policy optimization.
\begin{align*}
    \theta_{t+1} = \theta_t + \eta_t \cdot w_t \quad \textrm{ where } w_t \in \argmin_{w} L\left(w,\theta;\{x_i\}_{x_i \in B_t} \right)
\end{align*}
We now follow a Lyapunov potential based approach which is standard in the analysis of natural policy gradient based algorithms~\cite{CHS24}. Let us define the following potential function.
$$
\Phi(\pi) = \sum_x \calD_\src(x) \sum_y \pi^\star(y|x) \log \frac{\pi^\star(y|x)}{\pi(y|x)} 
$$

\subsection{Proof of \Cref{thm:convergence_robust_npg}}\label{sec:robust_policy_optimization}
\begin{proof}
    The main ingredient is \Cref{lem:bound-potential}, which shows that if the number of iterations $T \ge \log (1/\varepsilon_1 \cdot \log \abs{Y}) $ and the number of minibatch samples $n \ge \frac{\sqrt{\varepsilon_\apx}(\beta B^2 + r_{\max})^2}{\eta \beta^2 q^2_{\min} \sigma^2} \frac{\log(\log \log (\abs{Y}/\varepsilon_1)/\delta)}{\varepsilon_1^2}$ then we have $\Phi(\pi_T) \le O(\varepsilon_1)$ with probability at least $1-\delta$. Now \cref{lem:suboptimality-gap} implies the following bound on the suboptimality gap.
    $$
    v_\tv(\theta^\star;\calD_\src) - v_\tv(\theta_T;\calD_\src) \le 2\sqrt{2}(r_{\max} + 3\beta B^2) \sqrt{1 + 4\rho^2/\nu}\sqrt{\varepsilon_1}
    $$
    Now substituting $\varepsilon_1 = \frac{\varepsilon^2}{1+4\rho^2/\nu} \frac{1}{8(r_{\max} + 3\beta B^2)^2}$ we obtain the desired bound on the number of iterations $T$, and the minibatch size $n$.
\end{proof}

The next lemma provides a bound on $\Phi(\pi_T)$ provided the number of iterations $T$ is large (at least $O(\log \log (1/\varepsilon))$, and the number of samples per minibatch $n$ is at least $O(\log\log\log(1/\varepsilon)/\varepsilon^2)$.

\begin{lemma}\label{lem:bound-potential}
    Suppose \cref{asn:concentrability} and \cref{asn:smalles-singular-value} hold, and $q^\star_{t,i} \ge q_{\min}$ for all $t$. If the robust policy optimization (\cref{alg:dist-RLHF-policy})  is run for $T \ge O\left(\log \left(\frac{\log\abs{Y}}{\varepsilon}\right) \right)$ iterations, and $n \ge \max\set{\frac{\sqrt{\epsilon_\apx}}{\varepsilon \cdot q^2_{\min}}, \frac{(\beta B^2 + r_{\max})^2}{\eta \beta^2 q^2_{\min} \sigma^2} \frac{\log(\log \log (\abs{Y}/\varepsilon)/\delta)}{\varepsilon^2}}$ minibatch samples per iteration. Then the final policy $\pi_T = \pi_{\theta_T}$ satisfies,
    $$
    \Phi(\pi_T) \le O(\varepsilon)
    $$
with probability at least $1-\delta$,    provided $\eta < \min\set{\frac{1}{2\beta q_{\min}}, \frac{ q_{\min} r_{\min} \sigma^2}{ (6\beta B^2 + r_{\max})^2}}$ and $\beta < \frac{r_{\min}}{8B^2}$.
\end{lemma}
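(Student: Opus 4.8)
The plan is to establish a one-step contraction for the potential $\Phi(\pi_t)$ and then unroll it as a geometric series. The starting point is the exact identity for the change in potential under the log-linear parametrization (\cref{asn:log-linear-policy}): writing $\pi_\theta(y|x)=\exp(\theta^\top\psi(x,y))/Z_\theta(x)$ and using the update $\theta_{t+1}=\theta_t+\eta w_t$, I would expand
\begin{align*}
\Phi(\pi_{t+1}) - \Phi(\pi_t) = \sum_x \calD_\src(x)\sum_y \pi^\star(y|x)\log\frac{\pi_t(y|x)}{\pi_{t+1}(y|x)},
\end{align*}
and rewrite the log-ratio as $-\eta\, w_t^\top\psi(x,y) + \log(Z_{\theta_{t+1}}(x)/Z_{\theta_t}(x))$. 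The log-partition difference is controlled by a second-order (smoothness) estimate, contributing an $O(\eta^2\norm{w_t}^2)$ term; the step-size constraint $\eta<\tfrac{1}{2\beta q_{\min}}$ is precisely what keeps this term subordinate and the multiplicative factor below $1$.

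Next I would invoke \cref{lem:derivation_w}, which identifies $w_t$ with the population NPG direction $(G^{\pi_{\theta_t}}_{x^n})^\dagger\nabla_\theta v_\phi(\theta_t;x^n)$ and shows that $w_t^\top\nabla_\theta\log\pi_{\theta_t}(y|x)$ approximates the soft (KL-regularized) advantage of $\pi_{\theta_t}$ under the worst-case weights $q^\star_t$. Substituting this into the potential change and applying a soft performance-difference identity for the robust value function $v_\phi$ produces the leading contractive term: the $-\beta\log(\pi_\theta/\pi_\tref)$ regularization yields the multiplicative factor $(1-\eta\beta q_{\min})$ on $\Phi(\pi_t)$ — with $q_{\min}$ entering because the worst-case weights reweight each prompt's contribution relative to $\calD_\src$ — together with a $-\tfrac{\eta}{n}\Delta_t$ term capturing the suboptimality gap $\Delta_t\ge 0$, which I will simply drop for an upper bound. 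The condition $\beta<r_{\min}/(8B^2)$ is what keeps the relevant advantage lower bound sign-definite here.

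The main obstacle will be controlling the errors introduced by replacing population quantities with their size-$n$ minibatch estimates, because the worst-case weights $q^\star_t$, the gradient $g_t$, and the weighted Fisher matrix $G^{\pi_{\theta_t}}$ are all computed on the same minibatch and are coupled through it, and the NPG direction $w_t=(G^{\pi_{\theta_t}})^\dagger g_t$ is a \emph{nonlinear} (pseudo-inverse) function of these estimates. I would bound the deviation of $w_t$ from the NPG direction of the minibatch-level objective $\bar{v}_\phi(\theta_t;n)$ by (i) a matrix-concentration bound (e.g.\ matrix Bernstein) showing $G^{\pi_{\theta_t}}$ concentrates around its expectation, and (ii) a vector-concentration bound for $g_t$; \cref{asn:smalles-singular-value}, guaranteeing $\sigma_{\min}(G^{\pi_{\theta_t}})\ge\sigma$, then converts these into an $O(1/\sigma)$-stable perturbation bound on the pseudo-inverse. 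Propagating this through the potential change and using the boundedness of rewards (\cref{asn:bdd_rewards}) and $\norm{\theta}\le B$ yields the statistical residual $\tilde{O}(\sqrt{\eta/n})$, with the $\log(\cdot/\delta)$ factor in the sample requirement being exactly the cost of the high-probability concentration event. Separately, the compatible-function-approximation residual — the gap between the minimized surrogate $L(w_t,\theta_t;x^n)$ and the true soft advantage — contributes the term $\tfrac{\eta\sqrt{\apx}}{n\,q_{\min}}$.

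Finally I would assemble the recurrence
\begin{align*}
\Phi(\pi_{t+1}) \le (1-\eta\beta q_{\min})\,\Phi(\pi_t) + \frac{\eta\sqrt{\apx}}{n\,q_{\min}} + \tilde{O}\!\left(\sqrt{\tfrac{\eta}{n}}\right),
\end{align*}
and unroll it geometrically. Since $\Phi(\pi_1)\le\log\abs{Y}$, the contractive term drops below $\varepsilon$ once $T\ge O(\log(\log\abs{Y}/\varepsilon))$; dividing the constant error terms by the geometric-sum factor $\eta\beta q_{\min}$ and forcing each below $\varepsilon$ pins down $n\ge\tfrac{\sqrt{\apx}}{\varepsilon\,q^2_{\min}}$ from the approximation term and $n\ge\tilde{O}\!\big(\tfrac{(\beta B^2+r_{\max})^2}{\eta\beta^2 q^2_{\min}\sigma^2\varepsilon^2}\big)$ from the statistical term, matching the stated $\max$. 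The remaining step-size constraint $\eta<\tfrac{q_{\min}r_{\min}\sigma^2}{(6\beta B^2+r_{\max})^2}$ is what is needed to absorb the second-order and perturbation terms into the contraction so that the effective factor stays in $(0,1)$.
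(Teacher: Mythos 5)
Your skeleton (one-step recurrence, contraction factor $(1-\eta\beta q_{\min})$ coming from the KL regularizer weighted by the worst-case weights, geometric unrolling, and the final assembly of $T$ and $n$) matches the paper's proof, which goes through \cref{lem:phi-recursion}. But there is a genuine gap in how you dispose of the second-order term $\frac{\eta^2}{2}\norm{w_t}_2^2$. You claim this term (and the perturbation terms) can be ``absorbed into the contraction so that the effective factor stays in $(0,1)$'' via the step-size condition $\eta<\frac{q_{\min}r_{\min}\sigma^2}{(6\beta B^2+r_{\max})^2}$, and indeed your final recurrence contains no $\eta^2$ term. That absorption is impossible: $\frac{\eta^2}{2}\norm{w_t}_2^2$ is a positive \emph{additive} quantity, of order $\eta^2(6\beta B^2+r_{\max})^2/\sigma^2$ by \cref{asn:smalles-singular-value} and \cref{lem:derivation_w}, not a multiple of $\Phi(\pi_t)$, so it cannot be folded into the multiplicative factor. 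If you keep it and unroll, the geometric sum leaves a residual floor of order $\frac{\eta}{\beta q_{\min}}\norm{w_t}_2^2$ that vanishes with neither $T$ nor $n$; forcing it below $\varepsilon$ requires $\eta=O(\varepsilon)$, which in turn forces $T\ge \Omega\left(\varepsilon^{-1}\log(\log\abs{Y}/\varepsilon)\right)$ and destroys the claimed linear convergence. The paper's mechanism is different: the recurrence of \cref{lem:phi-recursion} carries the \emph{negative} term $-\frac{\eta}{n}\sum_i q^\star_{t,i}\, v(\theta_t;x_i)$, and since $\beta<\frac{r_{\min}}{8B^2}$ gives $v(\theta_t;x_i)\ge r_{\min}/2>0$, this term is at most $-\eta q_{\min}r_{\min}/2$; the step-size bound $\eta\le\frac{q_{\min}r_{\min}\sigma^2}{(6\beta B^2+r_{\max})^2}$ is then exactly what makes $\frac{\eta^2}{2}\norm{w_t}_2^2$ cancel against it. So the role of $\beta<r_{\min}/(8B^2)$ is positivity of the KL-regularized \emph{value} (not sign-definiteness of an advantage, as you state), and without this cancellation your recurrence, as written, does not follow from the one-step analysis.

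A secondary divergence: for the statistical error you propose matrix concentration for $G^{\pi_{\theta_t}}$, vector concentration for $g_t$, and a pseudo-inverse perturbation argument. The paper never needs this; it keeps $\Delta_t$, the weights $q^\star_t$, and the compatible-function-approximation loss $L$ entirely at the minibatch level, and only applies scalar Chernoff--Hoeffding bounds to two empirical averages (the potential difference and the empirical KL against $\Phi(\pi_t)$), combined with \cref{asn:concentrability} and Cauchy--Schwarz. Your route would additionally have to define a population analogue of the worst-case weights $q^\star_t$ --- which is precisely where the $O(\sqrt{\log n / n})$ bias issue of \cref{lem:bias-tv-minibatch} would reappear --- and handle the minibatch coupling you yourself flag as the main obstacle; the paper's formulation sidesteps both.
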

\begin{proof}
    
\Cref{lem:phi-recursion} proves that the following recurrence holds for each iteration $t$ with probability at least $1-\delta/t^2$.
\begin{align}
    \Phi(\pi_{t+1}) &\le \left(1 - \eta \beta q_{\min} \right) \Phi(\pi_t) - \frac{\eta}{n} \Delta_t   - \frac{\eta}{n} \sum_i q^\star_{t,i} \cdot v(\theta_t;x_i)\nonumber \\
    &+ O\left(\sqrt{\frac{(\eta^2 \norm{w_t}_2^2 + \eta \norm{w_t}_2) \log(t/\delta)}{n}} \right)  +  \frac{\eta^2}{2} \norm{w_t}_2^2 + \frac{\eta}{q_{\min} \cdot n}  \sqrt{C \cdot L(w_t;\theta_t, \{x_i\}_{i \in B_t}) } \label{eq:prelim-recurrence}
\end{align}
Therefore, by a union bound over all iterations the above bound holds for all $t\ge 1$. Using \cref{asn:smalles-singular-value} and \cref{lem:derivation_w} we obtain the following bound on the norm of the vector $w_t$.
\begin{align*}
    \norm{w_t}_2 \le \frac{1}{\sigma} \norm{\nabla_\theta v_\phi(\theta;x^n)}_2 \le \frac{1}{\sigma} \sum_{i=1}^n q^\star_{t,i} \norm{\nabla_\theta v(\theta_t;x_i)}_2 \le \frac{1}{\sigma} \sum_{i=1}^n q^\star_{t,i} \norm{\E_{y \sim \pi_{\theta_t}(\cdot|x_i)}[\nabla_\theta \log \pi_{\theta_t}(y|x_i) (v(\theta;x_i,y) - \beta)] }_2
\end{align*}
Under log-linear policy, $\beta \cdot \E_{y \sim \pi_{\theta_t}(\cdot|x_i)}[\nabla_\theta \log \pi_{\theta_t}(y|x_i)] = \beta \cdot \sum_y \pi_{\theta_t}(y|x_i) \left( \psi(x_i,y) - \sum_{y'} \pi_{\theta_t}(y'|x_i) \psi(x_i,y')\right) = 0$. Therefore, we obtain
\begin{align}\label{eq:intermediate-vec-w}
    \norm{w_t}_2 \le  \frac{1}{\sigma} \sum_{i=1}^n q^\star_{t,i} \norm{\E_{y \sim \pi_{\theta_t}(\cdot|x_i)}[\nabla_\theta \log \pi_{\theta_t}(y|x_i) v(\theta_t;x_i,y)  ] }_2
\end{align}
Now recall, that $v(\theta_t;x_i,y) = r(x_i,y) - \beta \log \frac{\pi_{\theta_t}(y|x_i)}{\pi_{\tref}(y|x_i)}$. If $\pi_{\tref}$ is also log-linear then from the smoothness of $\log \pi_\theta(y|x)$ (e.g. \cite{agarwal2021theory}, lemma 34) we obtain the following bound.
$$
\log \frac{\pi_{\theta_t}(y|x_i)}{\pi_{\tref}(y|x_i)} \le \norm{\theta_t - \theta_{\tref}}_2^2 - \nabla_\theta \log \pi_{\theta_t}(y|x_i)^\top (\theta_{\tref} - \theta_t) \le 2B^2 + 4B
$$
The last inequality uses $\norm{\nabla_\theta \log \pi_\theta(y|x_i)}_2 \le 2$. Therefore, for any $(x_i,y)$ and $\theta$ we have $\abs{v(\theta;x_i,y)} \le 6\beta B^2 + r_{\max}$. Substituting this bound in \cref{eq:intermediate-vec-w}, we get the following bound on the norm of $w_t$.
\begin{align}\label{eq:upper_bound_norm_w}
    \norm{w_t}_2 \le \frac{6\beta B^2 + r_{\max}}{\sigma}
\end{align}
We now establish a lower bound on the term $\sum_i q^\star_{t,i} \cdot v(\theta_t;x_i)$.
\begin{align*}
    v(\theta_t;x_i) = \sum_y \pi_{\theta_t}(y|x_i) \left(r(x_i,y) - \beta \log \frac{\pi_{\theta_t}(y|x_i)}{\pi_{\tref}(y|x_i)}  \right) \ge r_{\min} - 4 \beta B^2 \ge \frac{r_{\min}}{2}
\end{align*}
as long as $\beta \le \frac{r_{\min}}{8B^2}$. Therefore, we obtain the following lower bound.
\begin{align}\label{eq:lower_bound_value}
    \sum_i q^\star_{t,i} \cdot v(\theta_t;x_i) \ge \frac{r_{\min}}{2} \cdot n q_{\min}
\end{align}
Substituting the bounds of \cref{eq:upper_bound_norm_w} and \cref{eq:lower_bound_value} in \cref{eq:prelim-recurrence} and rearranging we obtain the following recurrence relation.
\begin{align}
    \Phi(\pi_{t+1}) &\le \left(1 - \eta \beta q_{\min} \right) \Phi(\pi_t) - \frac{\eta}{n} \Delta_t   - \eta q_{\min} \frac{ r_{\min} }{2} \nonumber \\
    &+ O\left(\frac{\beta B^2 + r_{\max}}{\sigma}\sqrt{\frac{ \eta  \log(t/\delta)}{n}} \right)  +  \frac{\eta^2}{2} \left( \frac{6\beta B^2 + r_{\max}}{\sigma}\right)^2 + \frac{\eta}{q_{\min} \cdot n}  \sqrt{C \cdot L(w_t;\theta_t, \{x_i\}_{i \in B_t}) } 
\end{align}
If $\eta \le \frac{ q_{\min} r_{\min} \sigma^2}{ (6\beta B^2 + r_{\max})^2}$ then we have,
$$
- \eta q_{\min} \frac{r_{\min}}{2} + \frac{\eta^2}{2} \left( \frac{6\beta B^2 + r_{\max}}{\sigma}\right)^2 \le 0
$$
and we obtain a new recurrence relation.

\begin{align}\label{eq:new_recurrence_relation}
    \Phi(\pi_{t+1}) &\le \left(1 - \eta \beta q_{\min} \right) \Phi(\pi_t) - \frac{\eta}{n} \Delta_t    + \frac{\eta}{n \cdot q_{\min}} \sqrt{\epsilon_{\apx}}  + O\left(\frac{\beta B^2 + r_{\max}}{\sigma}\sqrt{\frac{ \eta  \log(t/\delta)}{n}} \right) 
\end{align}

Now using $\Phi(\pi_0) \le \log \abs{Y}$ and induction we obtain the following upper bound.
\begin{align*}
    \Phi(\pi_{t+1}) &\le (1-\eta \beta q_{\min})^{t+1} \log \abs{Y} - \frac{\eta}{n}\sum_{k=0}^t (1-\eta \beta q_{\min})^{t-k} \Delta_k \\
    &+ \eta \sum_{k=0}^t (1-\eta \beta q_{\min})^{t-k}  \frac{1}{n \cdot q_{\min}} \sqrt{\epsilon_{\apx}} \\
    &+ \sqrt{\eta} \sum_{k=0}^t (1-\eta \beta q_{\min})^{t-k} O\left(\frac{\beta B^2 + r_{\max}}{\sigma}\sqrt{\frac{   \log(k/\delta)}{n}} \right)
\end{align*}
Therefore, for any $t \le T$ we obtain the following upper bound on the potential function.
\begin{align*}
    \Phi(\pi_{t+1}) \le (1-\eta \beta q_{\min})^{t+1} \log \abs{Y} + \frac{\sqrt{\epsilon_\apx}}{n \cdot q_{\min}^2 } + O\left(\frac{1}{\sqrt{\eta} \beta q_{\min}} \frac{\beta B^2 + r_{\max}}{\sigma}\sqrt{\frac{   \log(T/\delta)}{n}} \right)
\end{align*}
In particular, if we choose $\eta$ such that $\eta < \frac{1}{2\beta q_{\min} }$ and $T \ge O\left(\log \left(\frac{\log\abs{Y}}{\varepsilon}\right) \right)$ then we get
$$
\Phi(\pi_T) \le O(\varepsilon) + \frac{\sqrt{\epsilon_\apx}}{n \cdot q_{\min}^2 } + O\left(\frac{1}{\sqrt{\eta} \beta q_{\min}} \frac{\beta B^2 + r_{\max}}{\sigma}\sqrt{\frac{   \log(\log \log (\abs{Y}/\varepsilon)/\delta)}{n}} \right)
$$
Finally, choosing $n \ge \max\set{\frac{\sqrt{\epsilon_\apx}}{\varepsilon \cdot q^2_{\min}}, \frac{(\beta B^2 + r_{\max})^2}{\eta \beta^2 q^2_{\min} \sigma^2} \frac{\log(\log \log (\abs{Y}/\varepsilon)/\delta)}{\varepsilon^2}}$ gives us $\Phi(\pi_T) \le \varepsilon$.

\end{proof}

The next lemma shows that if the potential $\Phi(\pi)$ of a policy is small, then the policy $\pi$ is approximately optimal.
\begin{lemma}\label{lem:suboptimality-gap}
    Suppose $\Phi({\pi}_{\theta}) \le \varepsilon$ and $\calD_\src(x) \ge \nu$ for any $x$. Then $$v_\tv(\theta^\star;\calD_\src) - v_\tv(\theta;\calD_\src) \le 2\sqrt{2}\left( r_{\max} + 3\beta B^2\right) \sqrt{(1+4\rho^2/\nu) \varepsilon}.$$
\end{lemma}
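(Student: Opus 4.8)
The plan is to strip the robustness by evaluating both policies against a single worst-case distribution, reduce to a pointwise (per-prompt) value comparison controlled by a local KL term, and then transport that KL estimate from the adversarial distribution back to $\calD_\src$, where the potential $\Phi$ already controls it. First I would let $\calD_\theta$ be the minimizer in $v_\tv(\theta;\calD_\src) = \inf_{\calD: d_\tv(\calD,\calD_\src)\le\rho} v(\theta;\calD)$, so that $v_\tv(\theta;\calD_\src) = v(\theta;\calD_\theta)$. Since $\calD_\theta$ is also feasible for $\theta^\star$, we have $v_\tv(\theta^\star;\calD_\src)\le v(\theta^\star;\calD_\theta)$, and hence
$$
v_\tv(\theta^\star;\calD_\src) - v_\tv(\theta;\calD_\src) \le v(\theta^\star;\calD_\theta) - v(\theta;\calD_\theta) = \E_{x\sim\calD_\theta}\left[v(\theta^\star;x) - v(\theta;x)\right].
$$
This removes the inner infimum and leaves an ordinary value gap evaluated under the fixed distribution $\calD_\theta$.

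Next I would obtain a pointwise bound. Writing $\pi^\star = \pi_{\theta^\star}$ and inserting $\log\frac{\pi^\star}{\pi_\tref} = \log\frac{\pi^\star}{\pi_\theta} + \log\frac{\pi_\theta}{\pi_\tref}$ into the definition of $v(\cdot;x)$ gives the algebraic identity
$$
v(\theta^\star;x) - v(\theta;x) = -\beta\,\kl(\pi^\star(\cdot|x)\|\pi_\theta(\cdot|x)) + \sum_y\left(\pi^\star(y|x) - \pi_\theta(y|x)\right) v(\theta;x,y).
$$
Dropping the non-positive first term and bounding the sum by $M\,\norm{\pi^\star(\cdot|x) - \pi_\theta(\cdot|x)}_1$, where $M := r_{\max}+3\beta B^2$ is the uniform bound on $|v(\theta;x,y)|$ (this reuses the $O(B^2)$ log-ratio estimate established earlier in the proof of \cref{thm:convergence_robust_npg} under the log-linear class), followed by Pinsker's inequality, yields $v(\theta^\star;x) - v(\theta;x) \le M\sqrt{2\,\kl_x}$ with $\kl_x := \kl(\pi^\star(\cdot|x)\|\pi_\theta(\cdot|x))$.

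It then remains to relate $\E_{x\sim\calD_\theta}[\sqrt{\kl_x}]$ to $\Phi(\pi_\theta) = \E_{x\sim\calD_\src}[\kl_x]\le\varepsilon$. Applying Cauchy--Schwarz directly with weight $\calD_\src$,
$$
\E_{x\sim\calD_\theta}\left[\sqrt{\kl_x}\right] = \sum_x \frac{\calD_\theta(x)}{\sqrt{\calD_\src(x)}}\,\sqrt{\calD_\src(x)\,\kl_x} \le \sqrt{\sum_x\frac{\calD_\theta(x)^2}{\calD_\src(x)}}\cdot\sqrt{\Phi(\pi_\theta)},
$$
and the first factor equals $\sqrt{1 + \chi^2(\calD_\theta\|\calD_\src)}$. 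Using $\calD_\src(x)\ge\nu$ together with $\sum_x|\calD_\theta(x)-\calD_\src(x)| = 2\,d_\tv(\calD_\theta,\calD_\src)\le 2\rho$ gives $\chi^2(\calD_\theta\|\calD_\src)\le \frac{1}{\nu}\left(\sum_x|\calD_\theta(x)-\calD_\src(x)|\right)^2 \le 4\rho^2/\nu$. Chaining the three steps produces
$$
v_\tv(\theta^\star;\calD_\src) - v_\tv(\theta;\calD_\src) \le \sqrt{2}\,M\,\sqrt{(1+4\rho^2/\nu)\,\varepsilon},
$$
which matches the claimed estimate up to the explicit numerical constant.

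The step I expect to be most delicate is the change of measure: the naive route of splitting $\calD_\theta = \calD_\src + (\calD_\theta - \calD_\src)$ and bounding the two pieces separately only yields the weaker factor $(1+2\rho/\sqrt{\nu})$, so one must instead apply Cauchy--Schwarz directly against the weight $\calD_\theta$ to land the sharper $\sqrt{1+4\rho^2/\nu}$ appearing in the statement. The pointwise identity is routine, but it does hinge on the uniform boundedness constant $M$, which is precisely where \cref{asn:log-linear-policy} and the earlier log-ratio bound are used; the assumption $\calD_\src(x)\ge\nu$ is what keeps the $\chi^2$ factor finite.
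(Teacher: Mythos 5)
Your proposal is correct and follows essentially the same route as the paper's proof: fix the worst-case distribution attaining $v_\tv(\theta;\calD_\src)$ and use its feasibility for $\theta^\star$, decompose the per-prompt gap into a droppable KL term plus a difference term bounded by the sup-norm of $v(\theta;x,y)$ (via the log-linear smoothness bound on $\log\frac{\pi_\theta}{\pi_\tref}$), then combine Pinsker and Cauchy--Schwarz against $\calD_\src$ with the $\chi^2 \le 4\rho^2/\nu$ estimate; swapping the order of Pinsker and Cauchy--Schwarz is immaterial. The only blemish is your claim that $|v(\theta;x,y)| \le r_{\max}+3\beta B^2$ (the paper's own bound is $r_{\max}+6\beta B^2$), but this is absorbed by the factor $2\sqrt{2}$ in the lemma's stated constant, so the conclusion stands.
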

\begin{proof}
    \begin{align*}
        v_\tv(\theta^\star;\calD_\src) - v_\tv(\theta;\calD_\src) = \inf_{\calD: d_\tv(\calD,\calD_\src)\le \rho} v(\theta^\star;\calD) - \inf_{\calD: d_\tv(\calD,\calD_\src)\le \rho} v(\theta;\calD)
    \end{align*}
    Now suppose $\calD_0$ attains the optimal value in the second optimization problem. Then we have,
    \begin{align*}
        &v_\tv(\theta^\star;\calD_\src) - v_\tv(\theta;\calD_\src)\\ &\le v(\theta^\star;\calD_0) - v(\theta;\calD_0)\\
        &= \sum_{x} \calD_0(x) \left(\E_{y \sim \pi_{\theta^\star}(\cdot|x)}\left[ v(\theta^\star;x,y) \right] - \E_{y \sim \pi_\theta(\cdot|x)}\left[ v(\theta;x,y) \right] \right)\\
        &= \sum_x \calD_0(x) \left( \sum_y (\pi_{\theta^\star}(y|x) - \pi_\theta(y|x) ) r(x,y) \right)\\
        &-\beta  \sum_x \calD_0(x)\sum_y \left( \pi_{\theta^\star}(y|x) \log \frac{\pi_{\theta^\star}(y|x)}{\pi_\tref(y|x)} - \pi_{\theta }(y|x) \log \frac{\pi_{\theta }(y|x)}{\pi_\tref(y|x)}\right)\\
        &\le 2 r_{\max} \sum_x \calD_0(x) \cdot d_{\tv}\left( \pi_{\theta^\star}(\cdot|x), \pi_\theta(\cdot|x)\right) - \beta \sum_x \calD_0(x) d_{\kl}\left( \pi_{\theta^\star}(\cdot|x), \pi_\theta(\cdot|x)\right)\\
        &-\beta \sum_x \calD_0(x) \sum_y \left( \pi_{\theta^\star}(y|x) - \pi_{\theta}(y|x) \right) \log \frac{\pi_{\theta }(y|x)}{\pi_\tref(y|x)}
    \end{align*}
    If $\pi_{\tref}$ is also log-linear then from the smoothness of $\log \pi_\theta(y|x)$ (e.g. \cite{agarwal2021theory}, lemma 34) we obtain the following bound.
$$
\log \frac{\pi_{\theta}(y|x_i)}{\pi_{\tref}(y|x_i)} \le \norm{\theta - \theta_{\tref}}_2^2 - \nabla_\theta \log \pi_{\theta}(y|x_i)^\top (\theta_{\tref} - \theta) \le 2B^2 + 4B
$$
This gives us the following upper bound.
\begin{align*}
     &v_\tv(\theta^\star;\calD_\src) - v_\tv(\theta;\calD_\src)\\
     &\le  2 r_{\max} \sum_x \calD_0(x) \cdot d_{\tv}\left( \pi_{\theta^\star}(\cdot|x), \pi_\theta(\cdot|x)\right) - \beta \sum_x \calD_0(x) d_{\kl}\left( \pi_{\theta^\star}(\cdot|x), \pi_\theta(\cdot|x)\right)\\
        &+ 6\beta B^2  \sum_x \calD_0(x)  \cdot d_{\tv}\left( \pi_{\theta^\star}(\cdot|x), \pi_\theta(\cdot|x)\right) \\
        &\le \left( 2r_{\max} + 6\beta B^2\right) \sum_x \calD_0(x) \cdot d_{\tv}\left( \pi_{\theta^\star}(\cdot|x), \pi_\theta(\cdot|x)\right) \\
        &\le \left( 2r_{\max} + 6\beta B^2\right) \sqrt{\sum_x \frac{\calD_0^2(x)}{\calD_\src(x)}} \sqrt{\sum_x \calD_\src(x) \cdot d^2_{\tv}\left( \pi_{\theta^\star}(\cdot|x), \pi_\theta(\cdot|x)\right)}\\
        &\le 2\sqrt{2}\left( r_{\max} + 3\beta B^2\right) \sqrt{1 + d_{\chi^2}(\calD_0,\calD_\src)}\sqrt{\sum_x \calD_\src(x) \cdot d_{\kl}\left( \pi_{\theta^\star}(\cdot|x), \pi_\theta(\cdot|x)\right)}\\
        &= 2\sqrt{2}\left( r_{\max} + 3\beta B^2\right) \sqrt{1 + d_{\chi^2}(\calD_0,\calD_\src)}\sqrt{\Phi(\pi_\theta)}
\end{align*}
The last inequality uses Pinsker's inequality. Now the bound follows from the following observation. $d_{\chi^2}(\calD_0,\calD_\src) \le \frac{1}{\nu}\sum_x (\calD_0(x) - \calD_\src(x) )^2 \le \frac{1}{\nu} (d_{\tv}(\calD_0,\calD_\src))^2 \le 4\rho^2/\nu$.
\end{proof}

\begin{lemma}\label{lem:phi-recursion}
    Suppose $q^\star_{t,i} \ge {q}_{\min}$, and \cref{asn:concentrability} holds. Then the sequence of policies $\{\pi_t = \pi_{\theta_t}\}_{t\ge 1}$ generated by \cref{alg:dist-RLHF-policy} satisfy the following recurrence relation with probability at least $1-\delta/t^2$,
    \begin{align*}
    \Phi(\pi_{t+1}) &\le \left(1 - \eta \beta q_{\min} \right) \Phi(\pi_t) - \frac{\eta}{n} \Delta_t   - \frac{\eta}{n} \sum_i q^\star_{t,i} \cdot v(\theta_t;x_i)\\
    &+ O\left(\sqrt{\frac{(\eta^2 \norm{w_t}_2^2 + \eta \norm{w_t}_2) \log(t/\delta)}{n}} \right)  +  \frac{\eta^2}{2} \norm{w_t}_2^2 + \frac{\eta}{q_{\min} \cdot n}  \sqrt{C \cdot L(w_t;\theta_t, \{x_i\}_{i \in B_t}) } 
    \end{align*}
    where $\Delta_t = v_\phi(\theta^\star; x^n) - v_\phi(\theta_t; x^n) $.
\end{lemma}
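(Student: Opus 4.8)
The plan is a one-step Lyapunov (potential) analysis of the weighted natural policy gradient step, tracking the drift of $\Phi$ along the lines of entropy-regularized NPG~\cite{CHS24}, but carrying the adversarial weights $q^\star_t$ and the minibatch sampling error through every estimate. First I would write the one-step difference as $\Phi(\pi_{t+1}) - \Phi(\pi_t) = \sum_x \calD_\src(x)\sum_y \pi^\star(y|x)\log\frac{\pi_t(y|x)}{\pi_{t+1}(y|x)}$. Since $\theta_{t+1} = \theta_t + \eta w_t$ and, under the log-linear class, $\log\pi_\theta(y|x) = \theta^\top\psi(x,y) - \log Z_\theta(x)$, the ratio splits into a first-order term $-\eta w_t^\top\psi(x,y)$ and the log-partition ratio $\log\frac{Z_{t+1}(x)}{Z_t(x)} = \log\E_{y\sim\pi_t(\cdot|x)}[\exp(\eta w_t^\top\psi(x,y))]$. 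Bounding the latter by a second-order (Hoeffding/log-MGF) inequality, using $\abs{w_t^\top\psi(x,y)}\le\norm{w_t}_2$ since $\norm{\psi}_2\le 1$, gives $\log\frac{Z_{t+1}(x)}{Z_t(x)}\le \eta\E_{y\sim\pi_t}[w_t^\top\psi(x,y)] + \frac{\eta^2}{2}\norm{w_t}_2^2$, which produces the $\frac{\eta^2}{2}\norm{w_t}_2^2$ term. Combining with the score-function identity $\nabla_\theta\log\pi_t(y|x) = \psi(x,y) - \E_{y'\sim\pi_t}[\psi(x,y')]$ collapses the two first-order contributions into $-\eta\sum_x\calD_\src(x)\E_{y\sim\pi^\star(\cdot|x)}[w_t^\top\nabla_\theta\log\pi_t(y|x)] + \frac{\eta^2}{2}\norm{w_t}_2^2$.

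Next I would invoke the compatible-function-approximation structure: by \cref{lem:derivation_w}, $w_t$ minimizes the weighted least-squares loss $L(w,\theta_t;x^n)$, so $w_t^\top\nabla_\theta\log\pi_t(y|x)$ approximates the advantage $A_t(x,y) = v(\theta_t;x,y) - v(\theta_t;x)$. Replacing the former by the latter inside the expectation, the residual is controlled by a change of measure from $\pi^\star$ to $\pi_t$: Cauchy–Schwarz gives a factor $\sqrt{\E_{y\sim\pi_t}[(\pi^\star/\pi_t)^2]}$, bounded by the concentrability constant $C$ of \cref{asn:concentrability}, times the square root of $\E_{y\sim\pi_t}[(w_t^\top\nabla_\theta\log\pi_t - A_t)^2]$, whose weighted version is exactly $L(w_t;\theta_t,\{x_i\}_{i\in B_t})$. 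Accounting for the $q^\star$-weighting (each weight at least $q_{\min}$) and the minibatch normalization produces the $\frac{\eta}{q_{\min} n}\sqrt{C\cdot L(w_t;\theta_t,\{x_i\}_{i\in B_t})}$ term. I would then split $A_t = \big(r(x,y) - \beta\log\frac{\pi_t(y|x)}{\pi_\tref(y|x)}\big) - v(\theta_t;x)$. The KL part reshapes into the contraction: using $\E_{y\sim\pi^\star}[\log\frac{\pi_t}{\pi^\star}] = -\kl(\pi^\star\,\|\,\pi_t)$ and summing against $\calD_\src(x)$ recovers $-\eta\beta q_{\min}\Phi(\pi_t)$, the factor $q_{\min}$ arising from relating the $\calD_\src$-weighted potential to the $q^\star_t$-weighted gradient. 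The reward-plus-baseline part becomes, via the weighted performance-difference identity, the suboptimality-gap term $-\frac{\eta}{n}\Delta_t$ with $\Delta_t = v_\phi(\theta^\star;x^n) - v_\phi(\theta_t;x^n)$, together with the baseline term $-\frac{\eta}{n}\sum_i q^\star_{t,i} v(\theta_t;x_i)$.

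Finally, all population expectations over $\calD_\src$ must be replaced by their empirical minibatch averages, since \cref{alg:dist-RLHF-policy} computes $w_t$ and the weights on the sample $x^n$ of size $n$. I would bound this sampling error with a high-probability concentration inequality (Bernstein/Freedman for the relevant bounded terms), where the range is controlled by $\eta\norm{w_t}_2$ and the variance proxy by $\eta^2\norm{w_t}_2^2$, giving the combined error $O\big(\sqrt{(\eta^2\norm{w_t}_2^2 + \eta\norm{w_t}_2)\log(t/\delta)/n}\big)$ at confidence $1-\delta/t^2$; the choice $t^{-2}$ is made so that the later union bound over iterations costs only a constant multiple of $\delta$. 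Crucially, $\norm{w_t}_2$ is bounded deterministically by $(6\beta B^2 + r_{\max})/\sigma$ through \cref{asn:smalles-singular-value}, so the concentration constants are uniform in $t$.

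The main difficulty, and the step I expect to be most delicate, is the reconciliation in the second paragraph: $\Phi$ is weighted by the fixed reference $\calD_\src$, whereas the ascent direction $w_t$ is fit to the adversarial, sample-dependent weights $q^\star_t$. Tracking this mismatch so that the KL part cleanly yields the contraction factor $(1-\eta\beta q_{\min})$ and the reward part yields exactly $\Delta_t$, while simultaneously keeping the compatible-approximation residual in the form $\sqrt{C\cdot L}$ under the correct change of measure, is where the $q_{\min}$ factors, the $1/n$ normalizations, and \cref{asn:concentrability} interact and must be bookkept carefully.
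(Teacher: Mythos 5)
Your proposal follows essentially the same route as the paper's proof: the one-step potential decomposition with a second-order smoothness bound on $\log\frac{\pi_t(y|x)}{\pi_{t+1}(y|x)}$, Chernoff--Hoeffding concentration to pass from the $\calD_\src$-expectation to the minibatch average with error $O\big(\sqrt{(\eta^2\norm{w_t}_2^2+\eta\norm{w_t}_2)\log(t/\delta)/n}\big)$, Cauchy--Schwarz plus \cref{asn:concentrability} and $q^\star_{t,i}\ge q_{\min}$ to extract $\frac{\eta}{q_{\min}n}\sqrt{C\cdot L}$, and the advantage/KL decomposition of $\Delta_t$ (using optimality of $q^\star_t$) yielding the contraction $(1-\eta\beta q_{\min})\Phi(\pi_t)$ and the baseline term. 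The only cosmetic differences are that you derive the smoothness bound inline via the log-partition function where the paper cites it as a known lemma, and you phrase the least-squares residual against the advantage $A_t$ rather than against $v(\theta_t;x,y)$ as in the paper's definition of $L$, which is immaterial because the score function has zero mean under $\pi_t$, so both targets give the same minimizer and your residual is bounded by $L$.
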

\begin{proof}
First, using log-linear policy class (\cref{asn:log-linear-policy}) and smoothness of $\log \pi_\theta(y|x)$ (c.f. \cite{agarwal2021theory}, lemma 34) we obtain the  following bound.
\begin{align}\label{eq:smoothness-bound}
\log \frac{\pi_t(y|x)}{\pi_{t+1}(y|x)} = \log \frac{\pi_{\theta_t}(y|x)}{\pi_{\theta_t + \eta w_t}(y|x)} \le \frac{\eta^2}{2} \norm{w_t}_2^2 - \eta \nabla_\theta \log \pi_t(y|x)^\top w_t
\end{align}
\begin{align}
    \Phi(\pi_{t+1}) - \Phi(\pi_t) &= \sum_x \calD_\src(x) \sum_y \pi^\star(y|x) \log \frac{\pi_t(y|x)}{\pi_{t+1}(y|x)} \nonumber \\
    &\le O\left(\sqrt{\frac{(\eta^2 \norm{w_t}_2^2 + \eta \norm{w_t}_2) \log(t/\delta)}{n}} \right) + \frac{1}{n}\sum_{i=1}^n \sum_y \pi^\star(y|x_i) \log \frac{\pi_t(y|x_i)}{\pi_{t+1}(y|x_i)}\nonumber \\
    &\le O\left(\sqrt{\frac{(\eta^2 \norm{w_t}_2^2 + \eta \norm{w_t}_2) \log(t/\delta)}{n}} \right) + \eta^2 \frac{\norm{w_t}_2^2}{2} - \frac{\eta}{n} \sum_i \sum_y \pi^\star(y|x_i) \nabla_\theta^\top \log \pi_t(y|x) w_t\nonumber \\
    &= O\left(\sqrt{\frac{(\eta^2 \norm{w_t}_2^2 + \eta \norm{w_t}_2) \log(t/\delta)}{n}} \right) + \eta^2 \frac{\norm{w_t}_2^2}{2} \underbrace{- \frac{\eta}{n} \sum_i \sum_y \pi^\star(y|x_i) \left( \nabla_\theta^\top \log \pi_t(y|x) w_t - v(\theta;x_i,y)  \right)}_{:=T_1}\nonumber \\
    &-\frac{\eta}{n} \sum_i \sum_y \pi^\star(y|x_i)  v(\theta;x_i,y)   - \frac{\eta}{n} \Delta_t + \frac{\eta}{n} \Delta_t \label{eq:potential-diff-first-bound}
\end{align}
The first inequality uses two facts (1) The set $B_t = \set{x_i: i \in [n]}$ is a uniformly random drawn sample of size $n$ from the distribution $\calD_\src$ and for each $x$, and (2) for each $x$, the term $\sum_y \pi^\star(y|x) \log \frac{\pi_t(y|x)}{\pi_{t+1}(y|x)} \le O(\eta^2 \norm{w_t}_2^2 + \eta \norm{w_t}_2)$. Therefore, we can use Chernoff-Hoeffding bound and get a bound on the approximation error i.e.
\begin{equation*}
\resizebox{\linewidth}{!}{
$
\Pro\left(\sum_x \calD_\src(x) \sum_y \pi^\star(y|x) \log \frac{\pi_t(y|x)}{\pi_{t+1}(y|x)} - \frac{1}{n}\sum_{i=1}^n   \sum_y \pi^\star(y|x_i) \log \frac{\pi_t(y|x_i)}{\pi_{t+1}(y|x_i)}\ge O\left(\sqrt{\frac{(\eta^2 \norm{w_t}_2^2 + \eta \norm{w_t}_2) \log(t/\delta)}{n}} \right) \right) \le \frac{\delta}{t^2}
$
}
\end{equation*}
The second inequality uses \cref{eq:smoothness-bound}, and the final equality in the derivation of \cref{eq:potential-diff-first-bound} substitutes $\Delta_t = v_\phi(\theta^\star; x^n) - v_\phi(\theta_t; x^n)$. We bound the term $T_1$ using \cref{asn:concentrability}.
\begin{align*}
    T_1 &\le \frac{\eta}{n} \sum_i \sum_y \pi^\star(y|x_i) \cdot \abs{\nabla_\theta^\top \log \pi_t(y|x) w_t - v(\theta;x_i,y)   }\\
    &\le \frac{\eta}{q_{\min} \cdot n} \sum_{i} q^\star_{t,i} \sum_y \pi^\star(y|x_i) \cdot \abs{\nabla_\theta^\top \log \pi_t(y|x) w_t - v(\theta;x_i,y)   }\\
    &\le \frac{\eta}{q_{\min} \cdot n} \sum_{i} q^\star_{t,i} \sum_y \frac{\pi^\star(y|x_i) }{\sqrt{\pi_t(y|x_i)}} \cdot \sqrt{\pi_t(y|x_i)} \cdot \abs{\nabla_\theta^\top \log \pi_t(y|x) w_t - v(\theta;x_i,y)   }\\
    &\le \frac{\eta}{q_{\min} \cdot n} \sum_{i} q^\star_{t,i} \sqrt{\sum_y \pi_t(y|x_i) \left( \frac{\pi^\star(y|x_i)}{\pi_t(y|x_i) }\right)^2} \sqrt{\sum_y \pi_t(y|x_i) \left(\nabla_\theta^\top \log \pi_t(y|x) w_t - v(\theta;x_i,y)   \right)^2 }\\
    &\le \frac{\eta}{q_{\min} \cdot n}  \sqrt{\sum_i q^\star_{t,i} \cdot \E_{y \sim \pi_t(\cdot |x_i)}\left[ \left( \frac{\pi^\star(y|x_i)}{\pi_t(y|x_i)}\right)^2\right] } \sqrt{\sum_i q^\star_{t,i} \cdot \E_{y \sim \pi_t(y|x_i)}\left[ \left(\nabla_\theta^\top \log \pi_t(y|x) w_t - v(\theta;x_i,y)   \right)^2 \right]}\\
    &\le \frac{\eta}{q_{\min}\cdot n} \sqrt{C \cdot L(w_t; \theta_t, \{x_i\}_{i \in B_t})}
\end{align*}
We now bound the term $\Delta_t$. 
\begin{align*}
     \Delta_t &= \left( v_\phi(\theta^\star, x^n) - v_\phi(\theta_t; x^n)\right)\\
    &= \inf_{q \in \Delta^n; d_\tv(q,1/n) \le \rho} \sum_i q_i \cdot v(\theta^\star; x_i) - \inf_{q \in \Delta^n; d_\tv(q,1/n) \le \rho} \sum_i q_i \cdot v(\theta_t; x_i)\\
    &\le \sum_{i} q^\star_{t,i} \left( v(\theta^\star, x_i) - v(\theta_t,x_i)\right)\\
    &= \sum_i q^\star_{t,i} \sum_y \pi_{\theta^\star}(y|x_i) \left( r(x_i,y) - \beta \log \frac{\pi_{\theta^\star}(y|x_i)}{\pi_\tref(y|x_i)} - v(\theta_t;x_i)\right)
\end{align*}
The first inequality follows from the fact that $\{q^\star_{t,i}\}_{i\in [n]}$ optimizes the second minimization problem. Now let us define the following notion of a soft advantage function.
\begin{align}\label{defn:advantage}
A(\theta_t;x_i,y) = r(x_i,y) - \beta \log \frac{\pi_{\theta_t}(y|x_i)}{\pi_\tref(y|x_i)} - v(\theta_t;x_i)
\end{align}
Then we obtain the following upper bound on $\Delta_t$.
\begin{align*}
    \Delta_t &\le \sum_i q^\star_{t,i} \sum_y \pi_{\theta^\star}(y|x_i) \left( A(\theta_t;x_i,y) - \beta \log \frac{\pi_{\theta^\star}(y|x_i)}{\pi_{\theta_t}(y|x_i)}\right)\\
    &\le \sum_i q^\star_{t,i} \sum_y \pi_{\theta^\star}(y|x_i)  A(\theta_t;x_i,y) - n \beta q_{\min}  \frac{1}{n}\sum_{i=1}^n \sum_y \pi_{\theta^\star}(y|x_i) \log \frac{\pi_{\theta^\star}(y|x_i)}{\pi_t(y|x_i)}\\
    &\le \sum_i q^\star_{t,i} \sum_y \pi_{\theta^\star}(y|x_i)  A(\theta_t;x_i,y) - n \beta q_{\min} \left( \Phi(\pi_t) - O\left( \sqrt{\frac{\log(t/\delta)}{n}}\right)\right)
\end{align*}
The last inequality uses the fact that the set $B_t = \set{x_i: i \in [n]}$ is a uniformly random set of size $n$ and one can apply Chernoff-Hoeffding bound to replace the sample average with the mean $\Phi(\pi_t)$ and an additive error of $\tilde{O}(1/\sqrt{n})$.
Substituting the upper bound above in \cref{eq:potential-diff-first-bound} we obtain the following upper bound on the potential difference.
\begin{align*}
\Phi(\pi_{t+1}) - \Phi(\pi_t) &\le O\left(\sqrt{\frac{(\eta^2 \norm{w_t}_2^2 + \eta \norm{w_t}_2) \log(t/\delta)}{n}} \right)  +  \frac{\eta^2}{2} \norm{w_t}_2^2 + \frac{\eta}{q_{\min} \cdot n}  \sqrt{C \cdot L(w_t;\theta_t, \{x_i\}_{i \in B_t}) } \\&-\frac{\eta}{n} \sum_i \sum_y \pi^\star(y|x_i)  v(\theta;x_i,y)    - \frac{\eta}{n} \Delta_t + \frac{\eta}{n} \sum_i q^\star_{t,i} \sum_y \pi_{\theta^\star}(y|x_i)  A(\theta_t;x_i,y) - \eta \beta q_{\min} \cdot \Phi(\pi_t)\\
&\le O\left(\sqrt{\frac{(\eta^2 \norm{w_t}_2^2 + \eta \norm{w_t}_2) \log(t/\delta)}{n}} \right)  +  \frac{\eta^2}{2} \norm{w_t}_2^2 + \frac{\eta}{q_{\min} \cdot n}  \sqrt{C \cdot L(w_t;\theta_t, \{x_i\}_{i \in B_t}) } \\
&-\frac{\eta}{n} \sum_i q^\star_{t,i} \sum_y \pi^\star(y|x_i)  v(\theta_t;x_i,y)    - \frac{\eta}{n} \Delta_t + \frac{\eta}{n} \sum_i q^\star_{t,i} \sum_y \pi_{\theta^\star}(y|x_i)  A(\theta_t;x_i,y) - \eta \beta q_{\min} \cdot \Phi(\pi_t)\\
&\le O\left(\sqrt{\frac{(\eta^2 \norm{w_t}_2^2 + \eta \norm{w_t}_2) \log(t/\delta)}{n}} \right)  +  \frac{\eta^2}{2} \norm{w_t}_2^2 + \frac{\eta}{q_{\min} \cdot n}  \sqrt{C \cdot L(w_t;\theta_t, \{x_i\}_{i \in B_t}) } \\
&-\frac{\eta}{n} \sum_i q^\star_{t,i}  \cdot v(\theta_t;x_i)    - \frac{\eta}{n} \Delta_t  - \eta \beta q_{\min} \cdot \Phi(\pi_t)
\end{align*}
The last inequality uses the definition of advantage function~\cref{defn:advantage}. 

\end{proof}

\begin{lemma}\label{lem:derivation_w}
Let $\{q_i^\star\}_{i=1}^n$ be the optimal solution to the optimization problem~\cref{defn:v_phi}. 
    $$w^{\pi_\theta}_{x^n} = \left( \sum_{i=1}^n q_i^\star \cdot \mathop{\E}_{y \sim \pi_\theta(\cdot|x_i)}\left[\nabla_\theta \log \pi_\theta(y|x_i) \nabla_\theta \log \pi_\theta(y|x_i)^\top\right] \right)^\dagger \nabla_\theta v_\phi(\theta;x^n)$$
\end{lemma}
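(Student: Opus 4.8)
The plan is to recognize $L(w,\theta;x^n)$ as a weighted least-squares objective, convex and quadratic in $w$, so that its minimizers are exactly the solutions of the associated normal equations. First I would differentiate in $w$ (writing $g_{i,y}=\nabla_\theta\log\pi_\theta(y|x_i)$):
\begin{align*}
\nabla_w L(w,\theta;x^n) = -2\sum_i q_i^\star\,\E_{y\sim\pi_\theta(\cdot|x_i)}\!\left[\left(v(\theta;x_i,y) - w^\top g_{i,y}\right)g_{i,y}\right].
\end{align*}
Setting this to zero and using $\E[(w^\top g_{i,y})g_{i,y}] = \E[g_{i,y}g_{i,y}^\top]\,w$ gives the normal equations $G^{\pi_\theta}_{x^n}\,w = b$, where the matrix multiplying $w$ is exactly the weighted Fisher matrix $G^{\pi_\theta}_{x^n}$ and
\begin{align*}
b = \sum_i q_i^\star\,\E_{y\sim\pi_\theta(\cdot|x_i)}\!\left[v(\theta;x_i,y)\,g_{i,y}\right].
\end{align*}

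The crux is to identify $b$ with $\nabla_\theta v_\phi(\theta;x^n)$. I would first invoke the envelope theorem (Danskin) for the inner minimization defining $v_\phi$: since the feasible set $\{q:\tfrac1n\sum_i \phi(nq_i)\le\rho\}$ does not depend on $\theta$ and the objective is linear in $q$, the gradient passes through the optimizer, yielding $\nabla_\theta v_\phi(\theta;x^n) = \sum_i q_i^\star\,\nabla_\theta v(\theta;x_i)$ with $q^\star$ frozen at its optimal value --- the very same weights entering $L$ and $G^{\pi_\theta}_{x^n}$. Next I would expand $\nabla_\theta v(\theta;x_i)$ on $v(\theta;x_i)=\sum_y \pi_\theta(y|x_i)\,v(\theta;x_i,y)$ by the product and log-derivative rules. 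This produces the policy-gradient term $\E[v(\theta;x_i,y)\,g_{i,y}]$ together with a term from differentiating $v(\theta;x_i,y)=r(x_i,y)-\beta\log\frac{\pi_\theta(y|x_i)}{\pi_\tref(y|x_i)}$ inside the expectation, equal to $-\beta\,\E[g_{i,y}]$. The latter vanishes because the expected score is zero, $\E_{y\sim\pi_\theta(\cdot|x_i)}[g_{i,y}]=0$, leaving precisely $\nabla_\theta v(\theta;x_i)=\E[v(\theta;x_i,y)\,g_{i,y}]$. Summing against $q^\star$ then gives $b=\nabla_\theta v_\phi(\theta;x^n)$, so the normal equations read $G^{\pi_\theta}_{x^n}\,w=\nabla_\theta v_\phi(\theta;x^n)$.

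Finally I would conclude via the pseudo-inverse. As $G^{\pi_\theta}_{x^n}$ is symmetric positive semidefinite, the minimizer set of the convex quadratic $L$ coincides with the affine solution set of the normal equations, whose minimum-norm element is $\big(G^{\pi_\theta}_{x^n}\big)^\dagger\nabla_\theta v_\phi(\theta;x^n)$. It remains only to check that this vector solves $G^{\pi_\theta}_{x^n}\,w=b$ exactly, i.e.\ that $b\in\mathrm{range}(G^{\pi_\theta}_{x^n})$; this holds because $b$ is a weighted linear combination of the same score vectors $g_{i,y}$ whose weighted outer products span $\mathrm{range}(G^{\pi_\theta}_{x^n})$, giving the claimed identity for $w^{\pi_\theta}_{x^n}$.

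The main obstacle I anticipate is the interplay between the $\theta$-dependence of the worst-case weights $q^\star$ and the gradient of $v_\phi$: one must justify --- via Danskin/envelope theorem, with the usual differentiability or measurable-selection caveats for the linear program in $q$ --- that differentiating $q^\star(\theta)$ contributes no extra term, so that a single $q^\star$ consistently defines $L$, $G^{\pi_\theta}_{x^n}$, and $\nabla_\theta v_\phi$. The vanishing-score identity and the range condition for the pseudo-inverse are then routine.
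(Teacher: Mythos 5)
Your proof is correct and takes essentially the same route as the paper's: first-order optimality (normal equations) for the weighted least-squares problem, the envelope/Danskin step giving $\nabla_\theta v_\phi(\theta;x^n)=\sum_i q_i^\star\,\nabla_\theta v(\theta;x_i)$, the vanishing expected score to kill the $-\beta$ term, and then inversion via the pseudo-inverse. If anything, you are more careful than the paper on two points it glosses over: checking that the right-hand side lies in $\mathrm{range}\left(G^{\pi_\theta}_{x^n}\right)$ so that the pseudo-inverse formula genuinely solves the normal equations (with the minimum-norm convention resolving non-uniqueness of the argmin when $G^{\pi_\theta}_{x^n}$ is singular), and flagging the differentiability caveats in the Danskin step.
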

\begin{proof}
    From the first-order optimality conditions related to the optimization problem defined in \cref{defn:vec-w-pi-theta}, we have
    \begin{align}\label{eq:first-order-condition}
    \sum_i q_i^\star \cdot \mathop{\E}_{y \sim \pi_\theta(\cdot|x_i)}\left[ \nabla_\theta \log \pi_\theta(y_i|x)\left( v(\theta; x_i,y) -  w^\top \nabla_\theta \log \pi_\theta(y_i|x)\right) \right] = 0
    \end{align}
    Moreover, since $\{q_i^\star\}_{i=1}^n$ are the optimal weights we have 
    \begin{align*}
    \nabla_\theta v_\phi(\theta; x^n) = \sum_{i=1}^n q_i^\star \cdot \nabla_\theta v(\theta; x_i) &= \sum_{i=1}^n q_i^\star \cdot \mathop{\E}_{y \sim \pi_\theta(\cdot|x_i)}\left[\nabla_\theta \log \pi_\theta(y|x_i)  \left( v(\theta;x_i,y) - \beta  \right) \right]\\
    &= \sum_{i=1}^n q_i^\star \cdot \mathop{\E}_{y \sim \pi_\theta(\cdot|x_i)}\left[\nabla_\theta \log \pi_\theta(y|x_i)    v(\theta;x_i,y)   \right]
    \end{align*}
    The last equality uses the fact that under log-linear policy, $$\beta \E_{y \sim \pi_\theta(\cdot|x_i)}[\nabla_\theta \log \pi_\theta(y|x_i)] = \beta \cdot \sum_{y} \pi_\theta(y|x_i) \left( \psi(x_i,y) - \sum_{y'} \pi_\theta(y'|x_i) \psi(x_i,y')\right) = 0.$$
    Now rearranging \cref{eq:first-order-condition} we get the desired expression for the vector $w^{\pi_\theta}_{x^n}$.
\end{proof}
\section{Experiment Details}\label{sec:app:experiments}

\renewcommand{\textfraction}{0.3}
\renewcommand{\floatpagefraction}{0.7}
\renewcommand{\topfraction}{0.6}
\renewcommand{\bottomfraction}{0.6}

We train our models on a single machine with 8$\times$H100 GPUs.
Approx.\ time to train \texttt{google/gemma-2b-it}:
\begin{itemize}
    \item a reward model (with the 400k dataset): 6 h
    \item PPO policy (with the 5k dataset): 10 min
    \item DPO policy (with the 400k dataset): 5 h
\end{itemize}

For training we use the \texttt{trl} library by Hugging Face.

The tables show experiment results with the \texttt{google/gemma-2b-it} model, unless stated otherwise.

Note: in the provided code $\rho$ is called \texttt{EPS} or \texttt{eps}, and DR PPO \textit{version B} is called \texttt{solve\_pilossgrad} or \textit{solve $\pi$ loss gradient}, due to an earlier version of the manuscript.

\subsection{Reward Models}

\begin{figure}[b]
    \centering
    \includegraphics[width=0.6\linewidth]{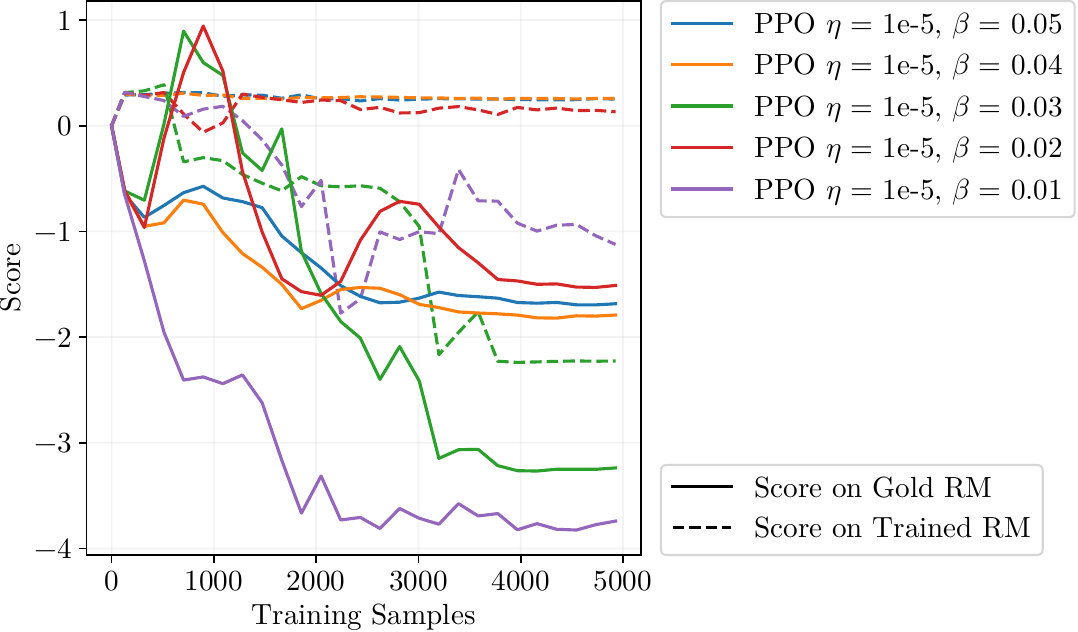}
    \caption{Choice of KL regularization coefficient $\beta$ for PPO experiments.}
    \label{fig:choice-kl}
\end{figure}

\begin{figure}[tbp]
    \centering
    \includegraphics[width=0.7\linewidth]{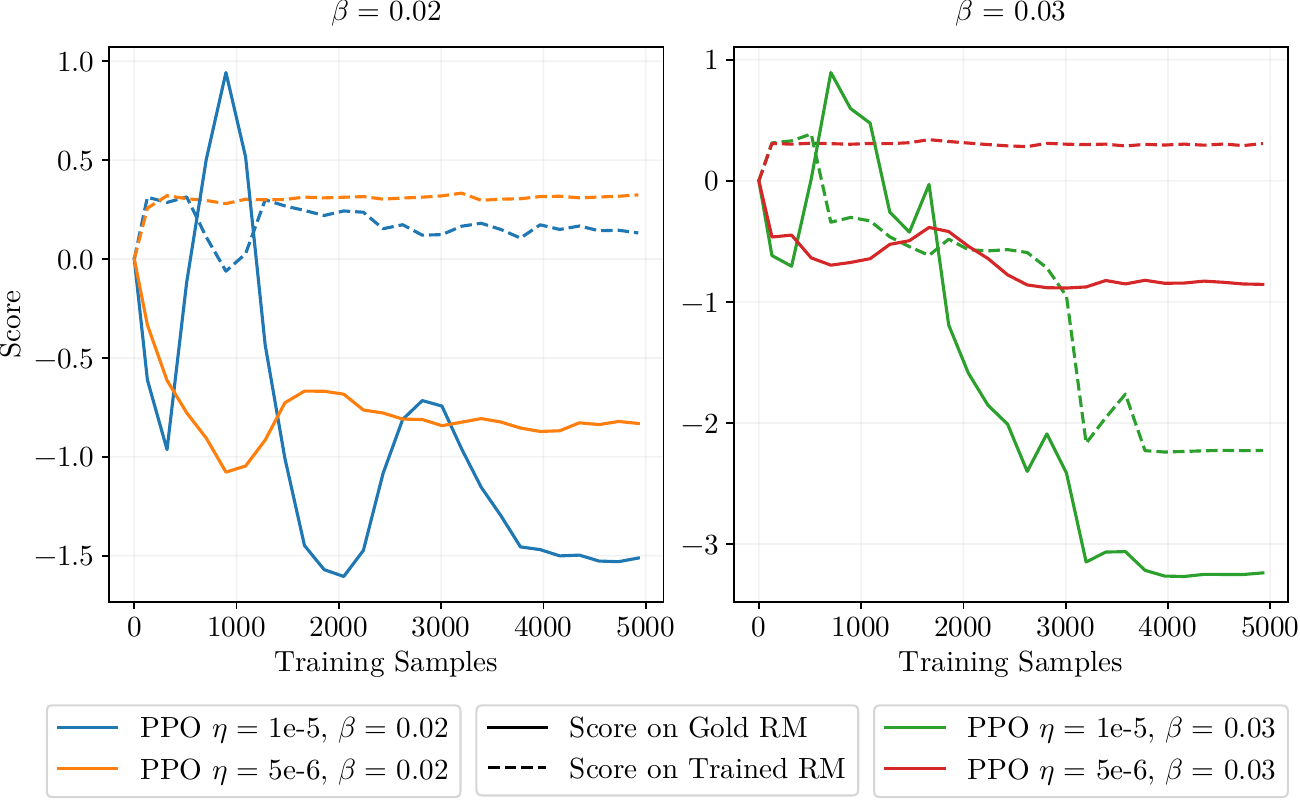}
    \caption{Choice of learning rate $\eta$ for PPO experiments.}
    \label{fig:choice-lr}
\end{figure}

We do not perform any hyperparameter tuning for the reward models, and use values suggested by \citet{YDLZ+24}.
We train the reward models with the hyperparameters (otherwise using defaults from \texttt{trl}) indicated in \Cref{tab:rm_hyperpars}.
\begin{table}[!ht]
    \centering
    \caption{Hyperparameters used for reward model experiments.}
    \vskip 0.15in
    \begin{tabular}{lc}
        \toprule
        Hyperparameter & Value \\
        \midrule
        Base model & \texttt{google/gemma-2b-it} \\
        Number of epochs & 2 \\
        Learning rate $\eta$ & 1e-5 \\
        Torch dtype & \texttt{bfloat16} \\
        Attention implementation & \texttt{flash\_attention\_2} \\
        Per device train batch size & 16 \\
        Learning rate scheduler type & cosine \\
        Warmup ratio & 0.03 \\
        Weight decay & 0 \\
        Optimizer & \texttt{adamw\_hf} \\
        Maximum sequence length & 1024 \\
        LoRA task type & \texttt{SEQ\_CLS} \\
        LoRA $r$ & 32 \\
        LoRA alpha & 64 \\
        LoRA dropout & 0.05 \\
        \bottomrule
    \end{tabular}
    \label{tab:rm_hyperpars}
\end{table}

\newcommand{\colorthree}[3]{%
  \colorizeValue{74}{77}{#1} &
  \colorizeValue{74}{85}{#2} &
  \colorizeValue{71.9}{75.8}{#3}%
}

\begin{table}[htbp]
\centering
\caption{
    Evaluation of the reward models on Unified-Feedback \citep[ID,][]{jiang2024unifiedfeedback}, HHH Alignment \citep[OOD,][]{askell2021hhhalignment}, and MT Bench \citep[OOD,][]{zheng2023mtbench} benchmarks.
    Trained on a 400K item subset of the Unified-Feedback dataset \citep{jiang2024unifiedfeedback} for 2 epochs.
    Comparison of different dataset versions -- unfiltered indicates the training dataset was used as is; no \texttt{hh-rlhf} indicates the \texttt{hh-rlhf} subset was filtered out; no \texttt{chatbot\_arena\_conversations} indicates the \texttt{chatbot\_arena\_conversations} subset was filtered out.
}\label{tab:rm-eval-app}
\vskip 0.15in
\begin{tabular}{lC{16mm}C{16mm}C{16mm}}
    \toprule
    \multirow{1}{*}[-0.65em]{Reward Model} & Unified-Feedback & HHH Alignment & \multirow{1}{*}[-0.65em]{MT Bench} \\
    \midrule \multicolumn{4}{c}{\small unfiltered; TV dist.} \\ \midrule
    \btbase                         & \colorthree{76.5}{83.3}{72.7} \\
    \dr RM $\rho = 0.1$            & \colorthree{76.9}{83.3}{73.6} \\
    \dr RM $\rho = 0.2$             & \colorthree{76.4}{82.4}{72.9} \\
    \dr RM $\rho = 0.4$             & \colorthree{76.2}{81.0}{72.9} \\
    \dr RM $\rho = 0.6$             & \colorthree{76.0}{83.8}{73.5} \\
    \dr RM $\rho = 0.8$             & \colorthree{75.6}{82.4}{72.1} \\
    \dr RM $\rho = 1.0$             & \colorthree{75.4}{83.3}{71.9} \\
    \midrule \multicolumn{4}{c}{\small no \texttt{hh-rlhf}; TV dist.} \\ \midrule
    \btbase & \colorthree{74.3}{78.2}{75.8} \\
    \dr RM $\rho = 0.02$ & \colorthree{75.3}{77.8}{73.5} \\
    \dr RM $\rho = 0.1$ & \colorthree{75.7}{79.6}{73.2} \\
    \dr RM $\rho = 0.2$ & \colorthree{75.6}{80.6}{74.4} \\
    \dr RM $\rho = 0.4$ & \colorthree{75.8}{76.9}{74.1} \\
    \dr RM $\rho = 0.6$ & \colorthree{75.3}{78.2}{72.7} \\
    \dr RM $\rho = 0.8$ & \colorthree{75.6}{80.6}{74.9} \\
    \dr RM $\rho = 1.0$ & \colorthree{72.5}{74.1}{69.0} \\
    \midrule \multicolumn{4}{c}{\small no \texttt{hh-rlhf}; $\chi^2$ dist.} \\ \midrule
    \btbase & \colorthree{75.5}{79.6}{73.6} \\
    \dr RM $\rho = 0.05$ & \colorthree{75.4}{80.1}{72.7} \\
    \dr RM $\rho = 0.1$ & \colorthree{75.4}{79.2}{72.2} \\
    \dr RM $\rho = 0.2$ & \colorthree{75.8}{78.7}{74.4} \\
    \dr RM $\rho = 0.3$ & \colorthree{75.3}{78.2}{72.7} \\
    \dr RM $\rho = 0.4$ & \colorthree{73.6}{76.4}{71.8} \\
    \dr RM $\rho = 0.5$ & \colorthree{72.1}{75.5}{70.2} \\
    \midrule \multicolumn{4}{c}{\small no \texttt{chatbot\_arena\_conversations}; TV dist.} \\ \midrule
    \btbase & \colorthree{76.1}{83.8}{73.1} \\
    \dr RM $\rho = 0.02$ & \colorthree{76.2}{82.9}{72.9} \\
    \dr RM $\rho = 0.1$ & \colorthree{75.6}{84.3}{72.9} \\
    \dr RM $\rho = 0.2$ & \colorthree{75.9}{84.3}{72.4} \\
    \dr RM $\rho = 0.4$ & \colorthree{74.0}{80.6}{72.1} \\
    \dr RM $\rho = 0.6$ & \colorthree{71.0}{76.4}{69.4} \\
    \dr RM $\rho = 0.8$ & \colorthree{72.0}{75.9}{69.8} \\
    \dr RM $\rho = 1.0$ & \colorthree{71.4}{75.9}{69.8} \\
    \bottomrule
\end{tabular}
\vskip -0.1in
\end{table}

\subsubsection{Subsets on RewardBench}\label{sec:app:subsets}

\begin{tabular}{llll}
    \multicolumn{4}{l}{The subset names are:} \\[0.5em]
    \textbf{Chat} & \textbf{Chat-Hard} & \textbf{Safety} & \textbf{Reasoning} \\
    1. \texttt{alpacaeval-easy} & 6. \texttt{mt-bench-hard} & 12. \texttt{refusals-dangerous} & 17. \texttt{math-prm} \\
    2. \texttt{alpacaeval-length} & 7. \texttt{llmbar-natural} & 13. \texttt{refusals-offensive} & 18. \texttt{hep-cpp} \\
    3. \texttt{alpacaeval-hard} & 8. \texttt{llmbar-adver-neighbor} & 14. \texttt{xstest-should-refuse} & 19. \texttt{hep-go} \\
    4. \texttt{mt-bench-easy} & 9. \texttt{llmbar-adver-GPTInst} & 15. \texttt{xstest-should-respond} & 20. \texttt{hep-java} \\
    5. \texttt{mt-bench-med} & 10. \texttt{llmbar-adver-GPTOut} & 16. \texttt{donotanswer} & 21. \texttt{hep-js} \\
    & 11. \texttt{llmbar-adver-manual} &  & 22. \texttt{hep-python} \\
    & & & 23. \texttt{hep-rust}
\end{tabular}

\newcommand{\colormthhhold}[3]{%
  \colorizeValue{4.49}{4.75}{#1} &
  \multicolumn{4}{c}{\colorizeValue{45}{46.3}{#2}} &
  \colorizeValue{54.8}{55.1}{#3}%
}

\newcommand{\colormthhhhsA}[3]{%
  \colorizeValue{4.49}{4.75}{#1} &
  \multicolumn{4}{c}{\colorizeValue{45}{48}{#2}} &
  \colorizeValue{54.8}{55.7}{#3}%
}

\newcommand{\colormthhhhsB}[6]{%
  \colorizeValue{4.49}{4.75}{#1} &
  \colorizeValue{41.4}{44.8}{#2} &
  \colorizeValue{39}{44.1}{#3} &
  \colorizeValue{44}{46}{#4} &
  \colorizeValue{53}{56}{#5} &
  \colorizeValue{54.8}{55.7}{#6}%
}

\newcommand{\CRbC}[5]{%
  \colorizeValue[4.75mm][0.8mm]{97}{100}{#1} &
  \colorizeValue[4.75mm][0.8mm]{90}{96}{#2} &
  \colorizeValue[4.75mm][0.8mm]{95}{99}{#3} &
  \colorizeValue[4.75mm][0.8mm]{92}{100}{#4} &
  \colorizeValue[4.75mm][0.8mm]{87}{95}{#5}%
}

\newcommand{\CRbCh}[6]{%
  \colorizeValue[4.75mm][0.8mm]{70}{79}{#1} &
  \colorizeValue[4.75mm][0.8mm]{79}{88}{#2} &
  \colorizeValue[4.75mm][0.8mm]{25}{35.1}{#3} &
  \colorizeValue[4.75mm][0.8mm]{14}{24}{#4} &
  \colorizeValue[4.75mm][0.8mm]{38}{58}{#5} &
  \colorizeValue[4.75mm][0.8mm]{20}{35}{#6}%
}

\newcommand{\CRbS}[5]{%
  \colorizeValue[4.75mm][0.8mm]{60}{80}{#1} &
  \colorizeValue[4.75mm][0.8mm]{97}{100}{#2} &
  \colorizeValue[4.75mm][0.8mm]{91}{97}{#3} &
  \colorizeValue[4.75mm][0.8mm]{74}{89.6}{#4} &
  \colorizeValue[4.75mm][0.8mm]{41}{53}{#5}%
}

\newcommand{\CRbR}[7]{%
  \colorizeValue[4.75mm][0.8mm]{47}{82}{#1} &
  \colorizeValue[4.75mm][0.8mm]{78}{86}{#2} &
  \colorizeValue[4.75mm][0.8mm]{78}{90}{#3} &
  \colorizeValue[4.75mm][0.8mm]{81}{90}{#4} &
  \colorizeValue[4.75mm][0.8mm]{78}{86}{#5} &
  \colorizeValue[4.75mm][0.8mm]{81}{89}{#6} &
  \colorizeValue[4.75mm][0.8mm]{79}{85}{#7}%
}

\newcommand{\CRbAvg}[1]{%
  \colorizeValue[4.75mm][0.8mm]{71}{75}{#1}%
}

\newcommand{\CRbMC}[5]{%
  \colorizeValue[4.75mm][0.8mm]{91}{100}{#1} &
  \colorizeValue[4.75mm][0.8mm]{76}{96}{#2} &
  \colorizeValue[4.75mm][0.8mm]{93}{100}{#3} &
  \colorizeValue[4.75mm][0.8mm]{89.3}{100}{#4} &
  \colorizeValue[4.75mm][0.8mm]{82}{97.5}{#5}%
}

\newcommand{\CRbMCh}[6]{%
  \colorizeValue[4.75mm][0.8mm]{56}{82}{#1} &
  \colorizeValue[4.75mm][0.8mm]{75}{91}{#2} &
  \colorizeValue[4.75mm][0.8mm]{28}{41}{#3} &
  \colorizeValue[4.75mm][0.8mm]{22.8}{47}{#4} &
  \colorizeValue[4.75mm][0.8mm]{42}{64}{#5} &
  \colorizeValue[4.75mm][0.8mm]{30}{48}{#6}%
}

\newcommand{\CRbMS}[5]{%
  \colorizeValue[4.75mm][0.8mm]{70}{87}{#1} &
  \colorizeValue[4.75mm][0.8mm]{90}{100}{#2} &
  \colorizeValue[4.75mm][0.8mm]{90.3}{97.4}{#3} &
  \colorizeValue[4.75mm][0.8mm]{73.6}{90.4}{#4} &
  \colorizeValue[4.75mm][0.8mm]{46.3}{59.6}{#5}%
}

\newcommand{\CRbMR}[7]{%
  \colorizeValue[4.75mm][0.8mm]{39.8}{57.9}{#1} &
  \colorizeValue[4.75mm][0.8mm]{51.8}{92.7}{#2} &
  \colorizeValue[4.75mm][0.8mm]{53}{93.3}{#3} &
  \colorizeValue[4.75mm][0.8mm]{52.4}{92.1}{#4} &
  \colorizeValue[4.75mm][0.8mm]{53.7}{92.7}{#5} &
  \colorizeValue[4.75mm][0.8mm]{51.8}{87.8}{#6} &
  \colorizeValue[4.75mm][0.8mm]{54.3}{89}{#7}%
}

\newcommand{\CRbMAvg}[1]{%
  \colorizeValue[4.75mm][0.8mm]{63.2}{79.3}{#1}%
}

\begin{table}[!t]
\centering
\caption{
    Evaluation of the reward models on the RewardBench \citep{lambert2024rewardbench} benchmark.
    Trained on a 400K item subset of the Unified-Feedback dataset \citep{jiang2024unifiedfeedback} for 2 epochs, with the TV distance.
    Base model used for training: \texttt{mistralai/Mistral-7B-Instruct-v0.2}.
}\label{tab:rb-uf-summary-mistral}
\vskip 0.15in
\begin{small}
\begin{tabular}{lC{8mm}C{8mm}C{8mm}C{8mm}C{8mm}}
    \toprule
    \multirow{1}{*}[-0.65em]{Reward Model} & \multirow{1}{*}[-0.65em]{Chat} & Chat-Hard & \multirow{1}{*}[-0.65em]{Safety} & \multirow{1}{*}[-0.65em]{\!\!\!\!Reasoning} & \multirow{1}{*}[-0.65em]{Avg.} \\
    \midrule \multicolumn{6}{c}{\small Base model: \texttt{mistralai/Mistral-7B-Instruct-v0.2}} \\ \midrule
    \btbase              & \colorrbM{97.5}{61.4}{86.8}{71.5}{79.3} \\
    \dr RM $\rho = 0.1$  & \colorrbM{97.5}{59.4}{87.4}{66.0}{77.6} \\
    \dr RM $\rho = 0.2$  & \colorrbM{97.5}{56.1}{86.9}{74.6}{78.8} \\
    \dr RM $\rho = 0.4$  & \colorrbM{90.8}{46.3}{83.4}{56.0}{69.1} \\
    \dr RM $\rho = 0.6$  & \colorrbM{88.3}{40.1}{80.7}{58.0}{66.8} \\
    \dr RM $\rho = 0.8$  & \colorrbM{84.1}{42.5}{76.9}{49.3}{63.2} \\
    \dr RM $\rho = 1.0$  & \colorrbM{83.0}{46.1}{76.6}{59.9}{66.4} \\
    \bottomrule
\end{tabular}
\end{small}
\vskip -0.1in
\end{table}

\begin{table}[htbp]
\centering
\caption{
    Evaluation of the distributionally robust reward models on the Reasoning task from the RewardBench \citep{lambert2024rewardbench} benchmark.
    Trained on a 400K item subset of the Unified-Feedback dataset \citep{jiang2024unifiedfeedback} for 2 epochs, with the TV distance.
    Base model used for training: \texttt{mistralai/Mistral-7B-Instruct-v0.2}.
    Subsets are detailed in \Cref{tab:rb-uf-subsets}.
}\label{tab:rb-uf-reasoning-mistral}
\vskip 0.15in
\begin{small}
\begin{tabular}{lC{4mm}C{4mm}C{4mm}C{4mm}C{4mm}C{4mm}C{4mm}}
    \toprule
    Reward Model & \multicolumn{7}{c}{Reasoning Subsets} \\
    \midrule \multicolumn{8}{c}{\small Base model: \texttt{mistralai/Mistral-7B-Instruct-v0.2}} \\ \midrule
    \btbase              & \colorreasoningM{51.0}{92.7}{93.3}{92.1}{92.7}{92.1}{89.0} \\
    \dr RM $\rho = 0.1$  & \colorreasoningM{39.8}{93.3}{95.1}{93.9}{92.7}{87.8}{90.2} \\
    \dr RM $\rho = 0.2$  & \colorreasoningM{57.9}{88.4}{93.9}{93.3}{90.9}{87.8}{92.7} \\
    \dr RM $\rho = 0.4$  & \colorreasoningM{52.3}{61.0}{54.9}{62.8}{62.2}{62.2}{54.3} \\
    \dr RM $\rho = 0.6$  & \colorreasoningM{63.8}{51.8}{53.0}{48.2}{57.9}{54.3}{48.2} \\
    \dr RM $\rho = 0.8$  & \colorreasoningM{43.4}{54.3}{56.1}{55.5}{50.0}{57.3}{57.9} \\
    \dr RM $\rho = 1.0$  & \colorreasoningM{64.7}{59.8}{54.9}{52.4}{53.7}{51.8}{57.9} \\
    \bottomrule
\end{tabular}
\end{small}
\vskip -0.1in
\end{table}

\begin{table}[!ht]
\centering
\caption{
    Evaluation of the reward models on RewardBench \citep{lambert2024rewardbench}.
    Trained on a \textbf{unfiltered} 400K item subset of the Unified-Feedback dataset \citep{jiang2024unifiedfeedback} for 2 epochs using \textbf{TV dist.}
}\label{tab:rb-uf-subsets}
\vskip 0.15in
\scriptsize
\resizebox{\linewidth}{!}{
\begin{tabular}{l|QQQQC{5mm}|QQQQQC{5mm}|QQQQC{5mm}|QQQQQQC{6mm}|Q}
    \toprule
    \multirow{2}{*}{Reward Model} & \multicolumn{5}{c|}{Chat} & \multicolumn{6}{c|}{Chat-Hard} & \multicolumn{5}{c|}{Safety} & \multicolumn{7}{c|}{Reasoning} & Avg. \\ 
    &1&2&3&4&5&6&7&8&9&10&11&12&13&14&15&16&17&18&19&20&21&22&23& \\
    \midrule \multicolumn{25}{c}{\small Base model: \texttt{google/gemma-2b-it}} \\ \midrule
    \btbase                 & \CRbC{97.0}{93.7}{97.9}{100}{90.0}  & \CRbCh{73.0}{83.0}{32.1}{21.7}{46.8}{26.1} & \CRbS{78.0}{99.0}{95.5}{81.6}{50.0} & \CRbR{47.7}{82.3}{83.5}{86.6}{79.9}{81.1}{83.5} & \CRbAvg{71.7} \\
    \dr $\rho = 0.1$ & \CRbC{99.0}{91.6}{97.9}{100}{95.0}  & \CRbCh{75.7}{83.0}{34.3}{20.7}{42.6}{21.7} & \CRbS{79.0}{99.0}{95.5}{82.0}{50.7} & \CRbR{55.5}{78.7}{83.5}{83.5}{78.7}{84.1}{79.3} & \CRbAvg{72.7} \\
    \dr $\rho = 0.2$ & \CRbC{99.0}{91.6}{97.9}{100}{92.5}  & \CRbCh{75.7}{81.0}{31.3}{21.7}{40.4}{26.1} & \CRbS{78.0}{99.0}{96.1}{79.2}{51.5} & \CRbR{61.7}{83.5}{83.5}{85.4}{81.7}{86.0}{83.5} & \CRbAvg{73.3} \\
    \dr $\rho = 0.4$ & \CRbC{99.0}{95.8}{97.9}{96.4}{90.0} & \CRbCh{70.3}{82.0}{30.6}{23.9}{44.7}{28.3} & \CRbS{74.0}{99.0}{96.8}{80.0}{52.9} & \CRbR{71.1}{84.8}{81.1}{83.5}{79.9}{81.1}{82.3} & \CRbAvg{74.6} \\
    \dr $\rho = 0.6$ & \CRbC{99.0}{93.7}{98.9}{100}{92.5}  & \CRbCh{73.0}{80.0}{26.9}{18.5}{40.4}{28.3} & \CRbS{71.0}{99.0}{95.5}{82.4}{47.1} & \CRbR{55.3}{81.1}{81.7}{83.5}{79.9}{86.6}{83.5} & \CRbAvg{71.8} \\
    \dr $\rho = 0.8$ & \CRbC{100}{94.7}{98.9}{100}{92.5}   & \CRbCh{73.0}{85.0}{30.6}{18.5}{46.8}{26.1} & \CRbS{61.0}{97.0}{94.2}{81.2}{44.1} & \CRbR{66.2}{85.4}{78.0}{81.7}{82.3}{84.8}{82.9} & \CRbAvg{73.3} \\
    \dr $\rho = 1.0$ & \CRbC{99.0}{94.7}{98.9}{92.9}{90.0} & \CRbCh{73.0}{79.0}{25.4}{16.3}{46.8}{26.1} & \CRbS{69.0}{99.0}{94.8}{81.6}{51.5} & \CRbR{70.9}{81.7}{78.7}{82.9}{80.5}{85.4}{81.7} & \CRbAvg{73.4} \\
    \midrule \multicolumn{25}{c}{\small Base model: \texttt{mistralai/Mistral-7B-Instruct-v0.2}} \\ \midrule
    \btbase & \CRbMC{98.0}{95.8}{98.9}{100}{95.0} & \CRbMCh{81.1}{91.0}{41.8}{50.0}{68.1}{54.3} & \CRbMS{86.0}{100}{96.8}{90.4}{59.6}                   & \CRbMR{51.0}{92.7}{93.3}{92.1}{92.7}{92.1}{89.0} & \CRbMAvg{79.3} \\
    \dr $\varepsilon = 0.1$ & \CRbMC{99.0}{93.7}{100}{100}{95.0} & \CRbMCh{81.1}{91.0}{41.0}{46.7}{63.8}{47.8} & \CRbMS{87.0}{98.0}{97.4}{91.6}{61.0}   & \CRbMR{39.8}{93.3}{95.1}{93.9}{92.7}{87.8}{90.2} & \CRbMAvg{77.6} \\
    \dr $\varepsilon = 0.2$ & \CRbMC{97.0}{94.7}{100}{100}{97.5} & \CRbMCh{83.8}{93.0}{35.1}{40.2}{57.4}{45.7} & \CRbMS{86.0}{99.0}{97.4}{90.8}{59.6}   & \CRbMR{57.9}{88.4}{93.9}{93.3}{90.9}{87.8}{92.7} & \CRbMAvg{78.8} \\
    \dr $\varepsilon = 0.4$ & \CRbMC{95.0}{77.9}{100}{100}{82.5} & \CRbMCh{54.1}{79.0}{32.8}{28.3}{53.2}{37.0} & \CRbMS{85.0}{100}{98.7}{81.6}{55.9}    & \CRbMR{52.3}{61.0}{54.9}{62.8}{62.2}{62.2}{54.3} & \CRbMAvg{69.1} \\
    \dr $\varepsilon = 0.6$ & \CRbMC{94.0}{76.8}{93.7}{89.3}{87.5} & \CRbMCh{56.8}{75.0}{28.4}{22.8}{29.8}{30.4} & \CRbMS{87.0}{99.0}{96.1}{73.6}{58.1} & \CRbMR{63.8}{51.8}{53.0}{48.2}{57.9}{54.3}{48.2} & \CRbMAvg{66.8} \\
    \dr $\varepsilon = 0.8$ & \CRbMC{90.0}{72.6}{91.6}{89.3}{75.0} & \CRbMCh{51.4}{67.0}{35.8}{22.8}{44.7}{39.1} & \CRbMS{73.0}{90.0}{90.3}{76.4}{55.9} & \CRbMR{43.4}{54.3}{56.1}{55.5}{50.0}{57.3}{57.9} & \CRbMAvg{63.2} \\
    \dr $\varepsilon = 1.0$ & \CRbMC{91.0}{74.7}{83.2}{89.3}{77.5} & \CRbMCh{59.5}{68.0}{39.6}{31.5}{42.6}{39.1} & \CRbMS{80.0}{94.0}{86.4}{78.8}{46.3} & \CRbMR{64.7}{59.8}{54.9}{52.4}{53.7}{51.8}{57.9} & \CRbMAvg{66.4} \\
    \bottomrule
\end{tabular}
}
\end{table}

\begin{table}[htb]
\centering
\caption{
    Evaluation of the reward models on RewardBench \citep{lambert2024rewardbench}.
    Trained on a 400K item subset \textbf{filtering out the \texttt{hh-rlhf} subset} of the Unified-Feedback dataset \citep{jiang2024unifiedfeedback} for 2 epochs using \textbf{TV dist.}
}\label{tab:rb-uf-subsets-no-hh-rlhf}
\scriptsize
\vskip 0.15in
\resizebox{\linewidth}{!}{
\begin{tabular}{l|QQQQC{5mm}|QQQQQC{5mm}|QQQQC{5mm}|QQQQQQC{6mm}|Q}
    \toprule
    \multirow{2}{*}{Reward Model} & \multicolumn{5}{c|}{Chat} & \multicolumn{6}{c|}{Chat-Hard} & \multicolumn{5}{c|}{Safety} & \multicolumn{7}{c|}{Reasoning} & Avg. \\ 
    &1&2&3&4&5&6&7&8&9&10&11&12&13&14&15&16&17&18&19&20&21&22&23& \\
    \midrule
    \btbase & \CRbC{98.0}{90.5}{98.9}{96.4}{90.0} & \CRbCh{73.0}{82.0}{31.3}{20.7}{57.4}{28.3} & \CRbS{77.0}{99.0}{95.5}{86.0}{51.5}            & \CRbR{56.6}{82.9}{88.4}{87.2}{81.1}{87.8}{82.9} & \CRbAvg{73.6} \\
    \dr $\rho = 0.02$ & \CRbC{97.0}{92.6}{96.8}{100}{90.0} & \CRbCh{70.3}{86.0}{33.6}{21.7}{55.3}{34.8} & \CRbS{76.0}{99.0}{94.8}{86.0}{47.8}   & \CRbR{53.5}{81.7}{86.0}{86.0}{86.0}{85.4}{81.1} & \CRbAvg{73.3} \\
    \dr $\rho = 0.1$ & \CRbC{98.0}{94.7}{98.9}{96.4}{92.5} & \CRbCh{73.0}{84.0}{30.6}{23.9}{48.9}{34.8} & \CRbS{75.0}{98.0}{94.8}{86.4}{48.5}   & \CRbR{55.7}{84.1}{89.6}{86.6}{83.5}{88.4}{81.1} & \CRbAvg{73.8} \\
    \dr $\rho = 0.2$ & \CRbC{98.0}{93.7}{98.9}{100}{90.0} & \CRbCh{73.0}{83.0}{32.8}{21.7}{53.2}{30.4} & \CRbS{75.0}{99.0}{93.5}{87.2}{50.0}    & \CRbR{55.9}{83.5}{86.0}{89.6}{80.5}{87.2}{79.3} & \CRbAvg{73.7} \\
    \dr $\rho = 0.4$ & \CRbC{98.0}{94.7}{97.9}{92.9}{90.0} & \CRbCh{70.3}{83.0}{30.6}{20.7}{42.6}{30.4} & \CRbS{76.0}{98.0}{92.2}{84.8}{45.6}   & \CRbR{50.3}{82.3}{87.8}{85.4}{82.9}{86.0}{79.3} & \CRbAvg{71.8} \\
    \dr $\rho = 0.6$ & \CRbC{99.0}{93.7}{97.9}{96.4}{90.0} & \CRbCh{70.3}{80.0}{29.1}{15.2}{36.2}{32.6} & \CRbS{70.0}{99.0}{94.8}{84.8}{45.6}   & \CRbR{71.8}{84.8}{85.4}{89.6}{84.1}{84.8}{79.9} & \CRbAvg{74.0} \\
    \dr $\rho = 0.8$ & \CRbC{99.0}{91.6}{98.9}{100}{92.5} & \CRbCh{73.0}{81.0}{31.3}{19.6}{44.7}{28.3} & \CRbS{78.0}{98.0}{94.8}{85.2}{44.9}    & \CRbR{50.6}{82.9}{89.6}{89.0}{86.0}{87.2}{79.3} & \CRbAvg{72.3} \\
    \dr $\rho = 1.0$ & \CRbC{98.0}{88.4}{95.8}{96.4}{90.0} & \CRbCh{56.8}{72.0}{31.3}{16.3}{42.6}{15.2} & \CRbS{71.0}{98.0}{94.2}{83.6}{50.7}   & \CRbR{63.5}{43.3}{44.5}{43.3}{42.7}{49.4}{44.5} & \CRbAvg{66.7} \\
    \bottomrule
\end{tabular}
}
\end{table}

\begin{table}[htb]
\centering
\caption{
    Evaluation of the reward models on RewardBench \citep{lambert2024rewardbench}.
    Trained on a 400K item subset \textbf{filtering out the \texttt{hh-rlhf} subset} of the Unified-Feedback dataset \citep{jiang2024unifiedfeedback} for 2 epochs using \textbf{$\chi^2$ dist.}
}\label{tab:rb-uf-subsets-chi2-no-hh-rlhf}
\scriptsize
\vskip 0.15in
\resizebox{\linewidth}{!}{
\begin{tabular}{l|QQQQC{5mm}|QQQQQC{5mm}|QQQQC{5mm}|QQQQQQC{6mm}|Q}
    \toprule
    \multirow{2}{*}{Reward Model} & \multicolumn{5}{c|}{Chat} & \multicolumn{6}{c|}{Chat-Hard} & \multicolumn{5}{c|}{Safety} & \multicolumn{7}{c|}{Reasoning} & Avg. \\ 
    &1&2&3&4&5&6&7&8&9&10&11&12&13&14&15&16&17&18&19&20&21&22&23& \\
    \midrule
    \btbase & \CRbC{96.0}{92.6}{96.8}{100}{87.5} & \CRbCh{70.3}{81.0}{33.6}{19.6}{48.9}{30.4} & \CRbS{75.0}{99.0}{93.5}{87.2}{44.9} & \CRbR{57.0}{84.1}{83.5}{87.2}{83.5}{88.4}{82.3} & \CRbAvg{73.0} \\
    \dr $\rho = 0.05$ & \CRbC{99.0}{93.7}{97.9}{96.4}{90.0} & \CRbCh{70.3}{83.0}{32.8}{20.7}{46.8}{30.4} & \CRbS{77.0}{99.0}{93.5}{88.0}{47.8} & \CRbR{61.7}{81.7}{86.0}{82.3}{81.1}{89.6}{82.3} & \CRbAvg{74.1} \\
    \dr $\rho = 0.1$ & \CRbC{97.0}{92.6}{97.9}{100}{92.5} & \CRbCh{73.0}{82.0}{32.8}{22.8}{46.8}{30.4} & \CRbS{76.0}{98.0}{93.5}{85.6}{45.6} & \CRbR{58.8}{81.7}{84.8}{84.8}{84.1}{86.6}{81.7} & \CRbAvg{73.4} \\
    \dr $\rho = 0.2$ & \CRbC{98.0}{94.7}{98.9}{100}{90.0} & \CRbCh{70.3}{88.0}{35.1}{22.8}{44.7}{30.4} & \CRbS{73.0}{100}{95.5}{86.4}{47.1} & \CRbR{61.1}{82.9}{89.0}{87.2}{83.5}{85.4}{79.9} & \CRbAvg{74.5} \\
    \dr $\rho = 0.3$ & \CRbC{98.0}{90.5}{98.9}{96.4}{87.5} & \CRbCh{59.5}{76.0}{24.6}{13.0}{53.2}{23.9} & \CRbS{76.0}{96.0}{89.6}{79.2}{46.3} & \CRbR{61.1}{54.3}{56.1}{57.3}{59.1}{59.8}{48.8} & \CRbAvg{67.5} \\
    \dr $\rho = 0.4$ & \CRbC{95.0}{86.3}{91.6}{92.9}{62.5} & \CRbCh{29.7}{53.0}{14.9}{8.7}{40.4}{13.0} & \CRbS{50.0}{94.0}{88.3}{76.4}{29.4} & \CRbR{64.7}{18.3}{23.8}{20.1}{21.3}{26.2}{21.3} & \CRbAvg{56.5} \\
    \dr $\rho = 0.5$ & \CRbC{97.0}{84.2}{93.7}{92.9}{80.0} & \CRbCh{54.1}{66.0}{17.2}{12.0}{29.8}{21.7} & \CRbS{63.0}{96.0}{90.9}{72.8}{39.0} & \CRbR{55.5}{23.8}{28.0}{31.1}{33.5}{31.1}{36.0} & \CRbAvg{59.3} \\
    \bottomrule
\end{tabular}
}
\end{table}

\begin{table}[htb]
\centering
\caption{
    Evaluation of the reward models on RewardBench \citep{lambert2024rewardbench}.
    Trained on a 400K item subset \textbf{filtering out the \texttt{chatbot\_arena\_conversations} subset} of the Unified-Feedback dataset \citep{jiang2024unifiedfeedback} for 2 epochs using \textbf{TV dist.}
}\label{tab:rb-uf-subsets-tv-no-chatbot-arena-conversations}
\scriptsize
\vskip 0.15in
\resizebox{\linewidth}{!}{
\begin{tabular}{l|QQQQC{5mm}|QQQQQC{5mm}|QQQQC{5mm}|QQQQQQC{6mm}|Q}
    \toprule
    \multirow{2}{*}{Reward Model} & \multicolumn{5}{c|}{Chat} & \multicolumn{6}{c|}{Chat-Hard} & \multicolumn{5}{c|}{Safety} & \multicolumn{7}{c|}{Reasoning} & Avg. \\ 
    &1&2&3&4&5&6&7&8&9&10&11&12&13&14&15&16&17&18&19&20&21&22&23& \\
    \midrule
    \btbase & \CRbC{99.0}{94.7}{97.9}{96.4}{92.5} & \CRbCh{67.6}{84.0}{31.3}{18.5}{42.6}{23.9} & \CRbS{76.0}{99.0}{95.5}{80.0}{53.7} & \CRbR{55.0}{84.8}{84.8}{82.3}{87.8}{83.5}{81.7} & \CRbAvg{72.5} \\
    \dr $\rho = 0.02$ & \CRbC{98.0}{93.7}{97.9}{100}{92.5} & \CRbCh{64.9}{80.0}{31.3}{20.7}{42.6}{21.7} & \CRbS{75.0}{99.0}{94.8}{82.0}{50.0} & \CRbR{54.8}{84.8}{83.5}{84.8}{85.4}{87.2}{82.9} & \CRbAvg{72.3} \\
    \dr $\rho = 0.1$ & \CRbC{98.0}{93.7}{97.9}{100}{90.0} & \CRbCh{70.3}{81.0}{32.8}{21.7}{42.6}{26.1} & \CRbS{76.0}{99.0}{95.5}{82.4}{51.5} & \CRbR{57.7}{82.3}{81.7}{84.8}{84.8}{86.0}{81.1} & \CRbAvg{73.0} \\
    \dr $\rho = 0.2$ & \CRbC{100}{95.8}{97.9}{100}{92.5} & \CRbCh{67.6}{81.0}{29.9}{16.3}{46.8}{26.1} & \CRbS{78.0}{100}{96.8}{82.4}{51.5} & \CRbR{53.9}{81.7}{82.9}{84.1}{79.3}{82.9}{78.0} & \CRbAvg{72.4} \\
    \dr $\rho = 0.4$ & \CRbC{99.0}{87.4}{98.9}{92.9}{85.0} & \CRbCh{62.2}{78.0}{23.9}{17.4}{40.4}{26.1} & \CRbS{70.0}{99.0}{96.1}{79.2}{52.2} & \CRbR{68.2}{59.1}{56.7}{58.5}{65.9}{61.6}{61.0} & \CRbAvg{69.2} \\
    \dr $\rho = 0.6$ & \CRbC{89.0}{75.8}{92.6}{85.7}{67.5} & \CRbCh{37.8}{50.0}{15.7}{13.0}{27.7}{8.7} & \CRbS{57.0}{98.0}{90.9}{66.0}{36.0} & \CRbR{36.5}{20.7}{20.7}{23.2}{25.6}{23.2}{23.8} & \CRbAvg{51.8} \\
    \dr $\rho = 0.8$ & \CRbC{89.0}{76.8}{86.3}{82.1}{77.5} & \CRbCh{48.6}{56.0}{36.6}{13.0}{25.5}{32.6} & \CRbS{79.0}{99.0}{87.0}{70.0}{44.9} & \CRbR{67.6}{37.8}{45.1}{41.5}{47.0}{42.7}{39.0} & \CRbAvg{61.9} \\
    \dr $\rho = 1.0$ & \CRbC{86.0}{73.7}{82.1}{78.6}{67.5} & \CRbCh{43.2}{53.0}{29.1}{21.7}{44.7}{21.7} & \CRbS{70.0}{91.0}{90.3}{80.0}{55.1} & \CRbR{65.1}{38.4}{45.7}{48.8}{52.4}{46.3}{48.2} & \CRbAvg{61.9} \\
    \bottomrule
\end{tabular}
}
\end{table}
\subsection{PPO}

We perform minimal hyperparameter search for selecting the best learning rate and KL coefficient.
The effect of KL coefficient $\beta$ can be seen in \Cref{fig:choice-kl}.
We chose $\beta=0.02$ and $\beta=0.03$ as learning was displaying some improvement on the gold reward model.
Further, we tune the learning rate $\eta$, this can be seen in \Cref{fig:choice-lr}.
We selected $\eta=1e-5$ which displayed some learning in the initial training steps.

For other hyperparameters we use the values similar to the ones used for the reward models, see \Cref{tab:ppo_hyperpars}.
\begin{table}[!h]
    \centering
    \caption{Hyperparameters used for DPO \& PPO experiments.}
    \vskip 0.15in
    \begin{tabular}{lc}
        \toprule
        Hyperparameter & Value \\
        \midrule
        Base model & \texttt{google/gemma-2b-it} \\
        Number of epochs & 1 \\
        Learning rate $\eta$ & 1e-5 \\
        KL coefficient $\beta$ & 0.02 \\
        Torch dtype & \texttt{bfloat16} \\
        Attention implementation & \texttt{flash\_attention\_2} \\
        Per device train batch size & 8 \\
        Learning rate scheduler type & cosine \\
        Warmup ratio & 0.03 \\
        Weight decay & 0 \\
        Optimizer & \texttt{adamw\_hf} \\
        Maximum sequence length & 1024 \\
        LoRA $r$ & 32 \\
        LoRA alpha & 64 \\
        LoRA dropout & 0.05 \\
        \midrule
        $\lambda$ (lambda, for GAE) & 0.95 \\
        Discount factor $\gamma$ (gamma) & 1 \\
        Clip range (policy) & 0.2 \\
        Clip range value & 0.2 \\
        Local rollout forward batch size & 4 \\
        Missing EOS penalty & 1 \\
        \bottomrule
    \end{tabular}
    \label{tab:ppo_hyperpars}
\end{table}

\renewcommand{\CRbC}[5]{%
  \colorizeValue[4.75mm][0.8mm]{92}{94}{#1} &
  \colorizeValue[4.75mm][0.8mm]{60}{75}{#2} &
  \colorizeValue[4.75mm][0.8mm]{95}{99}{#3} &
  \colorizeValue[4.75mm][0.8mm]{32}{61}{#4} &
  \colorizeValue[4.75mm][0.8mm]{50}{53}{#5}%
}

\renewcommand{\CRbCh}[6]{%
  \colorizeValue[4.75mm][0.8mm]{55}{63}{#1} &
  \colorizeValue[4.75mm][0.8mm]{53}{60}{#2} &
  \colorizeValue[4.75mm][0.8mm]{14}{18}{#3} &
  \colorizeValue[4.75mm][0.8mm]{11}{18}{#4} &
  \colorizeValue[4.75mm][0.8mm]{38}{49}{#5} &
  \colorizeValue[4.75mm][0.8mm]{17}{26.1}{#6}%
}

\renewcommand{\CRbS}[5]{%
  \colorizeValue[4.75mm][0.8mm]{2}{7}{#1} &
  \colorizeValue[4.75mm][0.8mm]{31}{38}{#2} &
  \colorizeValue[4.75mm][0.8mm]{26}{32}{#3} &
  \colorizeValue[4.75mm][0.8mm]{72}{78}{#4} &
  \colorizeValue[4.75mm][0.8mm]{16}{20}{#5}%
}

\renewcommand{\CRbR}[7]{%
  \colorizeValue[4.75mm][0.8mm]{15}{45}{#1} &
  \colorizeValue[4.75mm][0.8mm]{66}{82}{#2} &
  \colorizeValue[4.75mm][0.8mm]{63.4}{81.7}{#3} &
  \colorizeValue[4.75mm][0.8mm]{63}{85}{#4} &
  \colorizeValue[4.75mm][0.8mm]{67}{80}{#5} &
  \colorizeValue[4.75mm][0.8mm]{66.5}{80.5}{#6} &
  \colorizeValue[4.75mm][0.8mm]{68}{80}{#7}%
}

\renewcommand{\CRbAvg}[1]{%
  \colorizeValue[4.75mm][0.8mm]{48}{52.3}{#1}%
}

\newcommand{\colorppoevApp}[3]{%
  \colorizeValue[7.75mm][0.8mm]{59.3}{61.6}{#1} &
  \colorizeValue[7.75mm][0.8mm]{64}{69}{#2} &
  \colorizeValue[7.75mm][0.8mm]{58.8}{61.8}{#3}%
}

\begin{table}[htbp]
\centering
\caption{
    Evaluation of the PPO trained policy on Unified-Feedback \citep[ID,][]{jiang2024unifiedfeedback}, HHH Alignment \citep[OOD,][]{askell2021hhhalignment}, and MT Bench \citep[OOD,][]{zheng2023mtbench} datasets.
    Trained on a 5K item subset of the Unified-Feedback dataset.
    \textit{Version A} and \textit{version B} indicate different versions of DR PPO described in the text.
}\label{tab:eval-ppo-vavb}
\vskip 0.15in
\begin{small}
\begin{tabular}{lC{16mm}C{16mm}C{16mm}}
    \toprule
    \multirow{1}{*}[-0.65em]{Model} & Unified-Feedback & HHH Alignment & \multirow{1}{*}[-0.65em]{MT Bench} \\
    \midrule \multicolumn{4}{c}{\small Base model: \texttt{google/gemma-2b-it}} \\ \midrule
    \btbase, Std.\ PPO          & \colorppoevApp{59.3}{66.2}{59.8} \\
    \midrule
    \dr PPO (version A) $\rho = 0.1$   & \colorppoevApp{59.4}{68.1}{64.7} \\
    \dr PPO (version A) $\rho = 0.2$   & \colorppoevApp{59.8}{67.1}{59.8} \\
    \dr PPO (version A) $\rho = 0.4$   & \colorppoevApp{60.3}{68.1}{60.4} \\
    \dr PPO (version A) $\rho = 0.6$   & \colorppoevApp{61.1}{69.0}{59.8} \\
    \dr PPO (version A) $\rho = 0.8$   & \colorppoevApp{61.4}{68.5}{61.0} \\
    \dr PPO (version A) $\rho = 1.0$   & \colorppoevApp{59.8}{66.7}{60.0} \\
    \midrule
    \dr PPO (version B) $\rho = 0.1$   & \colorppoevApp{59.9}{66.2}{58.8} \\
    \dr PPO (version B) $\rho = 0.2$   & \colorppoevApp{60.3}{68.1}{59.6} \\
    \dr PPO (version B) $\rho = 0.4$   & \colorppoevApp{60.2}{66.7}{59.5} \\
    \dr PPO (version B) $\rho = 0.6$   & \colorppoevApp{61.0}{67.1}{61.8} \\
    \dr PPO (version B) $\rho = 0.8$   & \colorppoevApp{57.6}{67.6}{51.9} \\
    \dr PPO (version B) $\rho = 1.0$   & \colorppoevApp{60.8}{66.2}{60.2} \\
    \midrule \multicolumn{4}{c}{\small Base model: \texttt{mistralai/Mistral-7B-Instruct-v0.2}} \\ \midrule
    \btbase, Std.\ PPO          & \colorppoevM{36.8}{42.1}{31.3} \\
    \dr PPO (version A) $\rho = 0.1$   & \colorppoevM{48.4}{57.4}{57.8} \\
    \dr PPO (version A) $\rho = 0.2$   & \colorppoevM{49.6}{52.3}{57.0} \\
    \dr PPO (version A) $\rho = 0.4$   & \colorppoevM{42.8}{37.0}{44.3} \\
    \dr PPO (version A) $\rho = 0.6$   & \colorppoevM{37.3}{34.3}{28.1} \\
    \dr PPO (version A) $\rho = 0.8$   & \colorppoevM{47.8}{51.9}{54.0} \\
    \dr PPO (version A) $\rho = 1.0$   & \colorppoevM{37.5}{44.0}{32.9} \\
    \bottomrule
\end{tabular}
\end{small}
\end{table}

\begin{table}[!ht]
\centering
\caption{
    Evaluation of PPO on RewardBench \citep{lambert2024rewardbench}.
    Trained on a 5K item subset of the Unified-Feedback dataset \citep{jiang2024unifiedfeedback} using TV dist.
}\label{tab:rb-uf-ppo-subsets}
\vskip 0.15in
\scriptsize
\resizebox{\linewidth}{!}{
\begin{tabular}{l|QQQQC{5mm}|QQQQQC{5mm}|QQQQC{5mm}|QQQQQQC{6mm}|Q}
    \toprule
    \multirow{2}{*}{Model} & \multicolumn{5}{c|}{Chat} & \multicolumn{6}{c|}{Chat-Hard} & \multicolumn{5}{c|}{Safety} & \multicolumn{7}{c|}{Reasoning} & Avg. \\ 
    &1&2&3&4&5&6&7&8&9&10&11&12&13&14&15&16&17&18&19&20&21&22&23& \\
    \midrule
    Std.\ PPO           & \CRbC{94.0}{65.3}{98.9}{35.7}{47.5} & \CRbCh{59.5}{54.0}{14.9}{14.1}{48.9}{21.7} & \CRbS{6.0}{34.0}{30.5}{73.6}{18.4} & \CRbR{14.1}{81.1}{77.4}{77.4}{75.0}{77.4}{77.4} & \CRbAvg{48.7} \\
    $\rho = 0.02$& \CRbC{94.0}{67.4}{97.9}{35.7}{50.0} & \CRbCh{56.8}{58.0}{17.2}{13.0}{46.8}{17.4} & \CRbS{7.0}{37.0}{31.8}{72.4}{17.6} & \CRbR{15.9}{76.8}{78.0}{81.7}{76.8}{76.8}{75.6} & \CRbAvg{49.3} \\
    $\rho = 0.1$ & \CRbC{93.0}{74.7}{97.9}{60.7}{50.0} & \CRbCh{56.8}{60.0}{14.9}{17.4}{44.7}{17.4} & \CRbS{3.0}{36.0}{26.0}{74.8}{18.4} & \CRbR{38.3}{73.8}{73.2}{72.0}{70.7}{75.0}{76.2} & \CRbAvg{52.3} \\
    $\rho = 0.2$ & \CRbC{94.0}{67.4}{98.9}{39.3}{52.5} & \CRbCh{62.2}{58.0}{17.2}{12.0}{46.8}{19.6} & \CRbS{6.0}{36.0}{30.5}{72.4}{19.1} & \CRbR{17.4}{81.1}{81.7}{81.1}{78.7}{80.5}{76.8} & \CRbAvg{50.0} \\
    $\rho = 0.4$ & \CRbC{94.0}{67.4}{98.9}{39.3}{52.5} & \CRbCh{59.5}{53.0}{17.9}{17.4}{46.8}{21.7} & \CRbS{5.0}{34.0}{29.2}{72.4}{17.6} & \CRbR{15.2}{81.7}{81.7}{84.1}{77.4}{77.4}{79.3} & \CRbAvg{49.6} \\
    $\rho = 0.6$ & \CRbC{94.0}{68.4}{97.9}{32.1}{50.0} & \CRbCh{56.8}{55.0}{15.7}{15.2}{38.3}{19.6} & \CRbS{2.0}{31.0}{26.6}{75.6}{16.2} & \CRbR{16.1}{78.7}{80.5}{81.1}{79.9}{79.3}{78.7} & \CRbAvg{48.8} \\
    $\rho = 0.8$ & \CRbC{93.0}{65.3}{97.9}{46.4}{50.0} & \CRbCh{59.5}{54.0}{14.2}{14.1}{40.4}{17.4} & \CRbS{4.0}{38.0}{28.6}{74.4}{17.6} & \CRbR{32.2}{67.1}{63.4}{72.6}{67.7}{70.7}{70.7} & \CRbAvg{49.6} \\
    $\rho = 1.0$ & \CRbC{84.0}{60.0}{93.7}{42.9}{52.5} & \CRbCh{59.5}{54.0}{13.4}{14.1}{40.4}{26.1} & \CRbS{4.0}{34.0}{31.8}{77.2}{18.4} & \CRbR{44.5}{66.5}{67.7}{67.1}{67.1}{66.5}{68.3} & \CRbAvg{50.2} \\
    \bottomrule
\end{tabular}
}
\end{table}

\subsection{DPO}

For training DPO we use the same hyperparameters as shown in \Cref{tab:ppo_hyperpars}, except using the KL coefficient $\beta=0.1$.

\newcommand{\colordpoevM}[3]{%
  \colorizeValue[5.75mm][0.8mm]{69.5}{72.2}{#1} &
  \colorizeValue[5.75mm][0.8mm]{70.8}{79.6}{#2} &
  \colorizeValue[5.75mm][0.8mm]{67.7}{71.9}{#3}%
}

\begin{table}[!htbp]
\centering
\caption{
    Evaluation of the DPO trained policy on Unified-Feedback \citep[ID,][]{jiang2024unifiedfeedback}, HHH Alignment \citep[OOD,][]{askell2021hhhalignment}, and MT Bench \citep[OOD,][]{zheng2023mtbench} datasets.
    Trained on a 400K item subset of the Unified-Feedback dataset using TV dist.
    Base model used for training: \texttt{mistralai/Mistral-7B-Instruct-v0.2}.
}\label{tab:eval-dpo-mistral}
\vskip 0.15in
\begin{small}
\begin{tabular}{lC{16mm}C{16mm}C{16mm}}
    \toprule
    \multirow{1}{*}[-0.65em]{Model} & Unified-Feedback & HHH Alignment & \multirow{1}{*}[-0.65em]{MT Bench} \\
    \midrule \multicolumn{4}{c}{\small Base model: \texttt{mistralai/Mistral-7B-Instruct-v0.2}} \\ \midrule
    Std.\ DPO             & \colordpoevM{69.5}{75.5}{67.7} \\
    DR DPO $\rho = 0.1$   & \colordpoevM{69.7}{77.3}{67.9} \\
    DR DPO $\rho = 0.2$   & \colordpoevM{70.1}{78.7}{69.4} \\
    DR DPO $\rho = 0.4$   & \colordpoevM{71.5}{80.6}{72.1} \\
    DR DPO $\rho = 0.6$   & \colordpoevM{72.2}{80.1}{71.9} \\
    DR DPO $\rho = 0.8$   & \colordpoevM{72.1}{79.6}{73.1} \\
    DR DPO $\rho = 1.0$   & \colordpoevM{69.6}{70.8}{73.2} \\
    \bottomrule
\end{tabular}
\end{small}
\vskip -0.1in
\end{table}

\begin{table}[!htbp]
\centering
\caption{
    Evaluation of DPO on RewardBench \citep{lambert2024rewardbench}.
    Trained on a 20K item subset of the Unified-Feedback dataset \citep{jiang2024unifiedfeedback} using TV dist.
}\label{tab:rb-uf-dpo-subsets}
\vskip 0.15in
\scriptsize
\resizebox{\linewidth}{!}{
\begin{tabular}{l|QQQQC{5mm}|QQQQQC{5mm}|QQQQC{5mm}|QQQQQQC{6mm}|Q}
    \toprule
    \multirow{2}{*}{Model} & \multicolumn{5}{c|}{Chat} & \multicolumn{6}{c|}{Chat-Hard} & \multicolumn{5}{c|}{Safety} & \multicolumn{7}{c|}{Reasoning} & Avg. \\ 
    &1&2&3&4&5&6&7&8&9&10&11&12&13&14&15&16&17&18&19&20&21&22&23& \\
    \midrule
    Std.\ DPO & \CRbC{96.0}{68.4}{98.9}{35.7}{52.5} & \CRbCh{59.5}{60.0}{16.4}{14.1}{48.9}{19.6} & \CRbS{6.0}{36.0}{33.1}{72.8}{19.1} & \CRbR{13.2}{82.3}{83.5}{86.6}{82.3}{81.7}{80.5} & \CRbAvg{50.3} \\ 
    $\rho = 0.02$ & \CRbC{95.0}{68.4}{98.9}{35.7}{52.5} & \CRbCh{59.5}{61.0}{16.4}{14.1}{48.9}{19.6} & \CRbS{6.0}{36.0}{33.1}{73.2}{19.1} & \CRbR{13.2}{81.7}{84.8}{84.1}{79.9}{82.3}{79.9} & \CRbAvg{50.2} \\ 
    $\rho = 0.1$ & \CRbC{96.0}{68.4}{98.9}{39.3}{52.5} & \CRbCh{59.5}{61.0}{14.9}{14.1}{48.9}{19.6} & \CRbS{6.0}{37.0}{33.8}{73.2}{19.1} & \CRbR{13.6}{79.3}{86.6}{82.9}{80.5}{82.3}{80.5} & \CRbAvg{50.4} \\ 
    $\rho = 0.2$ & \CRbC{96.0}{70.5}{98.9}{42.9}{52.5} & \CRbCh{59.5}{61.0}{14.9}{14.1}{51.1}{19.6} & \CRbS{6.0}{38.0}{35.1}{73.6}{19.1} & \CRbR{13.6}{82.3}{85.4}{85.4}{84.1}{81.7}{79.9} & \CRbAvg{50.9} \\ 
    $\rho = 0.4$ & \CRbC{97.0}{64.2}{100.0}{42.9}{52.5} & \CRbCh{56.8}{63.0}{12.7}{10.9}{51.1}{17.4} & \CRbS{6.0}{35.0}{37.7}{73.2}{23.5} & \CRbR{13.4}{70.1}{75.6}{70.7}{68.9}{70.1}{72.6} & \CRbAvg{49.0} \\ 
    $\rho = 0.6$ & \CRbC{96.0}{66.3}{100.0}{46.4}{55.0} & \CRbCh{56.8}{61.0}{14.2}{13.0}{44.7}{17.4} & \CRbS{6.0}{40.0}{35.7}{72.8}{21.3} & \CRbR{18.3}{73.2}{72.0}{70.1}{69.5}{73.8}{73.2} & \CRbAvg{49.8} \\ 
    $\rho = 0.8$ & \CRbC{97.0}{72.6}{100.0}{57.1}{62.5} & \CRbCh{62.2}{60.0}{16.4}{14.1}{36.2}{19.6} & \CRbS{5.0}{40.0}{39.0}{74.0}{20.6} & \CRbR{31.3}{68.9}{68.9}{70.1}{65.2}{63.4}{66.5} & \CRbAvg{52.0} \\ 
    $\rho = 1.0$ & \CRbC{91.0}{72.6}{95.8}{53.6}{62.5} & \CRbCh{56.8}{55.0}{26.9}{13.0}{34.0}{28.3} & \CRbS{4.0}{48.0}{50.6}{77.6}{15.4} & \CRbR{52.1}{81.7}{81.7}{82.9}{81.7}{83.5}{75.6} & \CRbAvg{57.0} \\ 
    \bottomrule
\end{tabular}
}
\end{table}

\end{document}